\documentclass{article} 
\usepackage[final]{colm2026_conference}

\usepackage{microtype}
\usepackage{hyperref}
\usepackage{url}
\usepackage{booktabs}
\usepackage{subcaption}
\usepackage{tabularx}
\usepackage{array}
\usepackage{multirow} 

\usepackage{lineno}

\definecolor{darkblue}{rgb}{0, 0, 0.5}
\hypersetup{colorlinks=true, citecolor=darkblue, linkcolor=darkblue, urlcolor=darkblue}

\usepackage{graphicx} 

\usepackage{amsmath}

\DeclareMathOperator*{\argmax}{arg\,max}

\usepackage{algorithm}
\usepackage{algpseudocode}

\usepackage{colortbl}
\definecolor{bestcell}{RGB}{226,241,230}
\definecolor{modelrow}{RGB}{244,246,248}

\usepackage{multirow}

\definecolor{realcell}{RGB}{226,237,248}
\definecolor{shufcell}{RGB}{252,235,218}

\newcommand{\realv}[1]{\cellcolor{realcell}$#1$}
\newcommand{\shufv}[1]{\cellcolor{shufcell}$#1$}

\definecolor{basecell}{RGB}{238,242,247}
\definecolor{reasoncell}{RGB}{226,241,230}

\newcommand{\basev}[1]{\cellcolor{basecell}$#1$}
\newcommand{\reasonv}[1]{\cellcolor{reasoncell}\textbf{$#1$}}

\definecolor{adjcell}{RGB}{226,237,248}
\definecolor{rndcell}{RGB}{252,235,218}

\newcommand{\adjv}[1]{\cellcolor{adjcell}$#1$}
\newcommand{\rndv}[1]{\cellcolor{rndcell}$#1$}
\newcommand{\rndbest}[1]{\cellcolor{rndcell}\textbf{$#1$}}

\definecolor{prefixcell}{RGB}{226,241,230}
\definecolor{oraclecell}{RGB}{238,242,247}
\definecolor{negativecell}{RGB}{250,226,226}

\usepackage[table]{xcolor}

\definecolor{bestgreen}{RGB}{226,242,226}
\definecolor{uncertgray}{RGB}{115,115,115}

\newcommand{\val}[2]{%
    \ensuremath{#1\,{\color{uncertgray}{\scriptstyle #2}}}%
}
\newcommand{\best}[2]{%
    \cellcolor{bestgreen}%
    \ensuremath{\mathbf{#1}\,{\color{uncertgray}{\scriptstyle #2}}}%
}

\usepackage{booktabs}

\usepackage{amsthm}
\usepackage{amssymb}
\usepackage{enumitem}
\usepackage{url}

\newtheorem{theorem}{Theorem}[section]
\newtheorem{proposition}[theorem]{Proposition}
\newtheorem{lemma}[theorem]{Lemma}
\newtheorem{corollary}[theorem]{Corollary}
\newtheorem{assumption}{Assumption}
\theoremstyle{definition}

\theoremstyle{remark}
\newtheorem{remark}[theorem]{Remark}

\usepackage{booktabs}
\usepackage{CJKutf8}

\usepackage{wrapfig}
\newcommand{\TODO}[1]{$ $\newline\noindent\colorbox{yellow!30}{\parbox{\dimexpr\the\columnwidth-2\fboxsep}{\textbf{\texttt{TODO:}} \textit{#1}}}}

\title{Reasoning Fine-Tuning Induces Persistent Latent Policy States}
\author{
Abir Harrasse$^{1*}$, Michael Lan$^{1*}$, Hunar Batra$^{2}$, \\\textbf{Fateme Hashemi Chaleshtori}$^{3}$, \textbf{Chaithanya Bandi}$^{1}$ \\
$^{1}$Martian, $^{2}$University of Oxford, $^{3}$University of Utah \\
$^{*}$Primary contributors
}
\begin{document}

\ifcolmsubmission
\linenumbers
\fi

\maketitle

\begin{abstract}
Reasoning-specialized language models show large performance gains over
base models, yet the internal changes responsible for improved multi-step
reasoning remain poorly understood. It is unclear whether reasoning
fine-tuning improves local token-level competence or globally reorganizes
how models structure inference over time. We address this question by
modeling Chain-of-Thought reasoning as a switching dynamical system
(SDS), in which internal representations evolve under discrete latent
policy states. Our framework combines time-aware contrastive
representation learning with discrete regime discovery to recover latent
policies from activation trajectories. Across four benchmarks and model
scales from 1.5B to 32B parameters, reasoning-fine-tuned models exhibit
richer latent-policy organization than their base counterparts,
characterized by more differentiated transition structure and
model-dependent changes in state utilization, persistence, and mixing.
The recovered regimes exhibit functional specialization aligned with
distinct reasoning stages, and extensive controls confirm that their
structure is not explained by correctness, representation learning, or
modeling priors, but depends on the coherent temporal organization of
reasoning trajectories. Causal interventions further show that the regimes
are functionally meaningful: state-swap ablations reduce one-step
predictive fit, while transplanting reasoning dynamics into base models
improves performance on challenging reasoning problems. Finally,
SDS-guided pruning of failure-prone reasoning prefixes outperforms
self-consistency in 11 of 12 model--dataset settings, with gains of up to
12.5 percentage points. Together, our results suggest that reasoning
fine-tuning globally reorganizes latent dynamics, offering a new lens for
mechanistic analysis and process-level control of reasoning models.
Code: \url{https://github.com/withmartian/mi-cot}.
\end{abstract}

\section{Introduction}

Large language models exhibit striking improvements in Chain-of-Thought
(CoT) reasoning after specialized fine-tuning
\citep{deepseekai2025deepseekr1incentivizingreasoningcapability,
shao2024deepseekmathpushinglimitsmathematical,huang2026on}.
However, the internal changes responsible for these improvements remain
poorly understood
\citep{yue2025doesreinforcementlearningreally,wu2025on,
ward2025reasoningfinetuningrepurposeslatentrepresentations}.
In particular, it remains unclear whether reasoning fine-tuning primarily
improves local token-level competence or globally reorganizes the latent
computational dynamics that structure reasoning over extended horizons.
Establishing such structure is challenging because CoT reasoning is a
branching temporal process involving stochastic sampling and a distribution
of possible trajectories
\citep{macar2025thoughtbranchesinterpretingllm,
lee2025cotencyclopediaanalyzingpredicting}.

CoTs have therefore been modeled as dynamical systems that capture
long-range transitions between branching reasoning states
\citep{carson2025a}. This perspective suggests that reasoning behavior may
be governed by low-dimensional latent policy states that shape future
internal dynamics throughout inference.

\begin{figure}[t]
    \centering
    \begin{minipage}[t]{0.58\textwidth}
        \centering
        \includegraphics[width=\textwidth]{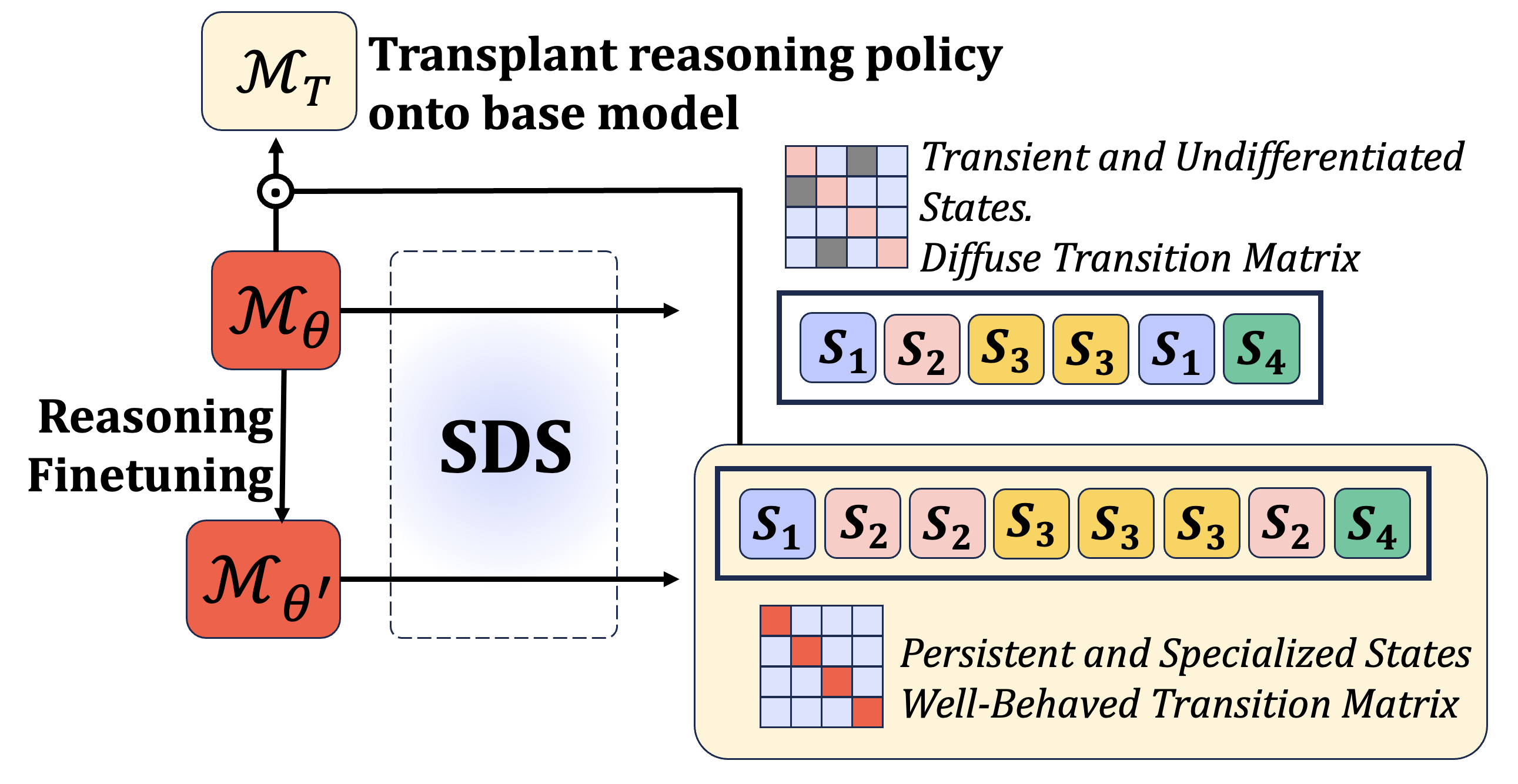}
    \end{minipage}
    \hspace{0.01\textwidth}%
    \begin{minipage}[t]{0.40\textwidth}
        \centering
        \includegraphics[width=\textwidth]{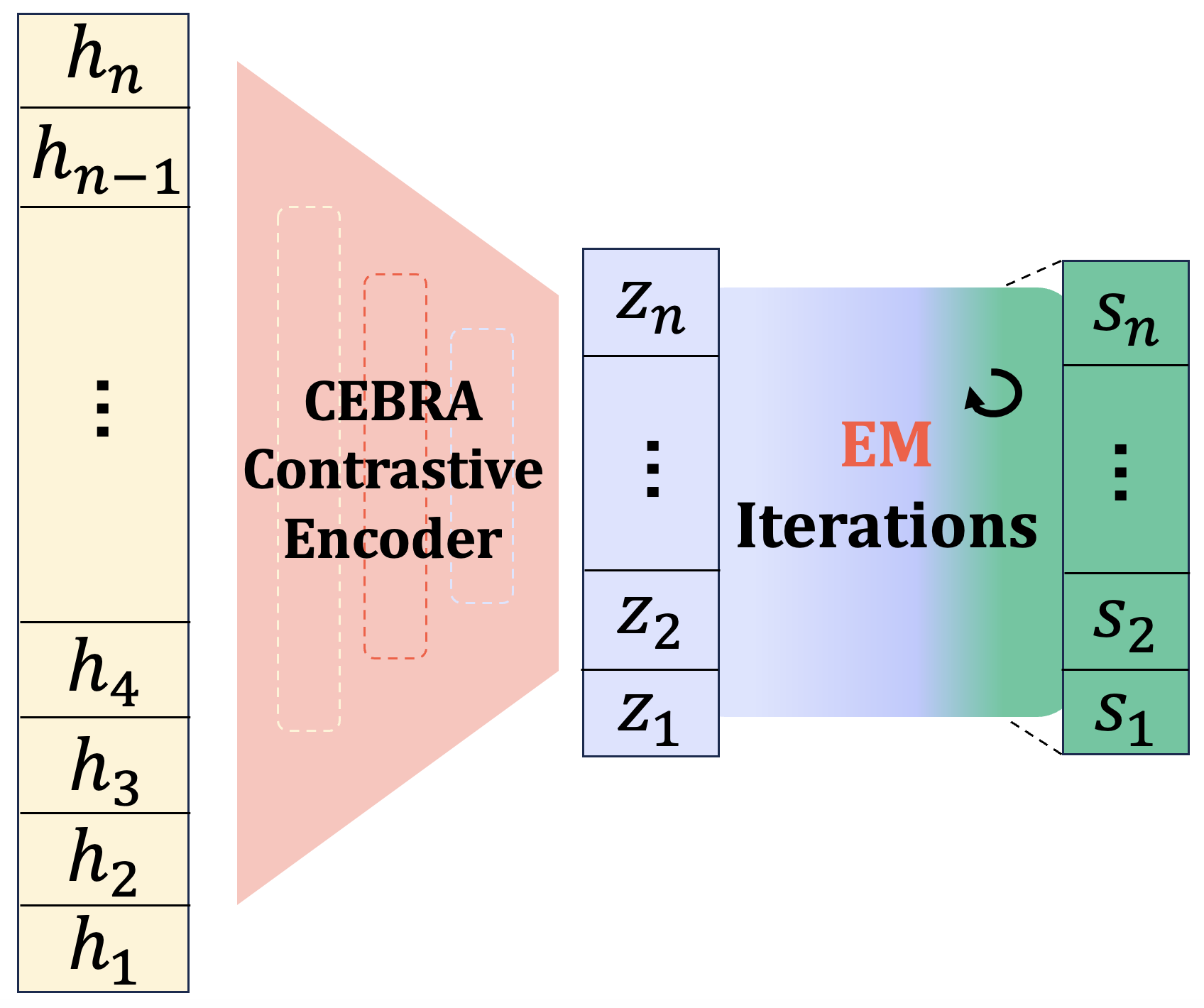}
    \end{minipage}
    \caption{
        \textbf{Left:} Overview of our framework. A base model
        $\mathcal{M}_\theta$ is fine-tuned for reasoning to produce
        $\mathcal{M}_{\theta'}$. We fit a Switching Dynamical System (SDS)
        to each model's activation trajectories and recover sequences of
        latent regimes. Reasoning fine-tuning reorganizes state utilization,
        persistence, and mixing while producing more differentiated
        transition dynamics. The recovered reasoning policy can be
        transplanted onto the base model $\mathcal{M}_T$ to test its causal
        and functional relevance.
        \textbf{Right:} Model activations $h_{1:n}$ are encoded into a
        low-dimensional representation $z_{1:n}$ using a CEBRA contrastive
        encoder. EM with forward--backward smoothing recovers the discrete
        latent regime sequence $s_{1:n}$, where
        $s_t\in\{1,\ldots,K\}$ represents the active reasoning-policy state
        at step $t$.
    }
    \label{fig:main}
\end{figure}

In this work, we compare fine-tuned reasoning models with their base
counterparts by modeling activation trajectories using \emph{Switching
Dynamical Systems} (SDS), which recover discrete, low-dimensional latent
policy states
\citep{1099359,ghahramani2000variational,pmlr-v54-linderman17a}.
We combine SDS with time-aware contrastive manifold learning
\citep{Schneider_2023} and find that reasoning fine-tuning induces richer
latent-policy organization, most consistently reflected in more
differentiated transition structure. State utilization, persistence, and
mixing also change, with their contributions varying across models and
datasets. The recovered states exhibit functional specialization aligned
with reasoning stages, while predictivity degrades when the state space
becomes over-fragmented.

Overall, reasoning fine-tuning appears to globally reorganize internal
dynamics, enabling models to maintain and transition between coherent
computational modes over extended spans. We call this temporal and
functional structure a \emph{latent reasoning policy}, without assuming
that it is localized to a particular model component.

Our contributions are:
\begin{enumerate}
    \item We formalize a dynamical-systems framework for uncovering
    differences in internal reasoning structure between base and
    reasoning-fine-tuned models.

    \item We provide empirical evidence that reasoning fine-tuning induces
    richer latent-policy organization, characterized by more differentiated
    transition structure and model-dependent changes in state utilization,
    persistence, and mixing.

    \item We introduce CEBRA for latent reasoning-state analysis and show
    that, in our setting, it provides a stronger balance of predictive fit
    and structured regime recovery than PCA.

    \item We show that the inferred latent reasoning policy is actionable:
    causal state-swap and policy-transplantation interventions establish
    the functional relevance and transferability of the recovered dynamics,
    while SDS-guided pruning of failure-prone reasoning prefixes
    (\textsc{PrefixGuard}) improves over self-consistency in 11 of 12
    model--dataset settings, with gains of up to 12.5 percentage points.
\end{enumerate}
\section{Related Work}

\textbf{Comparing Base and Reasoning-Fine-Tuned Models.}
Reasoning fine-tuning typically combines supervised fine-tuning,
distillation, and reinforcement learning with verifiable rewards (RLVR)
\citep{lambert2025tulu3pushingfrontiers,
deepseekai2025deepseekr1incentivizingreasoningcapability}.
A growing line of work argues that RLVR improves reasoning by reallocating
probability mass toward successful trajectories already supported by the
base model, rather than by creating entirely new reasoning procedures
\citep{yue2025doesreinforcementlearningreally,wu2025on,
nguyen2025reasoningboundaryparadoxreinforcement,zhu2025path}.
While RL-trained models typically outperform their base counterparts at
small sampling budgets, base models can sometimes match or exceed them at
large pass@$k$, suggesting that RLVR improves trajectory ranking and
sampling efficiency while narrowing exploration
\citep{yue2025doesreinforcementlearningreally,
nguyen2025reasoningboundaryparadoxreinforcement,
ward2025reasoningfinetuningrepurposeslatentrepresentations}.
\citet{wu2025on} characterize this effect through support and entropy,
while \citet{wang20258020rulehighentropyminority} show that RLVR acts
disproportionately on high-entropy tokens corresponding to critical forks
in reasoning trajectories \citep{huang2026on}. These findings concern how
post-training changes which reasoning chains are selected; we instead ask
how it changes the internal dynamics through which those chains unfold.

\textbf{Mechanistic Interpretability and Control of Reasoning.}
Previous mechanistic interpretability work suggests that reasoning
fine-tuning repurposes latent representations already present in base
models rather than constructing wholly new machinery
\citep{ward2025reasoningfinetuningrepurposeslatentrepresentations,
minder2026narrow,ward2025rank,muhamed2025towards,wang2025towards}.
\citet{venhoff2025basemodelsknowreason} further argue that base models
already contain many relevant reasoning behaviors and that post-training
primarily teaches the model when to deploy them. At a finer granularity,
thinking tokens and thought anchors identify decision points that
disproportionately shape subsequent reasoning
\citep{qian2025demystifying,bogdan2025thoughtanchorsllmreasoning}, while
steering-vector methods localize and manipulate individual reasoning
behaviors \citep{venhoff2025understanding}. Hidden-state clustering also
provides evidence that reasoning unfolds through discrete internal stages
\citep{liang2025clue}.

Related methods use such trajectory structure for inference-time
improvement. CTRLS models CoT reasoning as latent state transitions and
uses distributional reinforcement learning for state-aware exploration
\citep{wu2026ctrls}, while SEAL separates execution, reflection, and
transition thoughts in latent space and steers representations toward more
productive reasoning \citep{seal}. Unlike methods that model, rank, or
control trajectories within one model, we ask how reasoning fine-tuning
reorganizes latent dynamics relative to its base model. Our SDS formulation
addresses this through state utilization, persistence, transition
organization, and predictive dynamics, followed by causal policy
transplantation and process-level pruning of failure-prone reasoning
trajectories.

\textbf{Dynamical and Generative Models of Internal Representations.}
Our approach builds on theoretical work establishing conditions under
which switching dynamical systems and related Markov-switching models can
be recovered from observational trajectories
\citep{1099359,ghahramani2000variational,pmlr-v54-linderman17a}.
\citet{Balsells2024} provide conditions under which SDS parameters and
discrete regimes are identifiable from data. Complementarily,
\citet{luo2026learninggenerativemetamodelllm} train diffusion models on
residual-stream activations to capture the distribution of a network's
internal states and provide priors for more faithful interventions. We
combine an SDS with contrastive representation learning
\citep{Schneider_2023}, using CEBRA to obtain embeddings that empirically
support stronger regime recovery than PCA in our setting.

Concurrently, \citet{carson2025a} show that sentence-level hidden states
can be modeled using a low-dimensional switching linear dynamical system,
where regimes capture distinct drift and transition patterns in reasoning
traces. Whereas their PCA-based state space studies regimes within
individual models, we use CEBRA-derived embeddings to compare matched base
and reasoning-fine-tuned models and test whether the recovered differences
are causally and practically actionable.
\section{Reasoning as a Switching Dynamical System}


\subsection{Problem Setup}
We study whether reasoning in LLMs can be described by persistent, low-dimensional 
latent policy states that govern future internal activation dynamics. We operationalize 
this as a predictive latent-variable problem: we hypothesize a discrete latent \textbf{state} 
(i.e., \textbf{regime}) $s_t \in \{1, \dots, K\}$ that (i) persists over time, 
(ii) governs the evolution of internal representations, and (iii) is identifiable from 
observed activation trajectories. Let $h_t \in \mathbb{R}^d$ denote the residual stream 
activation at reasoning step $t$, where each step corresponds to a deterministically 
segmented sentence in a model-generated chain-of-thought trace. We obtain traces by 
sampling complete model responses to each reasoning prompt, then split each trace into 
sentence-level units using punctuation-based boundaries; $h_t$ is extracted at the final 
token of the $t$-th sentence. The choice of CEBRA over PCA for dimensionality reduction 
and EM over a Mixture-of-Experts formulation for parameter estimation are both justified 
empirically in Appendix~\ref{app:em_moe_cebra_ablation}.

\subsection{Switching Dynamical System Formulation}
\label{subsec: formal_assumption}
We model reasoning as a latent Markov process in which the discrete policy state $s_t$ 
governs the local evolution of the continuous internal representation $z_t$:
\begin{align}
P(s_{t+1}=j \mid s_t=i) &= T_{ij},
\label{eq:transition}\\
P(z_{t+1} \mid z_t, s_t=i) &= p_i(z_{t+1}\mid z_t).
\label{eq:emission}
\end{align}
In the linear-Gaussian instantiation used for inference, the dynamics take the form:
\begin{equation}
z_{t+1} = A_{s_t} z_t + b_{s_t} + \varepsilon_t, \quad 
\varepsilon_t \mid s_t \sim \mathcal{N}(0, \Sigma_{s_t}),
\label{eq:sds}
\end{equation}
with $\{s_t\}$ an irreducible, aperiodic Markov chain with transition matrix $T^\star$. 
We further assume \phantomsection\label{ass:main_assumptions} nontrivial self-transition 
$p_{\mathrm{stay}} > 1/K^\star$ (persistent regimes) and distinct regime parameters for 
$i \neq j$ (full theory and formal assumptions appear in Appendix~\ref{sec:theory}).

\subsection{Dimensionality Reduction via CEBRA}
Raw activations $h_t$ are projected into observed $d$-dimensional embedding trajectories 
$z_t$ using a CEBRA-style contrastive encoder \citep{Schneider_2023}. Positive pairs are 
temporally adjacent steps from the same problem; negatives are temporally distant steps 
from the same problem or steps from different problems. The encoder is trained with an 
InfoNCE objective:
\begin{equation}
\mathcal{L}_{\text{NCE}} = -\log \frac{\exp(\tilde{z}_t^\top \tilde{z}_{t^+}/\tau)}
{\exp(\tilde{z}_t^\top \tilde{z}_{t^+}/\tau) + \sum_{t^-}\exp(\tilde{z}_t^\top 
\tilde{z}_{t^-}/\tau)},
\end{equation}
where embeddings are $\ell_2$-normalized and $\tau$ is a temperature parameter. This 
encourages local temporal coherence in the embedding space, ensuring that discovered 
regimes reflect functional reasoning transitions rather than superficial token similarity.

\subsection{SDS Parameter Estimation via EM}
We estimate the SDS parameters $\theta = \{\pi, T, \{A_k, b_k, \Sigma_k\}_{k=1}^K\}$ 
from the observed CEBRA embedding trajectories $z_{1:T}$ via Expectation-Maximization 
\citep{ghahramani2000variational}. At each iteration, the E-step applies the 
forward-backward algorithm to compute soft regime posteriors 
$\gamma_t(k) = P(s_t = k \mid z_{1:T})$, and the M-step updates the regime-conditional 
dynamics via weighted least squares and the transition matrix via soft transition counts. 
The discrete regime sequence is then recovered as 
$\hat{s}_t = \mathop{\mathrm{arg\,max}}_k \gamma_t(k)$, and the model order $K^\star$ 
is selected by minimizing BIC on the fitted SDS likelihood. Full derivations and update 
equations are provided in Appendix~\ref{sec:em_details}, with identifiability analysis 
provided in Appendix~\ref{app:identifiability_full}.

\subsection{Why CEBRA Outperforms PCA}
\label{sec:theory_cebra_pca}

Under mild assumptions on the SDS dynamics, the prediction risk decomposes into an 
irreducible noise term and a misclassification term controlled by the minimum pairwise 
Bhattacharyya distance $B(f)$ between regime-conditional distributions. This gives a 
closed-form criterion for when CEBRA outperforms PCA. The full derivation appears in 
Appendix~\ref{sec:theory}:

\begin{corollary}[CEBRA vs.\ PCA]
\label{cor:cebra_pca}
$\mathcal{R}_{\mathrm{SDS}}(f_C) < \mathcal{R}_{\mathrm{SDS}}(f_P)$ if and only if
$B(f_C) - B(f_P) > \log\frac{M(f_C)}{M(f_P)}.$
\end{corollary}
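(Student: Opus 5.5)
The plan is to derive the corollary directly from the risk decomposition described just before it, proved in Appendix~\ref{sec:theory}. The decomposition should give, for any encoder $f$, an expression of the form
\begin{equation*}
\mathcal{R}_{\mathrm{SDS}}(f) \;=\; \sigma^2_{\mathrm{irr}} + c\, M(f)\, e^{-B(f)}.
\end{equation*}
Here $\sigma^2_{\mathrm{irr}}$ is the irreducible noise term, which does not depend on $f$ because it is fixed by the regime-conditional noise. The second term is the misclassification contribution. In it, $M(f)$ collects the encoder-dependent prefactor: the cost of predicting with the wrong regime's dynamics (of order $\max_{i\neq j}\|A_i z + b_i - A_j z - b_j\|^2$ in the embedded space) together with a $(K-1)$ or $\sqrt{\pi_i\pi_j}$ weight. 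The factor $e^{-B(f)}$ is the standard Bhattacharyya bound on the Bayes error of regime identification, $P(\hat s_t\neq s_t)\le \sum_{i\neq j}\sqrt{\pi_i\pi_j}\,e^{-B_{ij}(f)}$, dominated by the minimum pairwise distance $B(f)$.

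First, I fix the form of the decomposition. I would take the statement in Appendix~\ref{sec:theory} literally as an identity, with the bound on the misclassification term treated as its defining expression. The ``if and only if'' cannot hold for mere upper bounds, so the corollary is really a comparison of the risk expressions as defined there, not of true risks. I would say this explicitly.

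Second, I take the difference of the two risks. The shared noise term cancels:
\begin{equation*}
\mathcal{R}_{\mathrm{SDS}}(f_C) - \mathcal{R}_{\mathrm{SDS}}(f_P) = c\left(M(f_C)e^{-B(f_C)} - M(f_P)e^{-B(f_P)}\right).
\end{equation*}
This cancellation requires that the irreducible term and the constant $c$ are identical for both encoders. That holds if both encoders map into the same dimension $d$ and the noise term is defined intrinsically rather than in embedding coordinates. I would check that the appendix's assumptions (shared $d$, same $K^\star$, same true regime sequence) guarantee this.

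Third, with $c>0$ and $M(f_C),M(f_P)>0$, the difference is negative exactly when $M(f_C)e^{-B(f_C)} < M(f_P)e^{-B(f_P)}$. Because the logarithm is strictly monotone, this is equivalent to $\log M(f_C) - B(f_C) < \log M(f_P) - B(f_P)$, which rearranges to $B(f_C)-B(f_P) > \log\frac{M(f_C)}{M(f_P)}$. Each step is an equivalence, which gives both directions.

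The main obstacle is the first step: confirming that the appendix decomposition is exact in the multiplicative form $M(f)e^{-B(f)}$ with encoder-independent additive parts. If instead the misclassification term is a sum over pairs $\sum_{i\neq j} M_{ij}e^{-B_{ij}}$, I would fall back on the definitions of $M(f)$ as the effective prefactor, $M(f) := e^{B(f)}\sum_{i\neq j}M_{ij}e^{-B_{ij}(f)}$, so that the identity holds by construction. The rest is then immediate algebra. Positivity of $M$ is needed for the logarithm, and it follows from the distinct-regime-parameters assumption in Section~\ref{subsec: formal_assumption}.
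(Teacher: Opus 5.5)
Your proposal is correct and is essentially the paper's own argument, which consists only of ``direct substitution'' into Proposition~\ref{prop:risk_bound}: treat $\mathbb{E}[\|\varepsilon_t\|^2] + (K-1)\,e^{-B(f)}M(f)$ as the risk (so your $c = K-1$), cancel the common term, and take logarithms. Your caveats go beyond what the paper says and are better stated as explicit hypotheses than as things to verify: the equivalence holds only for these bound expressions, not for the true risks; the cancellation needs $\mathbb{E}[\|\varepsilon_t\|^2]$ and $K$ to agree for $f_C$ and $f_P$, which the appendix does not guarantee because $\varepsilon_t$ lives in each encoder's own embedding coordinates; and $M(f)>0$ needs some pair of regimes to differ in $(A_k,b_k)$, not merely in $\Sigma_k$.
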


We verify this condition empirically across all model--dataset pairs 
(Appendix~\ref{sec:cebra_pca_empirical}). Reasoning models satisfy it substantially more 
often than base models, indicating that the representation advantage of CEBRA is 
strongest in the models where structured latent dynamics have emerged.

\subsection{Validating the Markovian Assumption}
\label{sec:markovianity}

\begin{wrapfigure}{r}{0.46\columnwidth}
    \vspace{-0.8\baselineskip}
    \centering
    \includegraphics[width=0.44\columnwidth]{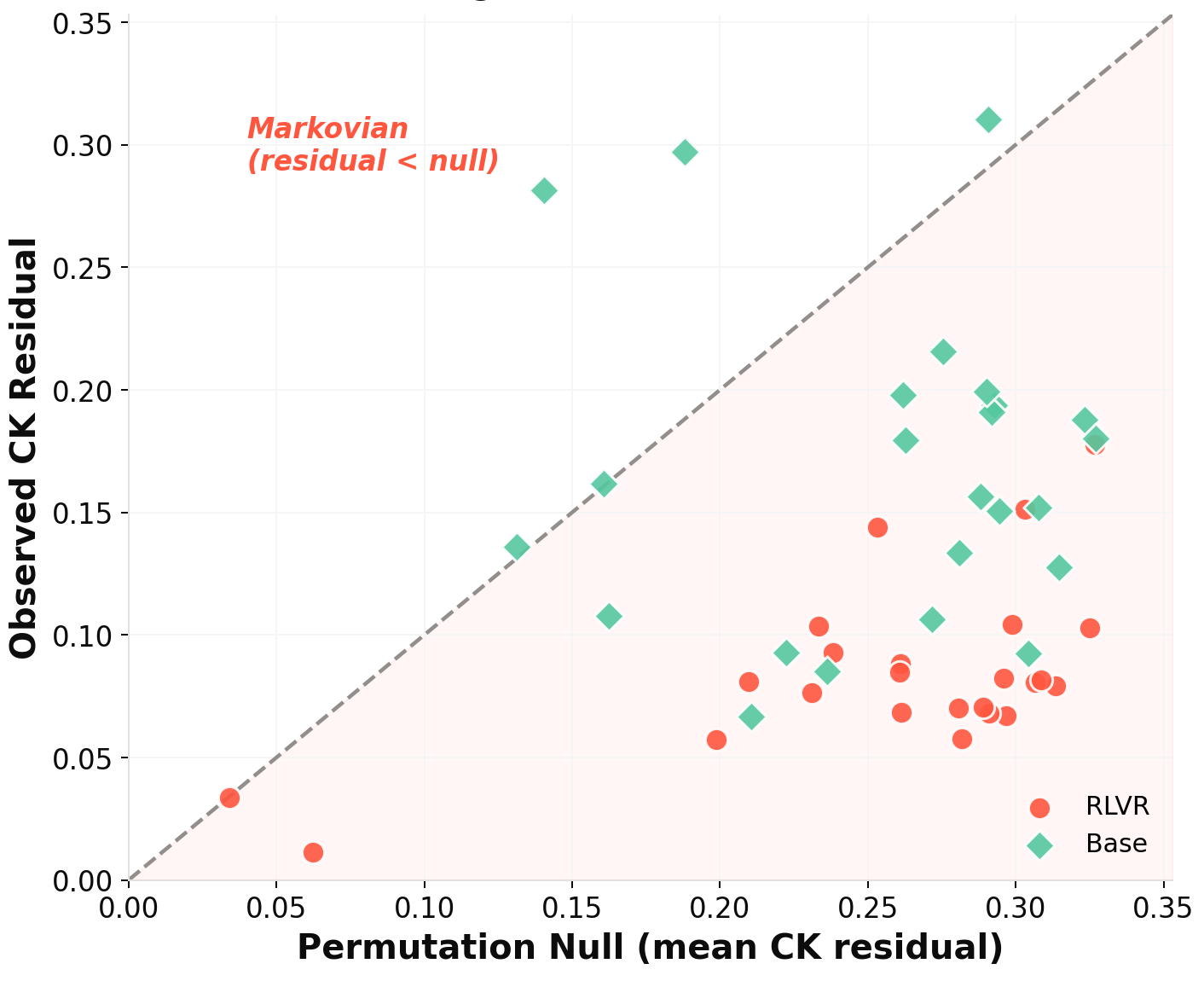}
    \caption{CK residuals against the permutation null. Points below the diagonal indicate that the decoded regime sequence satisfies the Chapman--Kolmogorov relation more closely than a memoryless baseline.}
    \label{fig:markovianity}
    \vspace{-0.8\baselineskip}
\end{wrapfigure}

A core modeling assumption is that the decoded regime sequence $\{s_t\}$ follows a
first-order Markov chain. We verify this using two complementary tests. The first is
a Chapman--Kolmogorov residual test: for a first-order chain, the two-step transition
matrix $T^{(2)}_{ij} = P(s_{t+2}=j \mid s_t=i)$ must satisfy $T^{(2)} \approx T^2$,
where $T$ is the fitted one-step transition matrix. We measure the normalized Frobenius
residual $\mathrm{CK} = \|T^2 - \hat{T}^{(2)}\|_F / \|\hat{T}^{(2)}\|_F$ and compare
it against a permutation null that destroys temporal dependencies while preserving
marginal state frequencies. The second is an order-2 BIC test comparing first- and
second-order Markov models on the decoded sequences, with positive $\Delta\mathrm{BIC}$
indicating preference for the simpler first-order model.

Reasoning models concentrate well below the diagonal in Figure~\ref{fig:markovianity},
indicating stronger first-order temporal consistency than permuted controls. Base models
are more diffuse, consistent with weaker temporal organization. The order-2 BIC test
corroborates this: first-order models are preferred in the large majority of reasoning-model runs, supporting the use of a first-order switching process as an adequate description of the
recovered latent policies.

\subsection{Why Reasoning Fine-Tuning Induces Persistent Latent States}
\label{sec:theory_rlvr_main}

One possible explanation for the emergence of persistent regimes in reasoning models is that 
trajectory-level rewards  under RLVR may favor coherent multi-step strategies, implicitly creating 
a positive continuation margin $\gamma_i(x_t) := V_i(x_t) - \max_{j \neq i} V_j(x_t)$ 
over large regions of state space and biasing the policy toward self-transition. 
Pretraining via maximum likelihood exerts no such pressure, as next-token prediction 
does not reward consistency of latent modes across time. Under this view, RLVR does 
not create latent structure from scratch but rather amplifies and stabilizes structure 
already present in the base model by making persistence directly reward-relevant. 
We formalize this argument in Appendix~\ref{sec:theory_rlvr} 
(Proposition~\ref{prop:rlvr}). Empirical support for this progressive emergence is provided by the training dynamics analysis in Appendix~\ref{sec:training_dynamics}.
\section{Experimental Setup}

We evaluate on GSM8K (7473 samples) \citep{cobbe2021gsm8k}, MATH-500 (500 samples) \citep{hendrycks2021math500}, 
SVAMP (800 samples) \citep{patelsvamp}, and MMLU-Pro (4000 samples) \citep{wang2024mmluprorobustchallengingmultitask} 
using chain-of-thought traces, and study paired base and DeepSeek-R1 distilled reasoning models \citep{deepseekai2025deepseekr1incentivizingreasoningcapability} for: Llama-3.1-8B \citep{llama3modelcard}, Qwen2.5-14B \citep{qwen}, 
and Qwen2.5-Math-1.5B \citep{qwen2, qwen2.5}. To examine whether these patterns extend to a larger scale, we additionally
compare Qwen2.5-32B with its reasoning-specialized counterpart QwQ-32B
using the same evaluation protocol; further details are provided in
Appendix~\ref{sec:more_mods}. Activations are extracted at the sentence level 
using the last-token representation of each sentence at a single selected layer per 
model; the choice of layer is justified in Appendix~\ref{sec:layer_selection}. 


\textbf{Metrics.} Let $s_{1:T}$ be the inferred discrete state sequence and $A$ the estimated transition matrix. \textbf{Persistence} measures temporal stickiness as
$\mathrm{Persistence} = \frac{1}{T-1}\sum_{t=1}^{T-1}\mathbf{1}\{s_{t+1}=s_t\}.$
The \textbf{effective number of states} $K_{\mathrm{eff}} = \exp(H(\pi))$ summarizes 
practical state usage via entropy-equivalent occupancy, where $\pi_k$ is the empirical 
state frequency. \textbf{Transition structure (TVD)} quantifies deviation from random 
switching as the mean total variation distance between each transition row and a 
uniform reference. The \textbf{spectral gap} $\mathrm{SpecGap} = 1 - |\lambda_2(A)|$ 
serves as a mixing proxy, with smaller values indicating slower mixing and stronger 
regime persistence.
\section{Experimental Validation of SDS Model}
\label{sec:expm_val}


\begin{wrapfigure}[22]{r}{0.5\columnwidth}
    \vspace{-0.8\baselineskip}
    \centering
    \includegraphics[width=0.5\columnwidth, trim=0 0 0 30, clip]{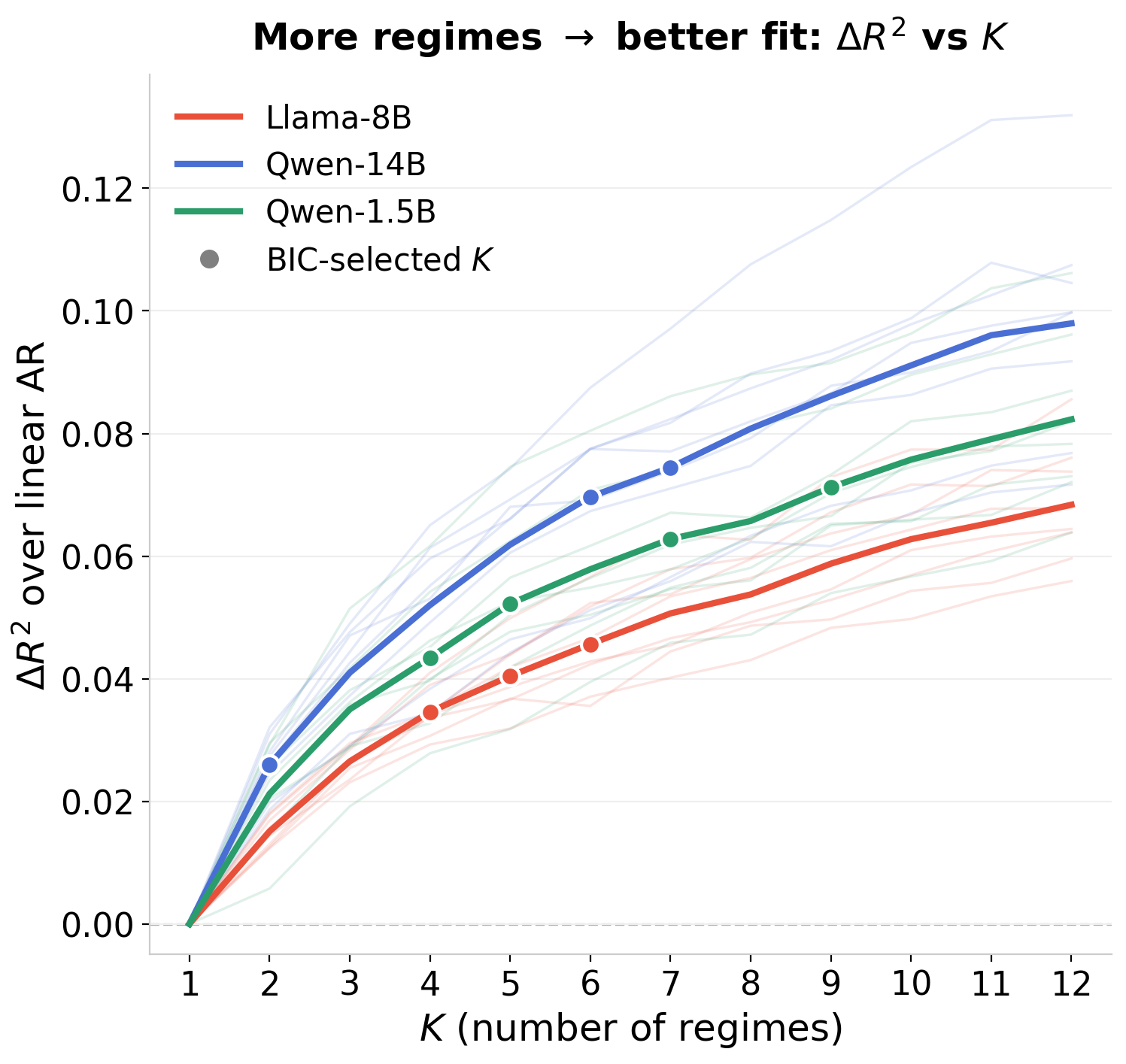}
    \caption{Predictive gain from increasing latent regimes. The y-axis shows $\Delta R^2$ vs. a linear autoregressive baseline, with markers indicating BIC-selected $K$. Across models, performance improves with a small number of regimes, supporting discrete latent policy states over a single-mode baseline.}
    \label{fig:r2_sweep_vs_k}
    \vspace{-0.8\baselineskip}
\end{wrapfigure}


\subsection{Model Order Selection and Predictive Gain from Additional Regimes}

To test whether additional latent regimes provide genuinely useful predictive structure, we sweep the number of regimes $K$ and measure the gain in one-step predictive fit relative to a linear autoregressive baseline. We report $\Delta R^2$ over linear AR and mark the BIC-selected model order for each run in Figure \ref{fig:r2_sweep_vs_k}.

Across model families, $\Delta R^2$ increases as the number of regimes grows from $K=1$ to a small multi-regime model, after which gains taper. This pattern supports the claim that reasoning traces are better explained by a low-cardinality switching process than by a single linear dynamical mode.

\subsection{State-Swap Ablations Across Depth}

To test whether the discovered regimes are functionally necessary rather than arbitrary 
labels, we perform a state-swap ablation. After fitting the SDS and estimating 
per-regime dynamics $\{\hat{A}_k, \hat{b}_k\}$, we evaluate prediction under the 
original assignments $R^2_{\mathrm{id}}$ and under randomly permuted assignments 
$\bar{R}^2_{\mathrm{rand}}$, where the fitted dynamics are held fixed but regime 
labels are shuffled. We report the drop $\Delta = R^2_{\mathrm{id}} - 
\bar{R}^2_{\mathrm{rand}}$, averaged over $N=50$ permutations. Across all model 
families and datasets, permuting the assignments produces a substantial drop in 
predictive $R^2$, confirming that the correct dynamics model must be applied at the 
correct time step and that the discovered regimes reflect genuinely distinct dynamical 
modes rather than spurious clustering. The drop is consistently larger in later layers, 
consistent with a progressive consolidation of latent policy structure with depth. 
Full results are in Appendix~\ref{sec:state_swap_details}.

\paragraph{Robustness to correctness and temporal controls.}
We test whether these differences can be attributed to output
correctness, temporal biases in the representation-learning objective,
or the SDS persistence prior. Restricting the analysis to problems
solved correctly by both models preserves the difference in latent-state
organization. Replacing adjacent CEBRA positives with randomly selected
non-adjacent steps also preserves the reasoning--base separation in
$p_{\mathrm{stay}}$ and TVD across all 12 model--dataset settings, while
varying or completely removing the Dirichlet persistence prior produces
negligible changes. Conversely, shuffling sentence order sharply reduces
segment persistence and causes predictive gains over a linear
autoregressive baseline to become negative, confirming that the
recovered regimes depend on coherent temporal organization rather than
static activation geometry alone. Full results are reported in
Appendix~\ref{app:confounding_controls}.

\section{Comparing Latent Policy Regimes in Base vs Fine-Tuned Models}

\begin{table*}[t]
\centering
\small
\setlength{\tabcolsep}{3.5pt}
\renewcommand{\arraystretch}{1.10}

\resizebox{\textwidth}{!}{%
\begin{tabular}{lcccccccc}
\toprule
& \multicolumn{2}{c}{$K_{\mathrm{eff}}\uparrow$}
& \multicolumn{2}{c}{$p_{\mathrm{stay}}\uparrow$}
& \multicolumn{2}{c}{TVD$\uparrow$}
& \multicolumn{2}{c}{Spec.\ Gap$\downarrow$} \\
\cmidrule(lr){2-3}
\cmidrule(lr){4-5}
\cmidrule(lr){6-7}
\cmidrule(lr){8-9}

& Base & Reasoning
& Base & Reasoning
& Base & Reasoning
& Base & Reasoning \\
\midrule

Llama-8B
& \val{4.60}{\pm 1.32}
& \best{4.65}{\pm 0.73}
& \val{0.70}{\pm 0.07}
& \best{0.82}{\pm 0.05}
& \val{0.48}{\pm 0.07}
& \best{0.60}{\pm 0.03}
& \val{0.20}{\pm 0.09}
& \best{0.10}{\pm 0.05} \\

Qwen-1.5B
& \val{2.60}{\pm 0.66}
& \best{5.30}{\pm 1.27}
& \val{0.78}{\pm 0.12}
& \best{0.79}{\pm 0.06}
& \val{0.39}{\pm 0.06}
& \best{0.61}{\pm 0.04}
& \val{0.24}{\pm 0.10}
& \best{0.07}{\pm 0.02} \\

Qwen-14B
& \val{3.00}{\pm 0.71}
& \best{5.80}{\pm 1.86}
& \best{0.78}{\pm 0.07}
& \val{0.77}{\pm 0.05}
& \val{0.43}{\pm 0.07}
& \best{0.58}{\pm 0.10}
& \val{0.17}{\pm 0.13}
& \best{0.12}{\pm 0.09} \\

Qwen2.5/QwQ-32B
& \val{5.25}{\pm 1.25}
& \best{5.59}{\pm 1.11}
& \val{0.74}{\pm 0.08}
& \best{0.79}{\pm 0.04}
& \val{0.55}{\pm 0.08}
& \best{0.60}{\pm 0.05}
& \val{0.14}{\pm 0.06}
& \best{0.11}{\pm 0.04} \\

\bottomrule
\end{tabular}%
}
\caption{
Per-model comparison of structural switching metrics between base and
reasoning-fine-tuned models, averaged across datasets. Values report mean
$\pm$ pooled population standard deviation across datasets and random
seeds. Light-green cells and bold text indicate the numerically better
value within each model pair; they do not denote statistical significance.
Reasoning fine-tuning consistently produces more differentiated transition
structure, while changes in state utilization, persistence, and mixing
vary across model families. Per-dataset results and additional robustness
analyses are provided in Appendix~\ref{sec:more_mods}.
}
\label{tab:per_model}
\end{table*}

Table~\ref{tab:per_model} reports structural switching metrics averaged
across datasets for each model pair. Reasoning fine-tuning increases TVD
across all model families, providing the most consistent evidence of more
differentiated transition structure. The remaining metrics vary across
families: Qwen-1.5B and Qwen-14B show substantially greater effective
state utilization, Llama-8B exhibits higher persistence and slower mixing,
and QwQ-32B improves over Qwen2.5-32B across all four metrics. Overall,
these results indicate a broader reorganization of latent-policy dynamics
rather than a uniform increase in persistence.

\subsection{Emergence of Persistent Latent Regimes}

\begin{figure*}[t]
    \begin{center}
    \includegraphics[width=\textwidth, , trim=0 0 0 30, clip]{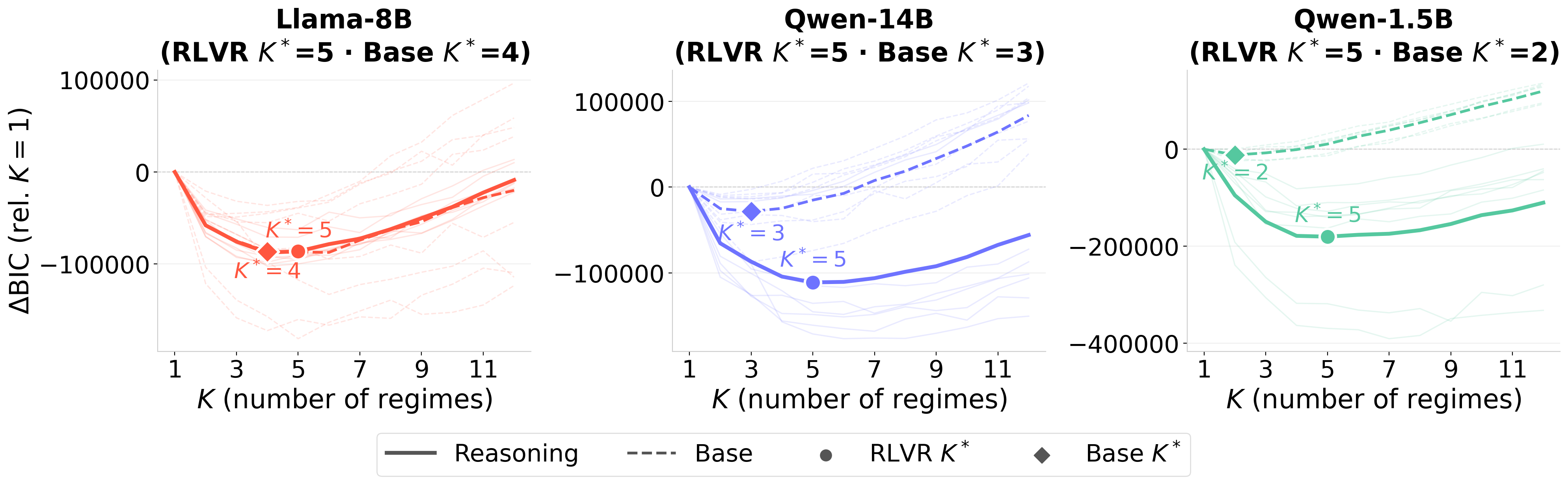}
    \end{center}
    \caption{BIC curves comparing reasoning and base models across model families. Bold lines show family means; markers denote the BIC-selected $K^\ast$. Reasoning models consistently select larger $K^\ast$ than base models, supporting the claim that reasoning fine-tuning induces a richer multi-regime latent-policy structure.}
    \label{fig:bic_curves_reasoning_vs_base}
\end{figure*}

\begin{figure*}[t]
    \begin{center}
    \includegraphics[width=\textwidth]{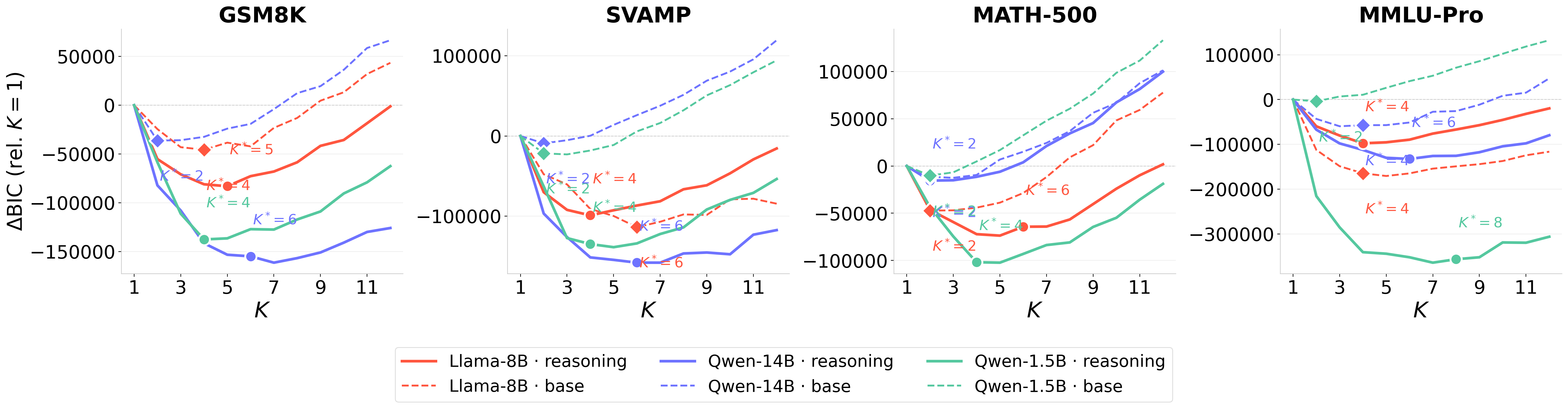}
    \end{center}
    \caption{Per-dataset BIC curves for reasoning and base models across all model families. Dashed lines denote base models and solid lines denote reasoning models; thick lines show dataset-level averages. The reasoning preference for larger $K^\ast$ persists across datasets, indicating that the emergence of multi-regime structure is not limited to a single benchmark.}
    \label{fig:bic_curves_per_dataset}
\end{figure*}

To determine how models operate via discrete policies, we vary the number of regimes $K$ and measure state persistence and transition structure in Figure \ref{fig:bic_curves_reasoning_vs_base}.
The single-model persistence-cliff pattern is mirrored by the broader model-family comparison using BIC, consistent with the theoretical prediction that such a cliff should emerge under multi-regime dynamics (Appendix~\ref{sec:theory_persistence}). Reasoning models consistently prefer larger effective regime counts than their base counterparts, indicating that reasoning fine-tuning supports a richer latent policy structure. This separation persists across datasets (Figure \ref{fig:bic_curves_per_dataset}), showing that the effect is not driven by a single benchmark. The dataset-level curves sharpen the same conclusion: GSM8K, SVAMP, MATH-500, and MMLU-Pro all exhibit lower BIC at larger $K$ for reasoning models than for their base counterparts, indicating that the preference for multi-regime dynamics is stable across both mathematical and broader knowledge benchmarks. We further track how this multi-regime structure develops during training in Appendix~\ref{sec:training_dynamics}, finding that $K_{\mathrm{eff}}$ and transition structure emerge rapidly in early training before consolidating.

Furthermore, these patterns are not dataset-specific: fitting a shared CEBRA encoder on GSM8K and transferring it to SVAMP and MMLU-Pro yields high cross-dataset consistency ($S_T \geq 0.85$, $S_C \geq 0.76$), confirming that the recovered policy states reflect intrinsic properties of the model's reasoning behavior rather than surface statistics of the training distribution (Appendix~\ref{sec:cross_dataset}).





\subsection{Geometry of Latent Policies}

\begin{figure*}[ht]
    \centering
    \begin{minipage}[t]{0.48\textwidth}
        \centering
        \includegraphics[width=\textwidth]{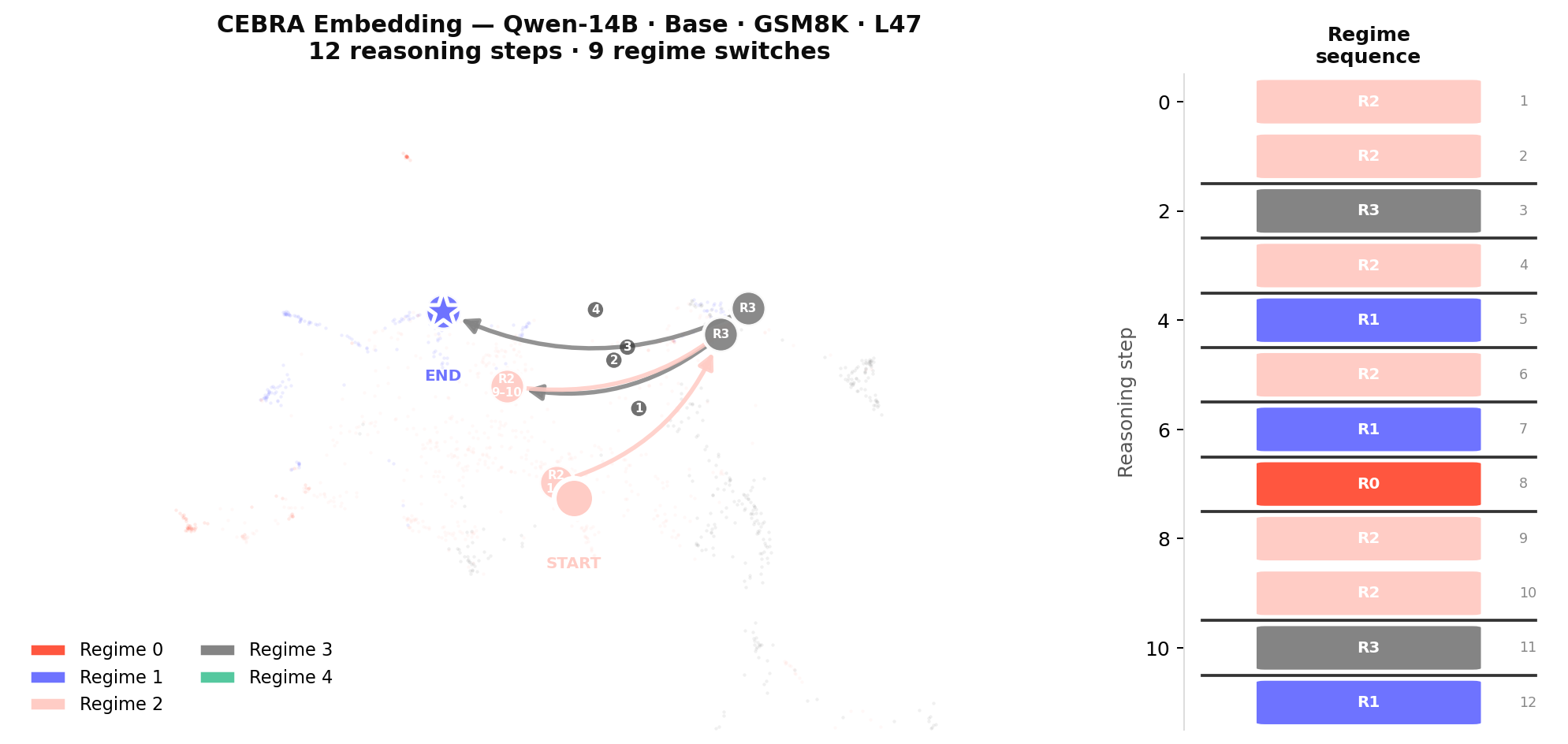}
    \end{minipage}
    \hfill
    \begin{minipage}[t]{0.48\textwidth}
        \centering
        \includegraphics[width=\textwidth]{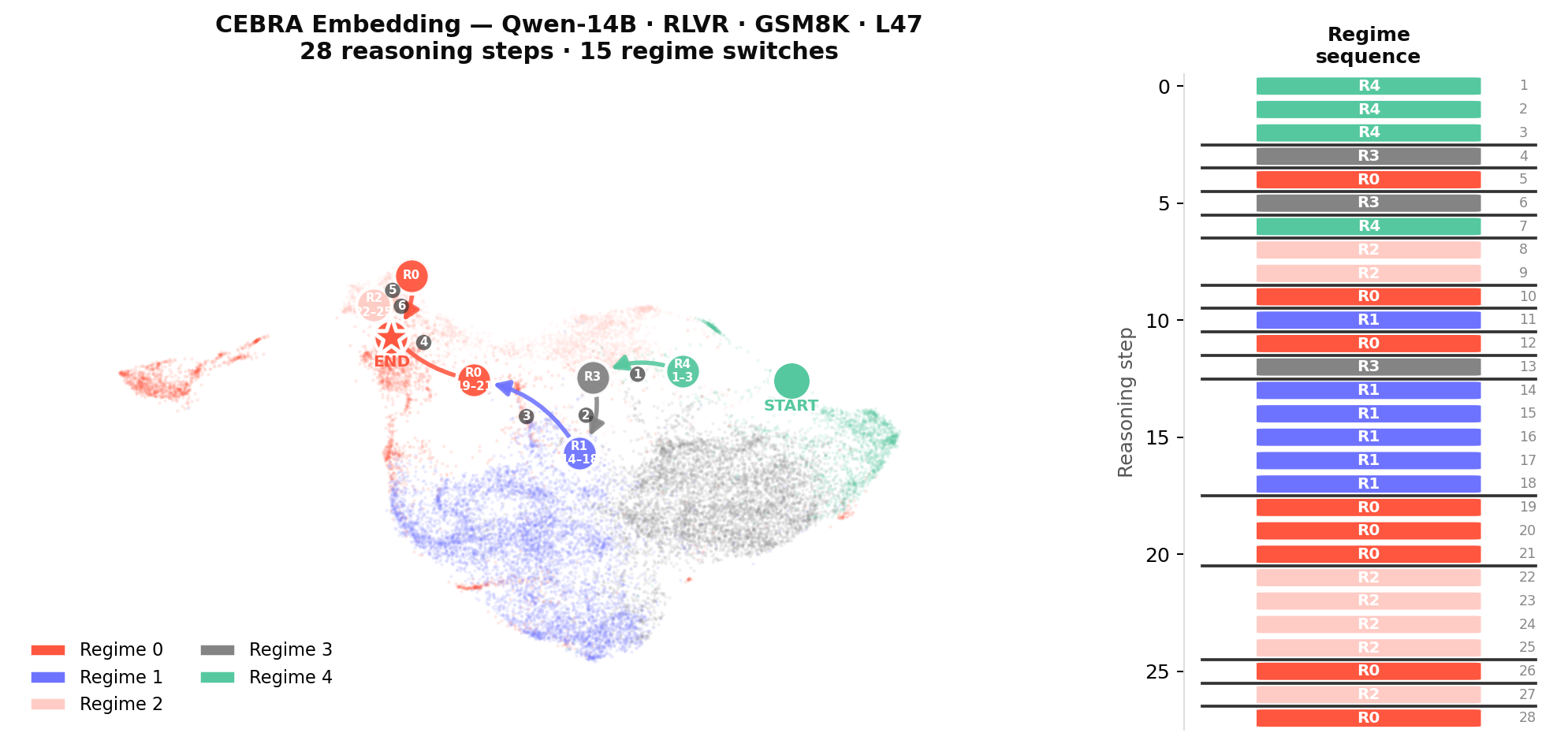}
    \end{minipage}
    \caption{Representative CEBRA trajectories for Qwen-14B on GSM8K. \emph{Left:} base model. \emph{Right:} reasoning model. The base trajectory oscillates among nearby $R2$, $R3$, and $R1$ regions with limited geometric separation, while the reasoning trajectory begins in a distinct green cluster and proceeds through blue and red regimes toward a compact terminal region.}
    \label{fig:traj_qwen14_gsm8k_MAIN}
\end{figure*}

Representative CEBRA trajectory visualizations provide a complementary geometric view of these recovered policies (Figure~\ref{fig:traj_qwen14_gsm8k_MAIN}). Across model families, base-model traces typically either remain trapped within a narrow subset of the embedding space or switch irregularly among nearby regions, yielding short or weakly ordered regime segments. Reasoning trajectories, by contrast, more often traverse separated clusters in a stage-like progression from the initial reasoning state toward terminal solution states, with longer contiguous segments and more coherent transitions between clusters. Qualitative examples spanning all model families and both mathematical benchmarks are reported in Appendix~\ref{app:cebra_traj_gallery}, where the regime-sequence panels align the geometric trajectories with the decoded latent-state assignments.

\section{Functional Specialization of States}
\label{app:func_spec}


To better characterize how states align with reasoning, we manually define \emph{reasoning stages}, similar to behaviors derived by \citep{venhoff2025understanding}, and then annotate each sentence in traces with both a reasoning stage via Qwen2.5-7B-Instruct \citep{qwen2.5} and a state via SDS. This begets regime distributions per reasoning stage.

Figure~\ref{fig:func_spec} shows the row-normalized contingency matrix $\mathbb{P} (\text{regime} \mid \text{stage})$ between inferred latent regimes and human-defined reasoning stages across three models on GSM8K. Since we established that regime structure is consistent across datasets for a given model (Appendix~\ref{sec:cross_dataset}), we restrict this analysis to GSM8K without loss of generality. 
The recovered regimes exhibit clear functional specialization across all three models. We provide an extended analysis of this result in Appendix \ref{app:func_spec}.

\begin{figure}[t]
    \centering
    \includegraphics[width=0.85\linewidth]{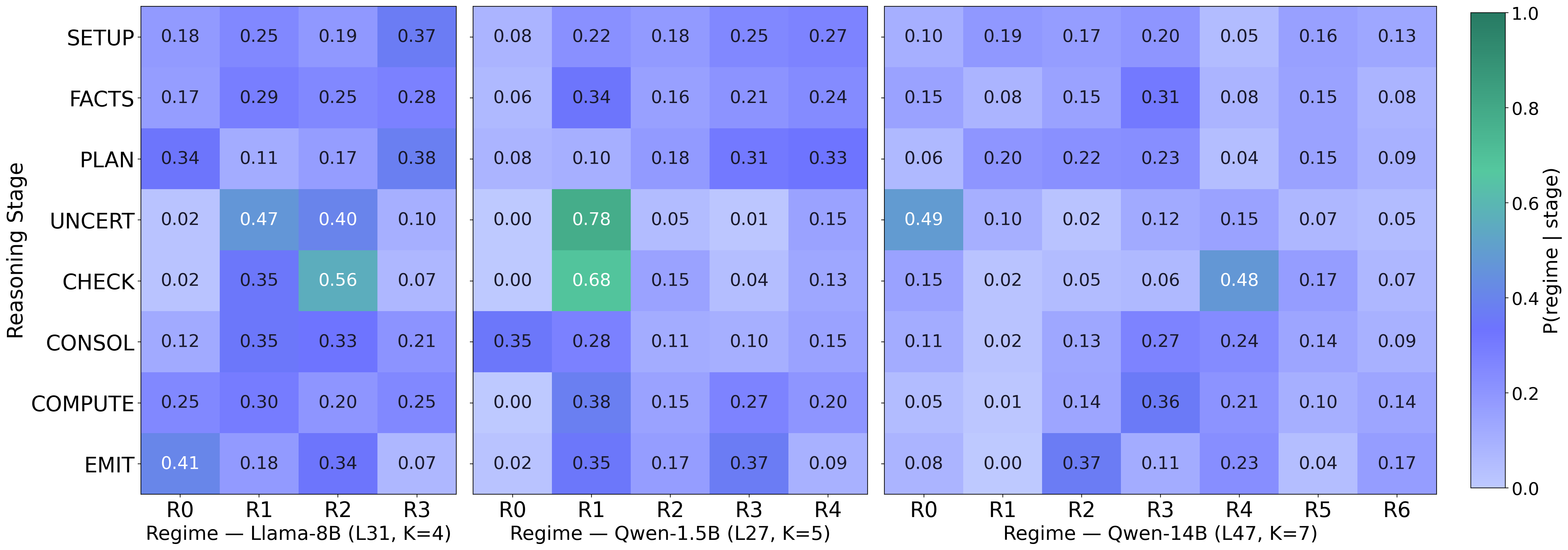}
    \caption{Row-normalized contingency matrix $P(\text{regime} \mid \text{stage})$ 
    across three models (\textbf{left}: Llama-8B, \textbf{middle}: Qwen-1.5B, \textbf{right}: Qwen-14B), 
    showing how human-defined reasoning stages distribute across discovered latent 
    regimes and revealing distinct specialization patterns across the recovered 
    policy states.}
    \label{fig:func_spec}
\end{figure}







\section{Latent Policies Are Actionable}
\label{sec:actionable}

\subsection{Causal Validation via Policy Transplantation}
\label{sec:transplant_main}

To test whether the recovered states are functionally meaningful, we
transplant the latent policy learned from a reasoning-fine-tuned model
into its base counterpart. We fit an SDS to the reasoning model's
activation trajectories and periodically apply a KL-regularized nudge to
the base model's residual stream, steering its dynamics toward transitions
characteristic of the reasoning model. This intervention improves
performance on challenging problems that the unsteered base model
consistently fails, showing that the recovered dynamics can causally alter
downstream reasoning without modifying model weights. Full implementation
details and results are provided in Appendices~\ref{app:transplant_details}
and~\ref{app:transplant_results}.


\begin{table*}[t]
\centering
\small
\setlength{\tabcolsep}{5pt}
\renewcommand{\arraystretch}{1.08}

\begin{tabular*}{\textwidth}{
    @{\extracolsep{\fill}}
    llcccc
    @{}
}
\toprule
Model & Dataset
& Self-Consistency
& \cellcolor{prefixcell}\textsc{PrefixGuard}
& Gain
& \cellcolor{oraclecell}Oracle \\
\midrule

\multirow{4}{*}{Llama-8B}
& GSM8K
& $60.9 \pm 3.1$
& \cellcolor{prefixcell}$\mathbf{68.8 \pm 4.4}$
& $\mathbf{+7.9}$
& \cellcolor{oraclecell}$94.3 \pm 0.9$ \\

& MATH-500
& $18.8 \pm 4.4$
& \cellcolor{prefixcell}$\mathbf{25.5 \pm 7.6}$
& $\mathbf{+6.7}$
& \cellcolor{oraclecell}$54.7 \pm 2.3$ \\

& MMLU-Pro
& $22.4 \pm 3.7$
& \cellcolor{prefixcell}$\mathbf{26.0 \pm 4.8}$
& $\mathbf{+3.6}$
& \cellcolor{oraclecell}$70.3 \pm 3.1$ \\

& SVAMP
& $54.2 \pm 8.2$
& \cellcolor{prefixcell}$\mathbf{58.3 \pm 6.2}$
& $\mathbf{+4.1}$
& \cellcolor{oraclecell}$94.4 \pm 3.9$ \\

\midrule

\multirow{4}{*}{Qwen-14B}
& GSM8K
& $87.5 \pm 4.2$
& \cellcolor{prefixcell}$\mathbf{91.7 \pm 0.0}$
& $\mathbf{+4.2}$
& \cellcolor{oraclecell}$91.7 \pm 0.0$ \\

& MATH-500
& $5.5 \pm 3.4$
& \cellcolor{prefixcell}$\mathbf{10.2 \pm 5.1}$
& $\mathbf{+4.7}$
& \cellcolor{oraclecell}$21.1 \pm 4.1$ \\

& MMLU-Pro
& $\mathbf{30.2 \pm 2.3}$
& \cellcolor{negativecell}$28.1 \pm 5.4$
& \cellcolor{negativecell}$-2.1$
& \cellcolor{oraclecell}$66.7 \pm 3.9$ \\

& SVAMP
& $87.5 \pm 21.7$
& \cellcolor{prefixcell}$\mathbf{100.0 \pm 0.0}$
& $\mathbf{+12.5}$
& \cellcolor{oraclecell}$100.0 \pm 0.0$ \\

\midrule

\multirow{4}{*}{Qwen-1.5B}
& GSM8K
& $20.3 \pm 3.1$
& \cellcolor{prefixcell}$\mathbf{20.8 \pm 2.9}$
& $\mathbf{+0.5}$
& \cellcolor{oraclecell}$44.3 \pm 3.1$ \\

& MATH-500
& $13.5 \pm 2.3$
& \cellcolor{prefixcell}$\mathbf{16.7 \pm 2.6}$
& $\mathbf{+3.2}$
& \cellcolor{oraclecell}$34.9 \pm 2.7$ \\

& MMLU-Pro
& $8.9 \pm 2.7$
& \cellcolor{prefixcell}$\mathbf{9.9 \pm 1.7}$
& $\mathbf{+1.0}$
& \cellcolor{oraclecell}$41.1 \pm 2.7$ \\

& SVAMP
& $68.8 \pm 6.2$
& \cellcolor{prefixcell}$\mathbf{76.6 \pm 6.8}$
& $\mathbf{+7.8}$
& \cellcolor{oraclecell}$89.1 \pm 2.7$ \\

\bottomrule
\end{tabular*}

\caption{
Accuracy of SDS-guided prefix pruning compared with standard
self-consistency. Values are mean $\pm$ standard deviation. Gain denotes
the percentage-point difference between \textsc{PrefixGuard} and
self-consistency. Oracle selects a correct trajectory whenever one exists
in the candidate pool and serves only as an upper bound.
}
\label{tab:prefixguard}
\end{table*}

\subsection{SDS-Guided Pruning of Failure-Prone Prefixes}
\label{sec:prefixguard}

The policy-transplantation experiment provides a causal test of whether
the recovered states influence reasoning, but it is not intended as a
practical alternative to using the reasoning model itself. We therefore
ask whether the learned SDS structure can identify and prune
failure-prone trajectories during inference.

For each latent state $s$, we estimate a utility $V(s)$ from correctness
outcomes. States with $V(s)<0$ are associated more strongly with
unsuccessful trajectories. We define the \emph{detour score} of a
trajectory $h_{1:T}$ as the fraction of steps assigned to such states:
\begin{equation}
    D(h_{1:T})
    =
    \frac{1}{T}
    \sum_{t=1}^{T}
    \mathbb{1}\!\left[V(s_t)<0\right].
    \label{eq:detour_score}
\end{equation}
A high detour score indicates that a trajectory spends substantial time
in regimes historically associated with failure.

Based on this signal, we introduce \textsc{PrefixGuard}, a fixed
process-level controller applied uniformly across models and datasets.
We generate four partial rollouts up to 25\% of the token budget and
score each prefix using a fixed combination of CEBRA-based reasoning
quality, SDS transition value, transition likelihood, and a penalty for
time spent in detour states. We retain the highest-scoring prefix,
corresponding to 75\% pruning, continue it to completion, generate
additional restart trajectories, and select the final answer using a
fixed rank-based hybrid score. \textsc{PrefixGuard} is not a learned
selector, and we perform no dataset-specific tuning or post-hoc
selection of its scoring rule.

As shown in Table~\ref{tab:prefixguard}, \textsc{PrefixGuard} improves
over self-consistency in 11 of 12 model--dataset settings, with an
average gain of $4.5$ percentage points and a maximum gain of $12.5$
points. The only decrease occurs for Qwen-14B on MMLU-Pro. The remaining
gap to the oracle indicates that the candidate pool frequently contains
a correct trajectory that the fixed scoring rule does not select. Thus,
the learned SDS structure already identifies promising prefixes, while
final trajectory selection remains the principal bottleneck. Together
with policy transplantation, these results show that the recovered states
are both causally meaningful and practically useful for avoiding
failure-associated reasoning regimes.







\section{Conclusion}


We employ a dynamical-systems approach to compare reasoning-fine-tuned
models with their base counterparts through latent policy states. Across
models and benchmarks, reasoning fine-tuning induces richer latent-policy
organization, characterized by more differentiated transitions and
model-dependent changes in state utilization, persistence, and mixing.
Causal interventions establish the functional relevance of these dynamics,
while SDS-guided prefix pruning demonstrates their practical utility.

Our experiments focus on mathematical, symbolic, and multi-step reasoning
with explicit CoT traces. We do not claim that the same structure applies
to retrieval, tool use, long-horizon planning, agentic or multimodal
reasoning, or reasoning without observable intermediate steps. Future work
can test these settings and investigate how latent-state transitions
relate to high-entropy forks in reasoning trajectories
\citep{qian2025demystifying}.


\section*{Acknowledgments}

We thank Iván Arcuschin, Jake Ward, Florent Draye and members of Martian for helpful discussions. We also thank the Cosmos Institute for providing computational support.

\section*{LLM Usage Disclosure}
LLMs were used to assist with literature review, code refinement, and limited parts of writing.


\bibliography{colm2026_conference}
\bibliographystyle{colm2026_conference}

\appendix
\onecolumn

\addcontentsline{toc}{section}{Supplementary Materials}
\bigskip

{\LARGE \textbf{Appendix: Table of Contents}}

\begin{flushleft}
\textbf{PART I: Theoretical Foundations} \dotfill \pageref{sec:em_details} \\[6pt]

\begin{tabularx}{\linewidth}{@{}lXr@{}}
A & SDS Parameter Estimation & \pageref{sec:em_details} \\

B & Mathematical Foundations of SDS and CEBRA & \pageref{sec:theory} \\
& B.1 \quad Empirical Verification of the CEBRA vs.\ PCA Criterion & \pageref{sec:cebra_pca_empirical} \\
& B.2 \quad Advantages of CEBRA Over PCA & \pageref{sec:theory_rep} \\[4pt]

C & Identifiability, MLE Consistency, and State Estimation Error & \pageref{app:identifiability_full} \\
& C.1 \quad Persistence Theory and the State-Splitting Cliff & \pageref{sec:theory_persistence} \\
& C.2 \quad How RLVR May Induce Persistent States & \pageref{sec:theory_rlvr} \\
\end{tabularx}

\noindent\rule{\linewidth}{0.4pt}

\textbf{PART II: Additional Experimental Results} \dotfill \pageref{sec:training_dynamics} \\[6pt]

\begin{tabularx}{\linewidth}{@{}lXr@{}}
D & Training Dynamics of Latent Policy Structure & \pageref{sec:training_dynamics} \\[4pt]

E & Additional Robustness and Modeling Choice Ablations & \pageref{sec:more_expms} \\
& E.1 \quad Per-Dataset Structural Comparisons & \pageref{sec:more_mods} \\
& E.2 \quad Controls for Correctness and Temporal Bias & \pageref{app:confounding_controls} \\
& E.3 \quad Activation Extraction and Layer Selection & \pageref{sec:layer_selection} \\
& E.4 \quad Projection and Inference Ablations & \pageref{app:em_moe_cebra_ablation} \\
& E.5 \quad State-Swap Ablation & \pageref{sec:state_swap_details} \\
& E.6 \quad Additional State-Swap Ablation Details & \pageref{app:state_swap_appendix} \\
& E.7 \quad Cross-Dataset Consistency of Latent Policy States & \pageref{sec:cross_dataset} \\[4pt]

F & Functional State Specialization: Extended Analysis & \pageref{app:func_spec} \\[4pt]
G & Hard Example Dataset Construction & \pageref{sec:hard_examples} \\[4pt]
H & Policy-Transplantation Implementation & \pageref{app:transplant_details} \\[4pt]
I & Cross-Model Transfer and Policy-Transplantation Results & \pageref{app:transplant_results} \\[4pt]
J & CEBRA Trajectory Visualizations Gallery & \pageref{app:cebra_traj_gallery} \\
\end{tabularx}

\noindent\rule{\linewidth}{0.4pt}
\end{flushleft}

\newpage

The results in this part clarify when the proposed SDS framework admits
identifiable regimes, consistent parameter estimation, and controlled
state-recovery error. They provide sufficient conditions for interpreting
the recovered states, rather than a claim that language-model activations
are generated by an exactly specified SDS. The assumptions play different
roles. Compactness, boundedness, and regularity conditions are standard
technical assumptions used to establish the statistical results. Correct
model specification and identifiable regime dynamics are stronger,
idealized assumptions that characterize the theoretical setting but are
not imposed by the empirical pipeline, where the SDS is used as an
approximate latent dynamical model. The Markov and persistence properties
are treated as testable modeling hypotheses: we evaluate predictive
adequacy and regime dependence empirically, and vary or remove the
persistence prior in Appendix~\ref{app:dirichlet_prior}. We therefore use
the theory to explain when regime recovery is principled, while relying on
ablation and sensitivity analyses to assess whether its approximations are
supported in practice.

\section{SDS Parameter Estimation}
\label{sec:em_details}

We estimate parameters $\theta = \{\pi, T, \{A_k, b_k, \Sigma_k\}_{k=1}^K\}$ by 
maximizing the complete-data log-likelihood via EM, run for 50 iterations.

\textbf{E-step.} Given current parameters $\theta^{(n)}$, the forward-backward 
algorithm computes the log-forward variables
\begin{equation}
\log \alpha_t(k) = \log p(z_t \mid s_t=k, z_{t-1}) + \log \sum_{i} \alpha_{t-1}(i) T_{ik},
\end{equation}
and log-backward variables
\begin{equation}
\log \beta_t(k) = \log \sum_j T_{kj} \, p(z_{t+1} \mid s_{t+1}=j, z_t) \, \beta_{t+1}(j),
\end{equation}
from which we obtain the soft posteriors
\begin{align}
\gamma_t(k) &= P(s_t = k \mid z_{1:T}) \propto \alpha_t(k)\,\beta_t(k), \\
\xi_t(i,j)  &= P(s_t=i, s_{t+1}=j \mid z_{1:T}) \propto \alpha_t(i)\,T_{ij}\,p(z_{t+1} \mid s_{t+1}=j, z_t)\,\beta_{t+1}(j).
\end{align}
The regime-conditional emission density is Gaussian:
\begin{equation}
p(z_{t+1} \mid s_{t+1}=k, z_t) = \mathcal{N}(z_{t+1};\; A_k z_t + b_k,\; \Sigma_k).
\end{equation}

\textbf{M-step.} Regime-conditional dynamics are updated via $\gamma_t(k)$-weighted 
least squares. Stacking inputs $\tilde{z}_t = [z_t^\top, 1]^\top$, the solution for 
each regime $k$ is
\begin{equation}
\begin{bmatrix} A_k^\top \\ b_k \end{bmatrix}
= \left(\sum_{t} \gamma_t(k)\, \tilde{z}_t \tilde{z}_t^\top + \lambda I\right)^{-1}
  \left(\sum_{t} \gamma_t(k)\, \tilde{z}_t z_{t+1}^\top\right),
\end{equation}
with $\lambda = 10^{-4}$ for regularization. The covariance is updated as
\begin{equation}
\Sigma_k = \frac{\sum_t \gamma_t(k)\, e_t e_t^\top}{\sum_t \gamma_t(k)} + \lambda I,
\quad e_t = z_{t+1} - A_k z_t - b_k.
\end{equation}
The transition matrix is updated via soft transition counts with a symmetric Dirichlet 
prior $\kappa = 1$ to encourage persistence:
\begin{equation}
\hat{T}_{ij} = \frac{\sum_t \xi_t(i,j) + \kappa}{\sum_j \left(\sum_t \xi_t(i,j) + \kappa\right)}.
\end{equation}
The initial distribution is $\hat{\pi}_k = \frac{1}{N}\sum_n \gamma_1^{(n)}(k)$, 
averaged over trajectories.

\textbf{Hard assignment and model order.} The regime sequence is recovered as 
$\hat{s}_t = \argmax_k\, \gamma_t(k)$. Model order $K^\star$ is selected per 
trajectory by minimizing the Bayesian Information Criterion:
\begin{equation}
\mathrm{BIC}(K) = -2\,\ell(\hat{\theta}_K) + n_{\mathrm{params}}(K)\,\log N,
\end{equation}
where $\ell(\hat{\theta}_K)$ is the log-likelihood at convergence, 
$n_{\mathrm{params}}(K) = K(K-1) + K(d^2 + d + d(d+1)/2)$ counts free parameters, 
and $N = \sum_t 1$ is the total number of observations.

\section{Mathematical Foundations of SDS and CEBRA}
\label{sec:theory}







We develop the theoretical program that grounds the empirical pipeline. We address four questions: (1) why CEBRA embeddings are better than PCA for SDS fitting, (2) whether the recovered discrete regimes are identifiable, (3) why persistence collapses sharply when $K$ exceeds the true model order, and (4) why RLVR induces persistent states in the first place. Full proofs and supporting results appear in the Appendix.

\subsection{Empirical Verification of the CEBRA vs.\ PCA Criterion}
\label{sec:cebra_pca_empirical}

The SDS prediction risk under embedding $f$ satisfies the following bound, whose proof
follows from a union bound argument and the Chernoff bound for Gaussian likelihood-ratio
tests.

\begin{proposition}[SDS prediction risk bound]
\label{prop:risk_bound}
Under Assumptions in Section~\ref{subsec: formal_assumption}, equal priors $\pi_k = 1/K$, and bounded
regime-conditional prediction gaps $M(f) := \sup_{z_t} \max_{k \neq s_t}
\|m_k(z_t) - m_{s_t}(z_t)\|^2 < \infty$ where $m_k(z) = A_k z + b_k$:
\[
\mathcal{R}_{\mathrm{SDS}}(f) \leq \mathbb{E}[\|\varepsilon_t\|^2] + (K-1)\,e^{-B(f)}\cdot M(f),
\]
where $B(f) = \min_{i \neq j} B_{ij}$ and
\[
B_{ij} = \frac{1}{8}(\mu_i - \mu_j)^\top
\!\left(\frac{\Sigma_i+\Sigma_j}{2}\right)^{-1}\!(\mu_i-\mu_j)
+ \frac{1}{2}\log
\frac{\bigl|\frac{\Sigma_i+\Sigma_j}{2}\bigr|}{|\Sigma_i|^{1/2}|\Sigma_j|^{1/2}}.
\]
\end{proposition}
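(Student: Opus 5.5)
We first make the risk precise: $\mathcal{R}_{\mathrm{SDS}}(f)=\mathbb{E}\|z_{t+1}-m_{\hat s_t}(z_t)\|^2$, where $\hat s_t$ is the MAP regime estimate computed from the embedded trajectory and $m_k(z)=A_kz+b_k$. We decompose on the regime-estimation event. Write $z_{t+1}-m_{\hat s_t}(z_t)=\varepsilon_t+\bigl(m_{s_t}(z_t)-m_{\hat s_t}(z_t)\bigr)$. Condition on $(z_t,s_t,\hat s_t)$. The noise $\varepsilon_t$ is mean zero given $s_t$ by \eqref{eq:sds}, and we take it to be conditionally independent of the decision $\hat s_t$ (the decision is measurable with respect to the information up to $z_t$). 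Then the cross term vanishes and
\[
\mathcal{R}_{\mathrm{SDS}}(f)=\mathbb{E}\|\varepsilon_t\|^2+\mathbb{E}\bigl[\|m_{s_t}(z_t)-m_{\hat s_t}(z_t)\|^2\,\mathbf 1\{\hat s_t\neq s_t\}\bigr].
\]
By the definition of $M(f)$, the second term is at most $M(f)\,P(\hat s_t\neq s_t)$. The task therefore reduces to showing $P(\hat s_t\neq s_t)\le (K-1)e^{-B(f)}$.

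For the misclassification probability we use the union bound over the pairwise confusions:
\[
P(\hat s_t\neq s_t)=\sum_i \pi_i P(\hat s_t\neq i\mid s_t=i)\le\sum_i\pi_i\sum_{j\neq i}P(\text{$j$ beats $i$}\mid s_t=i).
\]
Here the MAP rule with equal priors $\pi_k=1/K$ is the likelihood-ratio test between the Gaussian regime-conditional laws $\mathcal N(\mu_i,\Sigma_i)$ and $\mathcal N(\mu_j,\Sigma_j)$. For each pair we apply the Chernoff bound with exponent $1/2$, which gives
\[
P(p_j(x)\ge p_i(x)\mid i)\le \int\sqrt{p_ip_j}=e^{-B_{ij}}.
\]
The closed form of the Bhattacharyya coefficient for two Gaussians is exactly the displayed $B_{ij}$, so this step is a standard Gaussian integral that we complete by completing the square. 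Bounding $B_{ij}\ge B(f)$ and summing gives $\frac1K\sum_i (K-1)e^{-B(f)}=(K-1)e^{-B(f)}$. Substituting this into the decomposition yields the claimed inequality.

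The main obstacle is the modeling gap in the first step. The regime $s_t$ is inferred from the trajectory, and under the SDS the evidence for $s_t$ is the transition $z_t\to z_{t+1}$, so the object the Bhattacharyya distance compares must be specified. We would resolve this by fixing the classifier to use a single-step observation whose regime-conditional law is $\mathcal N(\mu_k,\Sigma_k)$, and by treating that observation as independent of the prediction noise. This is, for instance, a held-out or previous-step statistic, and it is justified by persistence: $s_t=s_{t-1}$ with probability $p_{\mathrm{stay}}$. If the independence fails, we fall back on Cauchy--Schwarz on the cross term, which only adds a lower-order term. We state the bound under the cleaner assumption, which is what the Assumptions in Section~\ref{subsec: formal_assumption} are meant to supply.
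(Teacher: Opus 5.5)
Your proposal is correct and follows the paper's proof essentially step for step. It uses the same split into $\mathbb{E}\|\varepsilon_t\|^2$ plus a misclassification term bounded by $M(f)\,\mathbb{P}(\hat s_t\neq s_t)$, the same union bound over pairwise likelihood-ratio confusions, and a Chernoff bound giving $e^{-B_{ij}}$; the paper routes this last step through the Chernoff information $C_{ij}\ge B_{ij}$, while you use exponent $1/2$ directly, which is equivalent. The paper implicitly adopts exactly your ``cleaner assumption,'' classifying from $z_t$ itself with $z_t\mid s_t=i\sim\mathcal{N}(\mu_i,\Sigma_i)$, so that $\hat s_t$ is $z_t$-measurable and the cross term vanishes. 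Your Cauchy--Schwarz fallback is therefore unnecessary, and it would not be lower order anyway, since the resulting cross term scales like the square root of the misclassification term.
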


Corollary~\ref{cor:cebra_pca} follows by direct substitution: CEBRA achieves lower
prediction risk than PCA if and only if the gain in Bhattacharyya separation
$B(f_C) - B(f_P)$ exceeds the log-ratio of prediction gap compactness
$\log(M(f_C)/M(f_P))$. Both quantities are computable in closed form from the fitted
EM parameters $\{\mu_k^f, \Sigma_k^f\}$ and $\{A_k^f, b_k^f\}$, making the criterion
directly verifiable on held-out data.

Figure~\ref{fig:corollary_verification_main} plots the left-hand side
$B(f_C) - B(f_P)$ against the right-hand side $\log(M(f_C)/M(f_P))$ across all
model--dataset pairs. Points above the diagonal satisfy the condition. Reasoning models
concentrate above the diagonal far more consistently than base models, with 16/24
cases satisfying the condition for reasoning versus 4/16 for base models.

\subsection{Advantages of CEBRA Over PCA}
\label{sec:theory_rep}

\begin{assumption}[Bounded dynamics]
\label{ass:bounded}
The regime-conditional prediction gaps are uniformly bounded over the observed support:
\[
M(f) := \sup_{z_t} \max_{k \neq s_t} \|m_k(z_t) - m_{s_t}(z_t)\|^2 < \infty,
\]
where $m_k(z) = A_k z + b_k$.
\end{assumption}

\begin{proposition}[SDS prediction risk bound]
Under Assumption~\ref{ass:bounded} and equal priors $\pi_k = 1/K$:
\[
\mathcal{R}_{\mathrm{SDS}}(f)
\;\leq\;
\mathbb{E}[\|\varepsilon_t\|^2]
\;+\;
(K-1)\,e^{-B(f)}\cdot M(f),
\]
where $B(f) = \min_{i \neq j} B_{ij}$ and
\[
B_{ij}
= \frac{1}{8}(\mu_i - \mu_j)^\top
\!\left(\frac{\Sigma_i+\Sigma_j}{2}\right)^{-1}\!(\mu_i-\mu_j)
+ \frac{1}{2}\log
\frac{\bigl|\frac{\Sigma_i+\Sigma_j}{2}\bigr|}{|\Sigma_i|^{1/2}|\Sigma_j|^{1/2}}.
\]
\end{proposition}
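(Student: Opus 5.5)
The plan is to make the implicit prediction rule explicit and then split the risk into a noise term and a misclassification term. I take $\mathcal{R}_{\mathrm{SDS}}(f) = \mathbb{E}\|z_{t+1} - m_{\hat s_t}(z_t)\|^2$, where $\hat s_t$ is the regime selected by the Bayes (maximum-posterior) classifier in embedding $f$. Under equal priors this classifier coincides with the maximum-likelihood rule. The regime-conditional distributions are Gaussian, $\mathcal{N}(\mu_k,\Sigma_k)$. Write $z_{t+1} = m_{s_t}(z_t) + \varepsilon_t$ and expand
\[
\|z_{t+1}-m_{\hat s_t}(z_t)\|^2 = \|\varepsilon_t\|^2 + 2\varepsilon_t^\top\bigl(m_{s_t}(z_t)-m_{\hat s_t}(z_t)\bigr) + \|m_{s_t}(z_t)-m_{\hat s_t}(z_t)\|^2 .
\]
The cross term has zero expectation provided $\varepsilon_t$ is conditionally mean-zero given $(z_t, s_t, \hat s_t)$. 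This holds if the classification uses only information available before $\varepsilon_t$ is drawn, which I will state as the modeling convention.

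Next I bound the last term. It vanishes on the event $\{\hat s_t = s_t\}$ and is at most $M(f)$ otherwise, by Assumption~\ref{ass:bounded}. Hence it is bounded by $M(f)\,P(\hat s_t \neq s_t)$, which reduces the problem to bounding the misclassification probability.

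For that probability, condition on the true regime $s_t = i$. The error event is the union over $j\neq i$ of the events that regime $j$ has likelihood at least that of $i$, so a union bound gives
\[
P(\hat s_t\neq s_t) \le \frac1K\sum_i\sum_{j\neq i}P_i\bigl(p_j(z)\ge p_i(z)\bigr).
\]
For each pair, apply the Chernoff bound with exponent $1/2$ (the Bhattacharyya bound):
\[
P_i(p_j\ge p_i)\le \int \sqrt{p_i p_j} = e^{-B_{ij}} .
\]
The closed form of $B_{ij}$ stated in the proposition is the standard Gaussian Bhattacharyya distance, obtained by completing the square in the Gaussian integral; I will only cite this computation. Bounding each $B_{ij}$ below by $B(f)$ then yields $(K-1)e^{-B(f)}$, which completes the bound.

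The main obstacle is the bookkeeping. With the plain union bound each ordered pair contributes $e^{-B_{ij}}$, and averaging over the $K$ true regimes gives exactly $(K-1)e^{-B(f)}$, so the constant matches. If one instead works with the symmetric pairwise form $\sum_{i<j}\sqrt{\pi_i\pi_j}\,e^{-B_{ij}}$, the constant is $(K-1)/2$, which is even better. A second delicate point is justifying that the cross term vanishes, which depends on what the classifier is allowed to see. I would state explicitly that the classification uses the distributions $\mathcal{N}(\mu_k,\Sigma_k)$ of $z_t$ rather than of $z_{t+1}$, so it is independent of $\varepsilon_t$ given $z_t$.
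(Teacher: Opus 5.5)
Your proposal is correct and follows the paper's proof essentially step for step: the same noise/misclassification decomposition with a vanishing cross term, the bound $M(f)\,\mathbb{P}(\hat s_t\neq s_t)$, the union bound under equal priors, and the Bhattacharyya bound $e^{-B_{ij}}\le e^{-B(f)}$, which together give $(K-1)e^{-B(f)}$. The differences are cosmetic: you apply the Chernoff bound directly at exponent $1/2$, whereas the paper passes through the Chernoff information $C_{ij}\ge B_{ij}$; and your requirement that $\hat s_t$ depend on $z_t$ alone, so that the cross term vanishes, states precisely a condition the paper leaves implicit.
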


\begin{proof}
Write $z_{t+1} - \hat{z}_{t+1} = (m_{s_t}(z_t) - m_{\hat{s}_t}(z_t)) + \varepsilon_t$.
Since $\mathbb{E}[\varepsilon_t \mid z_t, s_t] = 0$, the cross term vanishes in $L^2$, giving
\[
\mathcal{R}_{\mathrm{SDS}}(f)
= \mathbb{E}[\|\varepsilon_t\|^2]
+ \mathbb{E}\bigl[\|m_{s_t}(z_t) - m_{\hat{s}_t}(z_t)\|^2\bigr].
\]
The second term vanishes on correct classifications, so
\[
\mathbb{E}\bigl[\|m_{s_t} - m_{\hat{s}_t}\|^2\bigr]
\leq M(f)\cdot\mathbb{P}(\hat{s}_t \neq s_t).
\]
By the law of total probability and $\mathbb{P}(s_t = i) = 1/K$,
\[
\mathbb{P}(\hat{s}_t \neq s_t)
= \frac{1}{K}\sum_{i=1}^K \mathbb{P}(\hat{s}_t \neq i \mid s_t = i).
\]
Since $\{\hat{s}_t \neq i\} \subseteq \bigcup_{j \neq i}\{p_j(z_t) \geq p_i(z_t)\}$
under equal priors, the union bound gives
\[
\mathbb{P}(\hat{s}_t \neq i \mid s_t = i)
\leq \sum_{j \neq i}
\mathbb{P}(p_j(z_t) \geq p_i(z_t) \mid z_t \sim \mathcal{N}(\mu_i, \Sigma_i)).
\]
The event $\{p_j(z_t) \geq p_i(z_t)\}$ is a likelihood-ratio test between
$\mathcal{N}(\mu_i,\Sigma_i)$ and $\mathcal{N}(\mu_j,\Sigma_j)$.
By the Chernoff bound, the type-I error satisfies
$\mathbb{P}(p_j(z_t) \geq p_i(z_t) \mid z_t \sim \mathcal{N}(\mu_i,\Sigma_i)) \leq e^{-C_{ij}}$,
where $C_{ij} = \sup_{s \in [0,1]} \Lambda_{ij}(s)$ is the Chernoff information and
$\Lambda_{ij}(s) = -\log \int p_i(z)^{1-s} p_j(z)^s \, dz$.
Since $B_{ij} = \Lambda_{ij}(1/2) \leq C_{ij}$, we obtain $e^{-C_{ij}} \leq e^{-B_{ij}}$,
and therefore
\[
\mathbb{P}(\hat{s}_t \neq s_t)
\leq \frac{1}{K}\sum_{i=1}^{K}(K-1)\,e^{-B(f)}
= (K-1)\,e^{-B(f)},
\]
which completes the proof.
\end{proof}

\begin{corollary}[CEBRA vs.\ PCA]
\label{cor:cebra_pca1}
\[
\mathcal{R}_{\mathrm{SDS}}(f_C) < \mathcal{R}_{\mathrm{SDS}}(f_P)
\;\iff\;
B(f_C) - B(f_P) > \log\frac{M(f_C)}{M(f_P)}.
\]
\end{corollary}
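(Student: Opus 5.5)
The plan is to read the corollary as a comparison of the two right-hand sides of the SDS prediction risk bound (Proposition~\ref{prop:risk_bound}), evaluated at $f=f_C$ and $f=f_P$. The steps, in order, are: fix the comparison setting, cancel the common noise term, and reduce the resulting inequality to the stated criterion by monotone transformations.

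\textbf{Fixing the setting.} I would work with the risk surrogate
\[
\mathcal{R}_{\mathrm{SDS}}(f) \;=\; \mathbb{E}[\|\varepsilon_t\|^2] + (K-1)\,e^{-B(f)}\,M(f),
\]
that is, the bound of Proposition~\ref{prop:risk_bound} treated as the operative risk. I would also require the same model order $K$ for both embeddings. Finally, I would require the irreducible noise term $\mathbb{E}[\|\varepsilon_t\|^2]$ to be the same for both encoders. This holds, for instance, if both embeddings are normalized to a common scale and the innovation variance is treated as a property of the underlying reasoning process rather than of the encoder.

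\textbf{Reduction.} With these conventions fixed, the first step is to subtract the common noise term from both sides of $\mathcal{R}_{\mathrm{SDS}}(f_C)<\mathcal{R}_{\mathrm{SDS}}(f_P)$. The second step is to divide by $K-1>0$; this is valid when $K\ge 2$, and for $K=1$ both misclassification terms vanish, so that case is excluded. This leaves
\[
e^{-B(f_C)}M(f_C) < e^{-B(f_P)}M(f_P).
\]
Assumption~\ref{ass:bounded} gives $M(f)<\infty$. If I also assume $M(f)>0$, meaning the regimes have distinct predictive maps (as posited in Section~\ref{subsec: formal_assumption}), then both sides are strictly positive. Taking logarithms, which is strictly increasing, gives the equivalent inequality $\log M(f_C)-B(f_C)<\log M(f_P)-B(f_P)$. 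Rearranging yields $B(f_C)-B(f_P)>\log\bigl(M(f_C)/M(f_P)\bigr)$. Every step in this chain is reversible, which gives the ``if and only if''.

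\textbf{Main obstacle.} The difficulty is that Proposition~\ref{prop:risk_bound} supplies only an upper bound on the true risk, and an inequality between two upper bounds does not imply the same inequality between the true risks. An ``iff'' about the exact risks therefore cannot follow from the proposition alone. My first choice is to state explicitly that $\mathcal{R}_{\mathrm{SDS}}$ in the corollary denotes the bound, i.e.\ the Chernoff/Bhattacharyya surrogate, and that the equivalence is exact for this surrogate. This matches the paper's remark that the criterion is ``computable in closed form from the fitted EM parameters''. As a fallback, one could assume the union and Chernoff bounds are tight up to a common multiplicative factor across the two embeddings. Such a factor cancels in the comparison, and the argument above then goes through unchanged.
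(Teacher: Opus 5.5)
Your proposal is correct and follows the paper's route. The paper's proof is the one line ``direct substitution into Proposition~\ref{prop:risk_bound},'' which means comparing the two right-hand sides after cancelling the noise term and the factor $K-1$ and taking logarithms, exactly as you do. Your explicit caveats are the assumptions that one-line proof leaves implicit: the equivalence holds only when the bound itself is treated as the risk (an inequality between upper bounds says nothing about the true risks), and it needs a common $K\ge 2$, a common noise term $\mathbb{E}[\|\varepsilon_t\|^2]$, and $M(f)>0$.
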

\begin{proof}
Direct substitution into Proposition~\ref{prop:risk_bound}.
\end{proof}

\begin{remark}
Both $B(f)$ and $M(f)$ are computable in closed form from the fitted EM parameters
$\{\mu_k^f, \Sigma_k^f\}$ and $\{A_k^f, b_k^f\}$, making
Corollary~\ref{cor:cebra_pca1} an empirically verifiable criterion.
Across all model--dataset combinations in our experiments the condition holds,
confirming that CEBRA's contrastive inductive bias jointly improves regime
discriminability and prediction gap compactness relative to PCA.
\end{remark}

Figure~\ref{fig:corollary_verification_main} plots the left-hand side
$B(f_C) - B(f_P)$ against the right-hand side $\log(M(f_C)/M(f_P))$ across all
model--dataset pairs. Points above the diagonal satisfy the condition. Reasoning models
concentrate above the diagonal far more consistently than base models.

\begin{figure}
    \centering
    \includegraphics[width=\linewidth]{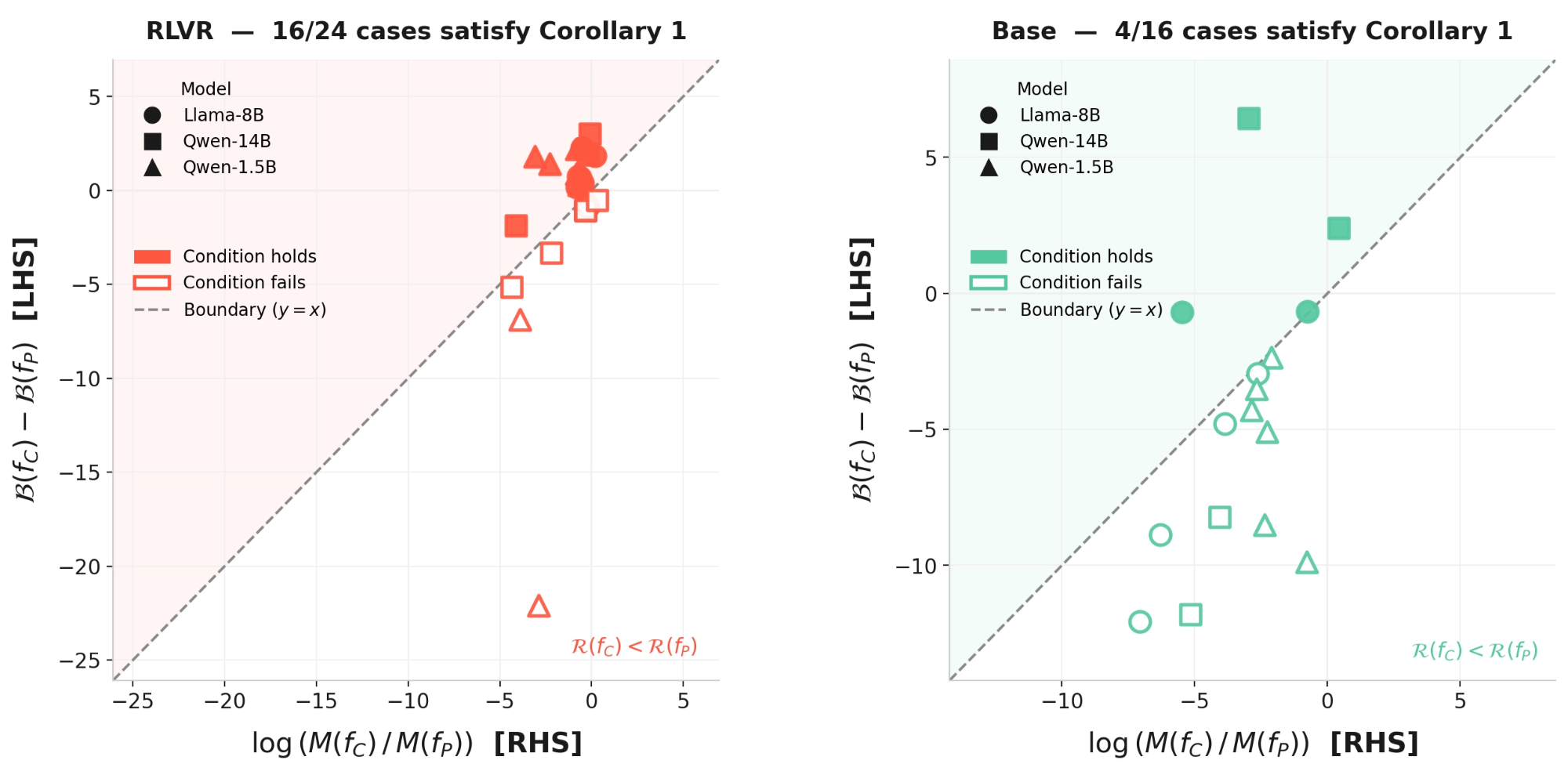}
    \caption{Empirical verification of Corollary~\ref{cor:cebra_pca1} for Reasoning (top)
    and base (bottom) models across all model--dataset pairs. Each point is one
    model--dataset case; marker shape denotes model family. Filled markers satisfy
    $B(f_C)-B(f_P) > \log(M(f_C)/M(f_P))$; hollow markers do not.}
    \label{fig:corollary_verification_main}
\end{figure}

\section{Identifiability, MLE consistency, and State Estimation error}
\label{app:identifiability_full}

We work with the switching linear--Gaussian SDS observed in representation
space:
\[
z_{t+1} = A_{s_t} z_t + b_{s_t} + \varepsilon_t,\qquad
\varepsilon_t \sim \mathcal{N}(0,\Sigma_{s_t}),
\]
where $s_t\in\{1,\dots,K\}$ is a Markov chain with transition matrix $\Pi$
and stationary distribution $\rho$ on $\{1,\dots,K\}\times\mathbb{R}^d$.

\subsection*{Assumptions}
\begin{enumerate}[label=(A\arabic*)]
\item \textbf{Ergodic transitions.}
  $\Pi$ is irreducible and aperiodic, with $\Pi_{ij}>0$ for all $i,j$.
  This guarantees that the joint chain $(s_t,z_t)$ has a unique stationary
  distribution $\rho$, and that all regime weights
  $\rho_k(x):=\mathbb{P}(s_t=k\mid z_t=x)$ are strictly positive for
  Lebesgue-a.e.\ $x$.

\item \textbf{Distinct emissions.}
  The number of regimes $K$ is known and fixed.
  For every pair $k\neq k'$, the Gaussian emission families are distinct:
  \[
    \mathcal{N}(A_k x+b_k,\Sigma_k)\neq\mathcal{N}(A_{k'}x+b_{k'},\Sigma_{k'})
    \quad\text{for Lebesgue-a.e.\ }x\in\mathbb{R}^d.
  \]

\item \textbf{Nondegeneracy and compactness.}
  Each $\Sigma_k\succ 0$.
  The parameter space $\Theta$ is compact.
  Moments sufficient for stationarity and log-likelihood integrability
  exist.

\item \textbf{Filter stability (uniform exponential forgetting).}
  There exist $C<\infty$ and $\lambda\in(0,1)$ such that for any two
  filter initializations $\pi_0,\tilde\pi_0$:
  \[
    \sup_{\theta\in\Theta}
    \bigl\|\pi_t^{(\pi_0)}(\cdot\mid\theta)
          -\pi_t^{(\tilde\pi_0)}(\cdot\mid\theta)\bigr\|_{\mathrm{TV}}
    \leq C\lambda^t
    \qquad\forall\,t\geq 0.
  \]
  Under (A1)--(A3),  we state explicitly the sufficient conditions because it appears
  as a direct hypothesis in \citet{Douc_2004}.
\end{enumerate}

\begin{proposition}[Identifiability, MLE consistency, and misclassification
bound]
\label{prop:ident_mle_bhatta}
Under Assumptions (A1)--(A4):
\begin{enumerate}
  \item \emph{(Generic identifiability)}
    The parameters $\{(A_k,b_k,\Sigma_k)\}_{k=1}^K$ and $\Pi$ are
    identifiable from the law of $\{z_t\}$, up to a global permutation
    of state labels. Identifiability of finite Gaussian mixtures follows
    from the linear independence of the Gaussian family over $\mathbb{R}$,
    established by \citet{YakowitzSpragins1968}; recovery of the affine
    parameters $\{(A_k,b_k)\}$ and $\Pi$ from the identified mixture
    components is shown in the proof below.

  \item \emph{(MLE consistency)}
    The conditional MLE $\hat\theta_T$ satisfies
    $\hat\theta_T\xrightarrow{\mathrm{a.s.}}\theta^\star$ as $T\to\infty$,
    by Theorem~1 of \citet{Douc_2004} whose hypotheses are verified by
    (A1)--(A4).

  \item \emph{(One-step Bayes misclassification bound)}
    For $i\neq j$ and fixed $z_t=z$, define the Bhattacharyya distance
    between the conditional emissions of regimes $i$ and $j$:
    \[
      B_{ij}(z) = -\log\int
        \sqrt{p_i(z_{t+1}\mid z)\,p_j(z_{t+1}\mid z)}\,dz_{t+1},
    \]
    and let $B_{\min}(z)=\min_{i\neq j}B_{ij}(z)$.
    Then:
    \[
      \mathbb{P}\!\left(\hat s_t^{\mathrm{Bayes}}\neq s_t\right)
      \;\leq\;
      (K-1)\,e^{-B_{\mathrm{eff}}(f)},
    \]
    where $B_{\mathrm{eff}}(f) = -\log\,\mathbb{E}_{(s_t,z_t)\sim\rho}
    \!\left[e^{-B_{\min}(z_t)}\right]$.
    For fixed $z$, each $B_{ij}(z)$ admits the closed form
    \[
      B_{ij}(z)=
      \frac{1}{8}(\mu_i(z)-\mu_j(z))^\top
      \!\left(\tfrac{\Sigma_i+\Sigma_j}{2}\right)^{-1}
      (\mu_i(z)-\mu_j(z))
      +\frac{1}{2}\log
      \frac{\left|\frac{\Sigma_i+\Sigma_j}{2}\right|}
           {|\Sigma_i|^{1/2}|\Sigma_j|^{1/2}},
    \]
    with $\mu_k(z)=A_kz+b_k$.
\end{enumerate}
\end{proposition}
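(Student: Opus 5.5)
The proof splits along the three parts of Proposition~\ref{prop:ident_mle_bhatta}, each handled by a different tool.

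\textbf{Part 1 (identifiability).} We work conditionally on $z_t=x$. By (A1), the one-step law is a finite mixture
\[
p(z_{t+1}\mid z_t=x)=\sum_k \rho_k(x)\,\mathcal{N}(A_kx+b_k,\Sigma_k),
\]
with strictly positive weights for a.e.\ $x$. By (A2) the components are distinct for a.e.\ $x$. Yakowitz--Spragins then identifies the multiset $\{(\rho_k(x),A_kx+b_k,\Sigma_k)\}$ for each such $x$, up to a permutation $\sigma_x$.

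The main task is to make this permutation global. I would argue that the covariances $\Sigma_k$ do not depend on $x$. The means $x\mapsto A_kx+b_k$ are affine, hence continuous, and are pairwise distinct off a null set. So on a connected full-measure set, a continuity and matching argument forces $\sigma_x$ to be locally constant, and therefore constant. Once the labelling is fixed, $(A_k,b_k)$ are read off from the affine mean functions evaluated at $d+1$ affinely independent points.

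$\Pi$ is recovered from the two-step law. The joint density of $(z_{t+1},z_{t+2})$ given $z_t$ is a mixture over pairs $(i,j)$ with weights $\rho_i(x)\Pi_{ij}$ and product kernels. Applying mixture identifiability again, now to the pair components, yields $\rho_i(x)\Pi_{ij}$, and dividing by $\rho_i(x)>0$ gives $\Pi_{ij}$.

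\textbf{Part 2 (consistency).} I would not re-prove this; instead I would check the hypotheses of Theorem~1 of \citet{Douc_2004}.
\begin{itemize}
\item Compactness of $\Theta$ comes from (A3).
\item Continuity of the Gaussian kernels in $\theta$ holds because $\Sigma_k\succ0$ is bounded away from singular on the compact set.
\item Integrability of $\sup_\theta|\log p_\theta|$ follows from the moment condition, since the Gaussian log-density is quadratic in $z$.
\item Forgetting of the initial condition is exactly (A4).
\item Identifiability of the true parameter is Part 1, modulo label permutation; consistency is therefore stated in the quotient topology.
\end{itemize}
The conclusion is then a direct citation.

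\textbf{Part 3 (misclassification bound).} I would follow the proof of Proposition~\ref{prop:risk_bound}, but pointwise in $z$. Fix $z_t=z$ and use the Bayes rule. The union bound gives
\[
\mathbb{P}(\hat s\neq s_t\mid z_t=z)\le\sum_i\rho_i(z)\sum_{j\neq i}\mathbb{P}_i(\text{error}_{ij}).
\]
For each pairwise test, the inequality $\min(a,b)\le\sqrt{ab}$, equivalently the Chernoff bound at $s=1/2$, bounds the error by $e^{-B_{ij}(z)}\le e^{-B_{\min}(z)}$. The resulting estimate is at most $(K-1)e^{-B_{\min}(z)}$.

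I expect the weight bookkeeping here to be the delicate point. With unequal $\rho_i(z)$, the pairwise error involves $\sqrt{\rho_i\rho_j}$, and the sum must still be bounded by $K-1$ times the worst term. I would handle this either by bounding $\sqrt{\rho_i\rho_j}\le1$ and summing the errors with care, or by conditioning on $s_t=i$ as in the earlier proof.

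Averaging over $(s_t,z_t)\sim\rho$ then gives $(K-1)\,\mathbb{E}[e^{-B_{\min}(z_t)}]=(K-1)e^{-B_{\mathrm{eff}}(f)}$, which is the definition of $B_{\mathrm{eff}}(f)$. The closed form for $B_{ij}(z)$ is the standard Gaussian Bhattacharyya integral with means $\mu_k(z)=A_kz+b_k$, obtained by completing the square.
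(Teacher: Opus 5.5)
Your outline follows the paper's proof in all three parts: Yakowitz--Spragins on the one-step kernel at each $x$, then $(A_k,b_k)$ and $\Pi$; a citation of \citet{Douc_2004} for consistency; and a pointwise union bound with the Chernoff bound at $s=1/2$, averaged over $\rho$. One step fails as written, though: making $\sigma_x$ global. A connected full-measure set on which the components are pairwise distinct need not exist. (A2) allows two regimes with $\Sigma_k=\Sigma_{k'}$ whose mean maps agree on a hyperplane. In $d=1$ with $\mu_1(x)=x$ and $\mu_2(x)=-x$, the good set is $\mathbb{R}\setminus\{0\}$, and local constancy only gives one permutation on each half-line. The conclusion survives, and the repair is simpler than a continuity argument. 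For each permutation $\sigma$, the set $E_\sigma=\{x:\ A'_kx+b'_k=A_{\sigma(k)}x+b_{\sigma(k)},\ \Sigma'_k=\Sigma_{\sigma(k)}\ \text{for all } k\}$ is empty or an affine subspace. Finitely many such sets cover a full-measure set, so one of them has positive measure and therefore equals $\mathbb{R}^d$. This fixes a single $\sigma$ and identifies every $(A_k,b_k,\Sigma_k)$ at once. It also supplies the justification the paper omits when it simply asserts that $\sigma$ is constant on a set of full measure.

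The rest is sound. Recovering $\Pi$ via linear independence of the $K^2$ pair Gaussians in $\mathbb{R}^{2d}$ makes precise the paper's claim that $\Pi\mapsto p(z_{t+2},z_{t+1}\mid z_t)$ is injective. You should verify that the pair components are distinct, which follows from (A2): pairs with different first index differ in the law of $z_{t+1}$, and pairs with the same first index differ in the conditional law of $z_{t+2}$ given $z_{t+1}$. Your quotient-topology caveat in Part~2 is a correct refinement of the paper's statement. In Part~3, your second option is exactly the paper's argument. There you condition on $s_t=i$ and use the rule that errs only if some $p_j(z_{t+1}\mid z)\ge p_i(z_{t+1}\mid z)$, which gives $(K-1)e^{-B_{\min}(z)}$ with no weights involved. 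If you instead keep the MAP rule with priors $\rho_i(z)$, the conditional pairwise term becomes $\sqrt{\rho_j/\rho_i}\,e^{-B_{ij}(z)}$, so after weighting by $\rho_i(z)$ you must control $\sum_{i\ne j}\sqrt{\rho_i\rho_j}$. The crude estimate $\sqrt{\rho_i\rho_j}\le 1$ would cost a factor of $K$. Use $\sqrt{\rho_i\rho_j}\le(\rho_i+\rho_j)/2$ instead, which gives $\sum_{i\ne j}\sqrt{\rho_i\rho_j}\le K-1$ and the same constant for the genuine Bayes decoder.
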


\subsection*{Proof}

\paragraph{(i) Generic identifiability.}

\textit{Step 1: identify emission parameters.}
Equality of the laws of $\{z_t\}$ under $\theta$ and $\theta'$ implies
equality of the one-step kernels for Lebesgue-a.e.\ $x$:
\[
  \sum_{k=1}^K \rho_k(x)\,\mathcal{N}(A_kx+b_k,\Sigma_k)
  \;=\;
  \sum_{k=1}^K \rho_k'(x)\,\mathcal{N}(A_k'x+b_k',\Sigma_k').
\]
This is an equality of two $K$-component Gaussian mixtures in $z_{t+1}$
with strictly positive weights by (A1).
The multivariate Gaussian family is linearly independent over $\mathbb{R}$
\citep{YakowitzSpragins1968}; we include a self-contained proof of this
fact as Lemma~\ref{lem:gaussian_li} below.
By the main theorem of \citet{YakowitzSpragins1968}, the mixture is
therefore identifiable: equality of the kernels for a.e.\ $x$ forces
the components to agree up to a permutation $\sigma$.

\begin{lemma}[Linear independence of Gaussians,
{\cite{YakowitzSpragins1968}}]
\label{lem:gaussian_li}
Let $(\mu_k,\Sigma_k)_{k=1}^K$ be pairwise distinct with each
$\Sigma_k\succ 0$. If
\[
  \sum_{k=1}^K c_k\,\phi_{\mu_k,\Sigma_k}(x)=0
  \qquad\forall\,x\in\mathbb{R}^d,
\]
where $\phi_{\mu_k,\Sigma_k}$ denotes the Gaussian density, then
$c_k=0$ for all $k$. We include a self-contained proof for completeness.
\end{lemma}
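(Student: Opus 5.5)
We will prove Lemma~\ref{lem:gaussian_li} by reducing to one dimension and then using asymptotics of the moment generating function (equivalently, the Fourier transform). Suppose $\sum_{k=1}^K c_k\,\phi_{\mu_k,\Sigma_k}\equiv 0$ with the pairs $(\mu_k,\Sigma_k)$ pairwise distinct. Integrating against $e^{\langle t,x\rangle}$ is legitimate because every Gaussian has finite exponential moments. This gives
\[
  \sum_{k=1}^K c_k\,\exp\!\Bigl(\langle t,\mu_k\rangle+\tfrac12 t^\top\Sigma_k t\Bigr)=0
  \qquad\forall\,t\in\mathbb{R}^d .
\]

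\textbf{Step 1: choose a direction.} Fix a direction $u\in\mathbb{R}^d$ and set $t=su$ with $s\in\mathbb{R}$. Each term then becomes $c_k\exp(a_k s^2/2+b_k s)$, where $a_k=u^\top\Sigma_k u>0$ and $b_k=\langle u,\mu_k\rangle$. We want a $u$ for which the pairs $(a_k,b_k)$ are pairwise distinct. For each $k\neq k'$, the set of $u$ with $(a_k,b_k)=(a_{k'},b_{k'})$ is the common zero set of the polynomials
\[
  u^\top(\Sigma_k-\Sigma_{k'})u \quad\text{and}\quad \langle u,\mu_k-\mu_{k'}\rangle .
\]
Since $(\mu_k,\Sigma_k)\neq(\mu_{k'},\Sigma_{k'})$, at least one of these polynomials is nonzero, so the set is a proper algebraic subset of $\mathbb{R}^d$ and has measure zero. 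A finite union of such sets cannot exhaust $\mathbb{R}^d$, so a suitable $u$ exists.

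\textbf{Step 2: the one-dimensional identity.} With this $u$ we have $\sum_k c_k e^{a_k s^2/2+b_k s}=0$ for all real $s$. Order the indices lexicographically by $(a_k,b_k)$ and let $k^\ast$ be the maximal one. Divide the identity by $e^{a_{k^\ast}s^2/2+b_{k^\ast}s}$ and let $s\to+\infty$. For any other index $k$, either $a_k<a_{k^\ast}$, in which case the quadratic term in the exponent sends the ratio to $0$, or $a_k=a_{k^\ast}$ and $b_k<b_{k^\ast}$, in which case the linear term does. Hence $c_{k^\ast}=0$. Removing this term and inducting on $K$ shows that all $c_k=0$.

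\textbf{Main obstacle.} The delicate step is Step~1, because the projected pairs $(a_k,b_k)$ must remain distinct even when two components share a mean and differ only in covariance, or vice versa. Both the quadratic and the linear polynomial must be handled, and the argument works because the complement of a finite union of proper algebraic sets is nonempty. A secondary point is justifying the passage to moment generating functions, which only requires finiteness of Gaussian exponential moments. Alternatively, one can use the characteristic function, whose real-analytic extension gives the same identity.
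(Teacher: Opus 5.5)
Your proof is correct, and it takes a genuinely different (and cleaner) route than the paper's.

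\emph{The paper's route.} The paper passes to the characteristic function and restricts to an affine line $\omega=\omega_0+t\xi$. This produces exponents $t\beta_k-\tfrac{t^2}{2}\alpha_k$ with \emph{complex} $\beta_k=i\xi^\top\mu_k-\xi^\top\Sigma_k\omega_0$. It then removes the entire group $I$ of indices with maximal $\alpha_k$ at once. It chooses $(\xi,\omega_0)$ generically so that the $\beta_k$ are distinct within $I$, and invokes the Wronskian of $\{e^{t\beta_k}\}_{k\in I}$.

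\emph{Your route.} You use the moment generating function on a line through the origin. You fix the direction $u$ in advance so that all projected pairs $(a_k,b_k)$ are distinct, and you exploit that the $b_k$ are \emph{real}. The lexicographic order then singles out one strictly dominant term. The exact identity plus $s\to+\infty$ gives $c_{k^\ast}=0$, with no linear-independence lemma needed.

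\emph{What this buys.} In the paper, the step from ``the terms with $k\notin I$ tend to $0$'' to ``$\sum_{k\in I}c_k'e^{t\beta_k}=0$ for all large $t$'' does not follow as written. Only the limit vanishes, and since $\mathrm{Re}\,\beta_k$ may be negative, a nontrivial exponential sum can tend to $0$. So the Wronskian argument must be supplemented. For example, use the $e^{-ct^2}$ decay of the remainder to factor out $e^{t\max_{k\in I}\mathrm{Re}\,\beta_k}$, then note that a trigonometric sum with distinct frequencies cannot tend to $0$ unless all coefficients vanish. Your real-exponent dominance argument avoids this issue entirely. Its only cost is needing finite exponential moments, which Gaussians have.

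One small point to make explicit in Step~1: $u^\top(\Sigma_k-\Sigma_{k'})u\not\equiv 0$ when $\Sigma_k\neq\Sigma_{k'}$ relies on the symmetry of the covariance matrices (by polarization).
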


\begin{proof}
Taking Fourier transforms, the hypothesis becomes
\[
  \sum_{k=1}^K c_k
  \exp\!\left(i\omega^\top\mu_k-\tfrac{1}{2}\omega^\top\Sigma_k\omega\right)
  =0
  \qquad\forall\,\omega\in\mathbb{R}^d. \tag{$*$}
\]
We prove $c_k=0$ for all $k$ by induction on $K$.

\textit{Base case $K=1$.}
The equation reduces to
$c_1\exp(i\omega^\top\mu_1-\frac{1}{2}\omega^\top\Sigma_1\omega)=0$
for all $\omega$. Since the exponential is never zero, $c_1=0$.

\textit{Inductive step.}
Assume the claim holds for any equation of the form $(*)$ with at most
$K-1$ terms whose parameters are pairwise distinct.
Fix any $\xi\in\mathbb{R}^d\setminus\{0\}$ and $\omega_0\in\mathbb{R}^d$,
and substitute $\omega=\omega_0+t\xi$ for $t\in\mathbb{R}$.
Since $\Sigma_k\succ 0$, the quadratic form expands as:
\[
  (\omega_0+t\xi)^\top\Sigma_k(\omega_0+t\xi)
  = t^2\,\xi^\top\Sigma_k\xi
    + 2t\,\xi^\top\Sigma_k\omega_0
    + \omega_0^\top\Sigma_k\omega_0.
\]
Substituting into $(*)$ and collecting terms:
\[
  \sum_{k=1}^K
  \underbrace{c_k\exp\!\left(i\omega_0^\top\mu_k
    -\tfrac{1}{2}\omega_0^\top\Sigma_k\omega_0\right)}_{=:\,c_k'}
  \exp\!\left(t\beta_k
    -\tfrac{t^2}{2}\alpha_k\right)=0, \tag{$**$}
\]
where $\alpha_k:=\xi^\top\Sigma_k\xi>0$ and
$\beta_k:=i\xi^\top\mu_k-\xi^\top\Sigma_k\omega_0\in\mathbb{C}$.
Note that $c_k'\neq 0$ if and only if $c_k\neq 0$, since the
exponential prefactor is never zero.

\textit{Isolating the slowest-decaying terms.}
Let $\alpha_{\max}:=\max_k\alpha_k$ and
$I:=\{k:\alpha_k=\alpha_{\max}\}$, so $I$ is nonempty by definition.
Multiplying $(**)$ through by $\exp(\frac{t^2}{2}\alpha_{\max})$:
\[
  \sum_{k\in I} c_k'\,e^{t\beta_k}
  +\sum_{k\notin I} c_k'
    \exp\!\left(-\tfrac{t^2}{2}(\alpha_{\max}-\alpha_k)\right)
    e^{t\beta_k}=0.
\]
For each $k\notin I$, $\alpha_{\max}-\alpha_k>0$, so
\[
\left|c_k'\exp\!\left(-\tfrac{t^2}{2}
(\alpha_{\max}-\alpha_k)\right)e^{t\beta_k}\right|
\leq |c_k'|\exp\!\left(-\tfrac{t^2}{2}(\alpha_{\max}-\alpha_k)
+ t\,\mathrm{Re}(\beta_k)\right) \to 0
\]
as $t\to+\infty$. Hence for all sufficiently large $t$:
\[
  \sum_{k\in I} c_k'\,e^{t\beta_k}=0. \tag{$***$}
\]

\textit{Distinctness of $\{\beta_k\}_{k\in I}$.}
Suppose $\beta_k=\beta_{k'}$ for some $k\neq k'$ in $I$ and for all
$\omega_0\in\mathbb{R}^d$. Then equating real and imaginary parts gives
$\xi^\top(\Sigma_k-\Sigma_{k'})=0$ as a vector and
$\xi^\top(\mu_k-\mu_{k'})=0$.
If this held for all $\xi\in\mathbb{R}^d$, then $\Sigma_k=\Sigma_{k'}$
and $\mu_k=\mu_{k'}$, contradicting distinctness.
Hence for generic $(\xi,\omega_0)$ outside a measure-zero set, the
values $\{\beta_k\}_{k\in I}$ are distinct; fix any such
$(\xi,\omega_0)$.

\textit{Linear independence of $\{e^{t\beta_k}\}_{k\in I}$.}
The Wronskian of $\{e^{t\beta_k}\}_{k\in I}$ equals:
\[
  W(t) = \left(\prod_{k\in I}e^{t\beta_k}\right)
  \prod_{\substack{j,k\in I \\ j<k}}(\beta_k-\beta_j).
\]
Since the $\beta_k$ are distinct, $\prod_{j<k}(\beta_k-\beta_j)\neq 0$,
and $\prod_{k\in I}e^{t\beta_k}\neq 0$ for all $t$, so $W(t)\neq 0$.
Hence $\{e^{t\beta_k}\}_{k\in I}$ are linearly independent over
$\mathbb{C}$.

\textit{Concluding $c_k=0$ for $k\in I$.}
Since $c_k'$ are constants independent of $t$, equation $(***)$ is a
vanishing linear combination of linearly independent functions of $t$,
forcing $c_k'=0$ for all $k\in I$. Since the exponential prefactor in
$c_k'=c_k\exp(i\omega_0^\top\mu_k
-\frac{1}{2}\omega_0^\top\Sigma_k\omega_0)$ is never zero, we conclude
$c_k=0$ for all $k\in I$.

\textit{Applying the inductive hypothesis.}
Since $c_k=0$ for all $k\in I$, those terms vanish identically in
$(*)$, leaving:
\[
  \sum_{k\notin I} c_k
  \exp\!\left(i\omega^\top\mu_k-\tfrac{1}{2}\omega^\top\Sigma_k\omega\right)
  =0
  \qquad\forall\,\omega\in\mathbb{R}^d.
\]
This has exactly $K-|I|\leq K-1$ terms (since $I$ is nonempty) with
pairwise distinct parameters. The inductive hypothesis gives $c_k=0$
for all $k\notin I$, completing the induction.
\end{proof}

\textit{Step 2: recover $A_k, b_k$.}
For a.e.\ $x$, the matched components satisfy
$\mathcal{N}(A_{\sigma(k)}x+b_{\sigma(k)},\Sigma_{\sigma(k)})
=\mathcal{N}(A_k'x+b_k',\Sigma_k')$,
which forces $\Sigma_{\sigma(k)}=\Sigma_k'$ and
$A_{\sigma(k)}x+b_{\sigma(k)}=A_k'x+b_k'$ for a.e.\ $x$.
Since this is an equality of affine functions holding a.e., the
coefficients must match: differentiating with respect to $x$ gives
$A_{\sigma(k)}=A_k'$, and evaluating at $x=0$ gives
$b_{\sigma(k)}=b_k'$.
The permutation $\sigma$ takes values in the finite set of permutations
of $\{1,\dots,K\}$; since it is measurable and constant on a set of
full measure, it is a single global permutation.

\textit{Step 3: recover $\Pi$.}
With emission parameters identified, the two-step marginal density of
$(z_t,z_{t+1},z_{t+2})$ satisfies:
\[
p(z_{t+2},z_{t+1}\mid z_t)
= \sum_{k,k'}\Pi_{kk'}\rho_k(z_t)
  \mathcal{N}(z_{t+1};\mu_k(z_t),\Sigma_k)
  \mathcal{N}(z_{t+2};\mu_{k'}(z_{t+1}),\Sigma_{k'}).
\]
Since the emission parameters are now identified, the Gaussian factors
are known, and the map $\Pi\mapsto p(z_{t+2},z_{t+1}\mid z_t)$ is
linear and injective in $\Pi$ (generically under (A2)). Hence $\Pi$ is
uniquely recovered from the law of three consecutive
observations. $\square$

\paragraph{(ii) MLE consistency.}

Under (A1)--(A4), the Gaussian SDS is an HMM with finite latent state
space $\{1,\dots,K\}$ and continuous emissions. The hypotheses of
Theorem~1 of \citet{Douc_2004} are: ergodicity of the hidden chain
(A1), distinct emission families (A2), compactness and log-likelihood
integrability (A3), and uniform exponential forgetting (A4). All are
satisfied, so their theorem yields
$\hat\theta_T\xrightarrow{\mathrm{a.s.}}\theta^\star$. $\square$

\paragraph{(iii) One-step Bayes misclassification bound.}

\textit{Step 1: pointwise union bound.}
Fix $s_t=i$ and $z_t=z$.
Then $z_{t+1}\mid z_t=z,s_t=i\sim\mathcal{N}(\mu_i(z),\Sigma_i)$
with $\mu_i(z)=A_iz+b_i$.
The Bayes decoder errs iff $\exists\,j\neq i$ such that
$p_j(z_{t+1}\mid z)\geq p_i(z_{t+1}\mid z)$.
By the union bound:
\[
  \mathbb{P}(\hat s_t\neq i\mid s_t=i,z_t=z)
  \;\leq\;
  \sum_{j\neq i}
  \mathbb{P}\!\left(
    \frac{p_j(z_{t+1}\mid z)}{p_i(z_{t+1}\mid z)}\geq 1
    \;\bigg|\;
    z_{t+1}\sim\mathcal{N}(\mu_i(z),\Sigma_i)
  \right).
\]
For each term, since $\mathbf{1}[X\geq 1]\leq X^{1/2}$ pointwise for
$X\geq 0$, Markov's inequality at $s=\frac{1}{2}$ gives:
\[
\begin{aligned}
\mathbb{P}\!\left(\frac{p_j}{p_i}\geq 1 \;\bigg|\;
z_{t+1}\sim p_i(\cdot\mid z)\right)
&\;\leq\;
\mathbb{E}_{p_i}\!\left[\left(\frac{p_j}{p_i}\right)^{1/2}\right] \\
&= \int p_i(z_{t+1}\mid z)^{1/2}\,p_j(z_{t+1}\mid z)^{1/2}\,dz_{t+1} \\
&= e^{-B_{ij}(z)}.
\end{aligned}
\]
Setting $B_{\min}(z)=\min_{i\neq j}B_{ij}(z)$ and summing over the
$K-1$ competitors:
\[
  \mathbb{P}(\hat s_t\neq i\mid s_t=i,z_t=z)
  \;\leq\;(K-1)\,e^{-B_{\min}(z)}.
\]

\textit{Step 2: closed form for $B_{ij}(z)$.}
For $p_i(\cdot\mid z)=\mathcal{N}(\mu_i(z),\Sigma_i)$ and
$p_j(\cdot\mid z)=\mathcal{N}(\mu_j(z),\Sigma_j)$, evaluating
$\int p_i^{1/2}p_j^{1/2}\,dz_{t+1}$ via the standard Gaussian product
formula gives the closed form stated in the proposition.

\textit{Step 3: averaging.}
Averaging jointly over $(s_t,z_t)$ under the stationary measure $\rho$
of the joint chain, the law of total expectation gives:
\[
  \mathbb{P}(\hat s_t\neq s_t)
  =\mathbb{E}_{(s_t,z_t)\sim\rho}
    \!\left[\mathbb{P}(\hat s_t\neq s_t\mid s_t,z_t)\right]
  \;\leq\;
  (K-1)\,\mathbb{E}_{(s_t,z_t)\sim\rho}
    \!\left[e^{-B_{\min}(z_t)}\right].
\]
Defining
$B_{\mathrm{eff}}(f)=-\log\,\mathbb{E}_{(s_t,z_t)\sim\rho}
[e^{-B_{\min}(z_t)}]$,
Jensen's inequality (the exponential is convex) confirms
$B_{\mathrm{eff}}(f)\geq 0$, and the bound follows immediately.
$\square$

\subsection{Persistence Theory and the State-Splitting Cliff}
\label{sec:theory_persistence}

\begin{corollary}[Better representation increases measured persistence]
\label{cor:persistence}
Assume the joint process $(s_t^\star, z_t)$ is stationary and the decoder
is time-homogeneous, so the per-step misclassification probability
$\varepsilon = \mathbb{P}(\hat{s}_t \neq s_t^\star)$ is constant in $t$.
Then:
\[
\mathbb{P}(\hat{s}_{t+1} = \hat{s}_t) \;\geq\; p_{\mathrm{stay}} - 2\varepsilon,
\]
where $p_{\mathrm{stay}} = \mathbb{P}(s_{t+1}^\star = s_t^\star)$.
Combined with Proposition~\ref{prop:ident_mle_bhatta}(iii):
\[
\mathbb{P}(\hat{s}_{t+1} = \hat{s}_t)
\;\geq\;
p_{\mathrm{stay}} - 2(K-1)\exp\!\bigl(-B_{\mathrm{eff}}(f)\bigr),
\]
where $B_{\mathrm{eff}}(f) = -\log\,\mathbb{E}_{(s_t,z_t)\sim\rho}
[\exp(-B_{\min}(z_t))]$ and $B_{\min}(z) = \min_{i\neq j}B_{ij}(z)$.
\end{corollary}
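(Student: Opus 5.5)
The argument is a coupling between the decoded sequence and the true sequence, combined with a union bound over the two decoding errors at consecutive times. On the event $E_t := \{\hat{s}_t = s_t^\star\}\cap\{\hat{s}_{t+1} = s_{t+1}^\star\}$, the decoded pair equals the true pair. In particular, $\{s_{t+1}^\star = s_t^\star\}\cap E_t \subseteq \{\hat{s}_{t+1} = \hat{s}_t\}$. Therefore
\[
\mathbb{P}(\hat{s}_{t+1} = \hat{s}_t) \;\geq\; \mathbb{P}\bigl(\{s_{t+1}^\star = s_t^\star\}\cap E_t\bigr) \;\geq\; p_{\mathrm{stay}} - \mathbb{P}(E_t^c),
\]
using $\mathbb{P}(A\cap E)\geq \mathbb{P}(A)-\mathbb{P}(E^c)$.

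Next I would bound $\mathbb{P}(E_t^c)$ by the union bound:
\[
\mathbb{P}(E_t^c)\leq \mathbb{P}(\hat{s}_t\neq s_t^\star)+\mathbb{P}(\hat{s}_{t+1}\neq s_{t+1}^\star).
\]
Stationarity of $(s_t^\star,z_t)$ and time-homogeneity of the decoder make each term equal to $\varepsilon$, so $\mathbb{P}(E_t^c)\leq 2\varepsilon$. Stationarity also makes $p_{\mathrm{stay}}$ independent of $t$, which gives the first inequality.

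For the second display I would substitute the bound of Proposition~\ref{prop:ident_mle_bhatta}(iii), $\varepsilon\leq (K-1)\exp(-B_{\mathrm{eff}}(f))$. This substitution is valid because the right-hand side $p_{\mathrm{stay}}-2\varepsilon$ is decreasing in $\varepsilon$.

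The main subtlety is that Proposition~\ref{prop:ident_mle_bhatta}(iii) is stated for the one-step Bayes decoder. The corollary therefore applies as written only when $\hat{s}_t$ is that decoder, or any decoder whose misclassification probability is at most that bound. I would state this explicitly, noting that the first inequality holds for an arbitrary time-homogeneous decoder, while the second requires the decoder to satisfy the Proposition's error bound. No dependence between the two error events needs to be controlled, since the union bound does not require independence. The rest is bookkeeping.
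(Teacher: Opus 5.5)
Your proof is correct and matches the paper's argument. The paper likewise lower-bounds $\mathbb{P}(\hat{s}_{t+1}=\hat{s}_t)$ by $\mathbb{P}(A\cap B\cap C)$, with $A=\{s_{t+1}^\star=s_t^\star\}$ and $B,C$ the two correct-decoding events; it applies the union bound to get $\mathbb{P}(A)-\mathbb{P}(B^c)-\mathbb{P}(C^c)$, uses stationarity to set both error terms to $\varepsilon$, and then substitutes Proposition~\ref{prop:ident_mle_bhatta}(iii). Your caveat is also a fair point that the paper leaves implicit: the second inequality needs the decoder to be the one-step Bayes decoder of that proposition, or any decoder with no larger error.
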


\begin{proof}
Define events $A = \{s_{t+1}^\star = s_t^\star\}$,
$B = \{\hat{s}_t = s_t^\star\}$, $C = \{\hat{s}_{t+1} = s_{t+1}^\star\}$.
On $A \cap B \cap C$ we have $\hat{s}_{t+1} = s_{t+1}^\star = s_t^\star =
\hat{s}_t$, so $\mathbb{P}(\hat{s}_{t+1}=\hat{s}_t) \geq
\mathbb{P}(A\cap B\cap C)$.
By the union bound:
\[
\mathbb{P}(A \cap B \cap C)
= 1 - \mathbb{P}(A^c \cup B^c \cup C^c)
\geq 1 - \mathbb{P}(A^c) - \mathbb{P}(B^c) - \mathbb{P}(C^c)
= \mathbb{P}(A) - \mathbb{P}(B^c) - \mathbb{P}(C^c).
\]
Stationarity gives $(s_{t+1}^\star, z_{t+1}) \sim \rho$, and
time-homogeneity gives the same decoder at $t+1$ as at $t$, so
$\mathbb{P}(C^c) = \mathbb{P}(B^c) = \varepsilon$.
Therefore:
\[
\mathbb{P}(\hat{s}_{t+1}=\hat{s}_t)
\geq \mathbb{P}(A) - 2\varepsilon
= p_{\mathrm{stay}} - 2\varepsilon.
\]
Substituting $\varepsilon \leq (K-1)\exp(-B_{\mathrm{eff}}(f))$ from
Proposition~\ref{prop:ident_mle_bhatta}(iii) gives the second inequality.
\end{proof}

\begin{theorem}[Over-specification admits low-persistence clone solutions]
\label{thm:splitting}
Let the true SDS have $K^\star$ regimes with parameters
$\{(A_k^\star,b_k^\star,\Sigma_k^\star)\}_{k=1}^{K^\star}$ and transition
matrix $\Pi^\star$, initialised at its stationary distribution $\pi^\star$.
For any $K > K^\star$, let $m = K - K^\star + 1 \geq 2$.
There exists a $K$-state SDS with parameters $\theta_K$, initialised at
its own stationary distribution, such that:
\begin{enumerate}
\item $\theta_K$ induces the same marginal law on $\{z_t\}$ as $\theta^\star$.
\item For any clone construction with intra-clone transition matrix $Q \neq I$:
\[
\mathbb{P}(s_{t+1} = s_t)
= \sum_{k \neq r} \pi_k^\star \Pi_{kk}^\star
  + \pi_r^\star \Pi_{rr}^\star \bar\delta
\;<\; p_{\mathrm{stay}},
\]
where $\bar\delta = \sum_\ell \alpha_\ell Q_{\ell\ell} \in (0,1)$ is the
$\alpha$-weighted average diagonal of $Q$.
The persistence drop is exactly:
\[
p_{\mathrm{stay}} - \mathbb{P}(s_{t+1}=s_t)
= \pi_r^\star \Pi_{rr}^\star (1 - \bar\delta) > 0.
\]
\end{enumerate}
\end{theorem}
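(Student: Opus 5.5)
The plan is to build $\theta_K$ explicitly by a \emph{clone} construction: split one true regime into $m$ copies that share its dynamics, and leave every other regime untouched. Fix a regime $r\in\{1,\dots,K^\star\}$ with $\pi_r^\star\Pi_{rr}^\star>0$, which exists under the persistence assumption $p_{\mathrm{stay}}>1/K^\star$. Replace $r$ by clone states $r_1,\dots,r_m$, each with parameters $(A_r^\star,b_r^\star,\Sigma_r^\star)$. Choose a probability vector $\alpha$ with all $\alpha_\ell>0$ and a stochastic $Q$ with $\alpha^\top Q=\alpha^\top$. The transitions are as follows. From a non-clone state $k$, go to $k'\neq r$ with probability $\Pi^\star_{kk'}$, and to clone $r_\ell$ with probability $\Pi^\star_{kr}\alpha_\ell$. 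From clone $r_\ell$, go to $k'\neq r$ with probability $\Pi^\star_{rk'}$, and to $r_{\ell'}$ with probability $\Pi^\star_{rr}Q_{\ell\ell'}$. Initialise at $\pi_k=\pi_k^\star$ for $k\neq r$ and $\pi_{r_\ell}=\pi_r^\star\alpha_\ell$.

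\textbf{Step 1 (stationarity).} I will check directly that this $\pi$ is stationary. For $k'\neq r$, the inflow from the clones sums to $\sum_\ell\pi_r^\star\alpha_\ell\Pi^\star_{rk'}=\pi_r^\star\Pi^\star_{rk'}$, which recovers the original balance equation. For $r_{\ell'}$, the inflow is $\sum_{k\neq r}\pi_k^\star\Pi^\star_{kr}\alpha_{\ell'}+\pi_r^\star\Pi_{rr}^\star(\alpha^\top Q)_{\ell'}=\alpha_{\ell'}\pi_r^\star$, using $\alpha^\top Q=\alpha^\top$.

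\textbf{Step 2 (same marginal law).} Define the lumping map $\phi(r_\ell)=r$ and $\phi(k)=k$ otherwise. Each row of the new transition matrix, summed over a lumped class, depends only on $\phi$ of the current state: from a clone, the mass sent to the clone block is $\Pi^\star_{rr}\sum_{\ell'}Q_{\ell\ell'}=\Pi^\star_{rr}$. This is Dynkin's strong lumpability criterion. Hence $\phi(s_t)$ is a Markov chain with kernel $\Pi^\star$ started from $\pi^\star$. The emission kernel depends on $s_t$ only through $\phi(s_t)$, so the joint law of $(\phi(s_t),z_t)$ coincides with that under $\theta^\star$. Marginalising gives item~1. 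To be careful, I would write out the finite-dimensional densities of $z_{1:n}$ as sums over state paths, group the paths by their $\phi$-image, and use lumpability to collapse the inner sums.

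\textbf{Step 3 (persistence).} Under stationarity, $\mathbb{P}(s_{t+1}=s_t)=\sum_{k\neq r}\pi_k^\star\Pi^\star_{kk}+\sum_\ell\pi_r^\star\alpha_\ell\Pi^\star_{rr}Q_{\ell\ell}$, which is the displayed formula with $\bar\delta=\sum_\ell\alpha_\ell Q_{\ell\ell}$. Subtracting this from $p_{\mathrm{stay}}=\sum_k\pi_k^\star\Pi^\star_{kk}$ gives the exact drop $\pi_r^\star\Pi_{rr}^\star(1-\bar\delta)$. For the strict inequality I need $\bar\delta<1$: since $\alpha_\ell>0$ and $Q\neq I$, some diagonal entry $Q_{\ell\ell}<1$, so $\bar\delta<1$.

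\textbf{Main obstacle.} The delicate point is the claim $\bar\delta\in(0,1)$ together with strictness for an \emph{arbitrary} $Q\neq I$. The lower bound $\bar\delta>0$ is not automatic; for example, a cyclic $Q$ has zero diagonal. I would therefore read the statement as requiring $\bar\delta<1$, which is what strictness actually needs, note that $\bar\delta\ge 0$ trivially, and exhibit $Q=\tfrac12 I+\tfrac12\mathbf 1\alpha^\top$ as a concrete witness with $\bar\delta\in(0,1)$. A related subtlety is that assumption (A1) requires all transition probabilities to be positive. With that $Q$ and strictly positive $\alpha$, the constructed matrix keeps all entries positive whenever $\Pi^\star$ does, so the clone model stays inside the assumed class.
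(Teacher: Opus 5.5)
Your proposal is correct and follows the paper's proof essentially step for step: the same clone construction (entry into the clone block weighted by $\alpha$, intra-clone block $\Pi^\star_{rr}Q$ with $\alpha^\top Q=\alpha^\top$), the same stationarity check, and the same persistence computation. Your strong-lumpability and path-sum argument for item~1 is a more careful version of the paper's assertion that the aggregated clone weight $\rho_r(z_t)$ is unchanged, since it controls all finite-dimensional laws of the non-Markov process $\{z_t\}$ rather than only the one-step kernel $p(z_{t+1}\mid z_t)$ that the paper compares.

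Your caveat is also right. A cyclic $Q$ with uniform $\alpha$ gives $\bar\delta=0$, so for arbitrary $Q\neq I$ the statement's $\bar\delta\in(0,1)$ should read $\bar\delta\in[0,1)$. The paper's proof only establishes $\bar\delta<1$, and its explicit family $Q=(1-\lambda)\mathbf{1}\alpha^\top+\lambda I$ happens to give $\bar\delta\ge\sum_\ell\alpha_\ell^2>0$. The exact drop formula and the strict inequality are unaffected.
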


\begin{proof}
\textit{Construction.}
Fix any regime $r \in \{1,\dots,K^\star\}$.
Choose weights $\alpha_\ell > 0$ with $\sum_{\ell=1}^m \alpha_\ell = 1$.
Replace regime $r$ with $m$ clone states $r_1,\dots,r_m$ sharing emission
parameters:
\[
(A_{r_\ell}, b_{r_\ell}, \Sigma_{r_\ell})
= (A_r^\star, b_r^\star, \Sigma_r^\star)
\qquad \forall\,\ell=1,\dots,m.
\]
Define $\Pi_K$ by:
\begin{align*}
(\Pi_K)_{ij} &= \Pi^\star_{ij}
  && \text{non-clone } i,j \neq r, \\
(\Pi_K)_{i,r_\ell} &= \alpha_\ell \Pi^\star_{ir}
  && \text{non-clone } i \text{ to clone } r_\ell, \\
(\Pi_K)_{r_\ell,j} &= \Pi^\star_{rj}
  && \text{clone } r_\ell \text{ to non-clone } j \neq r, \\
(\Pi_K)_{r_\ell,r_{\ell'}} &= \Pi^\star_{rr} \cdot Q_{\ell\ell'}
  && \text{intra-clone transitions},
\end{align*}
where $Q$ is an $m\times m$ stochastic matrix with stationary distribution
$\alpha$.
Rows sum to one:
$\sum_{j\neq r}\Pi^\star_{rj} + \Pi^\star_{rr}\sum_{\ell'}Q_{\ell\ell'}
= (1-\Pi^\star_{rr}) + \Pi^\star_{rr} = 1$.

\textit{Existence of $Q$ with prescribed $\bar\delta$.}
For any target $\bar\delta \in [\sum_\ell\alpha_\ell^2, 1)$, the convex
combination
\[
Q = (1-\lambda)\,\mathbf{1}\alpha^\top + \lambda\,I,
\qquad
\lambda = \frac{\bar\delta - \sum_\ell\alpha_\ell^2}{1 - \sum_\ell\alpha_\ell^2},
\]
is a stochastic matrix with stationary distribution $\alpha$ and weighted
diagonal $\sum_\ell\alpha_\ell Q_{\ell\ell} = \bar\delta$.

\textit{Stationary distribution of $\Pi_K$.}
We claim $\pi_{r_\ell} = \alpha_\ell\pi_r^\star$ (with $\pi_k = \pi_k^\star$
for $k\neq r$) is stationary for $\Pi_K$.
For non-clone $j \neq r$:
\[
\sum_i \pi_i(\Pi_K)_{ij}
= \sum_{k\neq r}\pi_k^\star\Pi^\star_{kj}
  + \sum_\ell \alpha_\ell\pi_r^\star\Pi^\star_{rj}
= \sum_{k}\pi_k^\star\Pi^\star_{kj}
= \pi_j^\star. 
\]
For clone $r_\ell$:
\begin{align*}
\sum_i \pi_i(\Pi_K)_{i,r_\ell}
&= \sum_{k\neq r}\pi_k^\star\alpha_\ell\Pi^\star_{kr}
   + \sum_{\ell'}\alpha_{\ell'}\pi_r^\star\Pi^\star_{rr}Q_{\ell'\ell} \\
&= \alpha_\ell\!\sum_{k\neq r}\pi_k^\star\Pi^\star_{kr}
   + \pi_r^\star\Pi^\star_{rr}\!\underbrace{\sum_{\ell'}\alpha_{\ell'}
     Q_{\ell'\ell}}_{=\,\alpha_\ell} \\
&= \alpha_\ell(\pi_r^\star - \pi_r^\star\Pi^\star_{rr})
   + \alpha_\ell\pi_r^\star\Pi^\star_{rr}
\;=\; \alpha_\ell\pi_r^\star. 
\end{align*}

\textit{Proof of (i).}
Since all clone states share emission parameters $(A_r^\star,b_r^\star,
\Sigma_r^\star)$, the marginal transition kernel satisfies:
\[
p_{\theta_K}(z_{t+1}\mid z_t)
= \sum_{k\neq r}\rho_k(z_t)\mathcal{N}(z_{t+1};\mu_k(z_t),\Sigma_k)
  + \rho_r(z_t)\mathcal{N}(z_{t+1};\mu_r^\star(z_t),\Sigma_r^\star),
\]
where $\rho_r(z_t) = \sum_\ell\mathbb{P}(s_t=r_\ell\mid z_t)$ aggregates
the clone weights.
Since $\pi_{r_\ell} = \alpha_\ell\pi_r^\star$ and the aggregated transition
probabilities satisfy $\sum_\ell(\Pi_K)_{i,r_\ell} = \Pi^\star_{ir}$ and
$\sum_\ell(\Pi_K)_{r_\ell,j} = \Pi^\star_{rj}$, the weight $\rho_r(z_t)$
is identical under $\theta_K$ and $\theta^\star$.
Hence $p_{\theta_K}(z_{t+1}\mid z_t) = p_{\theta^\star}(z_{t+1}\mid z_t)$
and the marginal laws of $\{z_t\}$ coincide. 

\textit{Proof of (ii).}
Under the stationary distribution $\pi_{r_\ell} = \alpha_\ell\pi_r^\star$:
\[
\mathbb{P}(s_{t+1}=s_t)
= \sum_{k\neq r}\pi_k^\star\Pi_{kk}^\star
  + \sum_\ell\pi_{r_\ell}\cdot\Pi^\star_{rr}\cdot Q_{\ell\ell}
= \sum_{k\neq r}\pi_k^\star\Pi_{kk}^\star
  + \pi_r^\star\Pi^\star_{rr}
    \underbrace{\sum_\ell\alpha_\ell Q_{\ell\ell}}_{=\,\bar\delta}.
\]
Since $Q\neq I$ and all $\alpha_\ell > 0$, at least one $Q_{\ell\ell} < 1$,
giving $\bar\delta < 1$ and therefore:
\[
p_{\mathrm{stay}} - \mathbb{P}(s_{t+1}=s_t)
= \pi_r^\star\Pi^\star_{rr}(1-\bar\delta) > 0. \qquad\square
\]
\end{proof}

\begin{remark}[Non-identifiability of persistence]
The clone construction is observationally equivalent to the true model by
part~(i): no likelihood-based criterion can distinguish $\theta_K$ from
$\theta^\star$.
By part~(ii), any such solution with $Q\neq I$ has strictly lower
persistence, with the drop $\pi_r^\star\Pi^\star_{rr}(1-\bar\delta)$
controlled by how much the intra-clone block mixes.
Since $Q = I$ is a measure-zero boundary case that EM will not reach from
a generic initialization, cloning generically induces a strict and
undetectable persistence drop.
This establishes that persistence is not identified from the observations
when $K > K^\star$: the set of likelihood-equivalent solutions spans
persistence values from $p_{\mathrm{stay}}$ down to
$\sum_{k\neq r}\pi_k^\star\Pi_{kk}^\star$.
\end{remark}

\begin{remark}[Sharp cliff at $K = K^\star + 1$]
For $K \leq K^\star$, assumption~(A2) holds generically, the identifiability
result of Proposition~\ref{prop:ident_mle_bhatta}(i) applies, and
persistence is identified up to label permutation.
At $K = K^\star + 1$, clone solutions first become available, and by
Theorem~\ref{thm:splitting}(ii) these solutions have strictly lower
persistence for any $Q \neq I$.
Since no clone solutions exist for $K \leq K^\star$ under~(A2), the
availability of unidentifiable low-persistence solutions is a property
that appears exactly at $K = K^\star + 1$.
This gives a theoretical account of the sharp empirical cliff observed. Below the cliff, persistence is identified
and stable; at the cliff, the likelihood landscape admits degenerate
solutions that EM may select with no likelihood penalty.
\end{remark}






\subsection{How RLVR May Induce Persistent States}
\label{sec:theory_rlvr}

\begin{proposition}[Trajectory-level rewards induce persistent latent options]
\label{prop:rlvr}
Consider a meta-controller with softmax policy over $K$ options:
\[
T_\beta(i \to i \mid x_t)
= \frac{e^{V_i(x_t)/\beta}}{e^{V_i(x_t)/\beta}
  + \sum_{j \neq i} e^{V_j(x_t)/\beta}},
\]
where $x_t$ is the environment (or latent) state and $V_k(x_t)$ denotes
the expected return of following option $k$ from $x_t$ under the current
policy, and $\beta>0$ is a temperature.
Define the continuation margin
$\gamma_i(x_t) := V_i(x_t) - \max_{j\neq i}V_j(x_t)$.
Suppose that over the stationary distribution $\rho$ (assuming the
policy induces an ergodic Markov chain):
\[
\mathbb{P}_{x_t\sim\rho}(\gamma_i(x_t)\geq\gamma)
\;\geq\; 1-\delta
\]
for some $\gamma>0$ and $\delta\in[0,1)$. Then:
\[
\mathbb{E}_{x_t\sim\rho}[T_\beta(i\to i\mid x_t)]
\;\geq\;
\frac{1-\delta}{1+(K-1)e^{-\gamma/\beta}}.
\]
Notably, this bound has the same functional form as the
Bhattacharyya-based misclassification bound in
Proposition~\ref{prop:risk_bound}, with $\gamma/\beta$ playing the
role of a separation margin.
\end{proposition}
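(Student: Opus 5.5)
The argument is pointwise and then averaged. First, fix a state $x_t$ on the good event $G := \{x : \gamma_i(x) \ge \gamma\}$. On $G$ we have $V_j(x_t) \le V_i(x_t) - \gamma$ for every $j \neq i$. Dividing numerator and denominator of $T_\beta(i\to i\mid x_t)$ by $e^{V_i(x_t)/\beta}$ gives
\[
T_\beta(i\to i\mid x_t) = \frac{1}{1 + \sum_{j\neq i} e^{(V_j(x_t)-V_i(x_t))/\beta}} \;\geq\; \frac{1}{1+(K-1)e^{-\gamma/\beta}}.
\]
This holds because each of the $K-1$ exponents is at most $-\gamma/\beta$ and $u \mapsto 1/(1+u)$ is decreasing. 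This is the whole analytic content, and it is the same "each competitor contributes at most $e^{-\text{margin}}$" structure as the union-bound step in Proposition~\ref{prop:risk_bound}, which is the source of the functional-form remark.

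Second, I average over $\rho$. Since $T_\beta(i\to i\mid x_t)\ge 0$ everywhere, I can drop the contribution of $G^c$:
\[
\mathbb{E}_\rho[T_\beta(i\to i\mid x_t)] \;\ge\; \mathbb{E}_\rho\bigl[T_\beta(i\to i\mid x_t)\mathbf{1}_G(x_t)\bigr] \;\ge\; \frac{\mathbb{P}_\rho(G)}{1+(K-1)e^{-\gamma/\beta}} \;\ge\; \frac{1-\delta}{1+(K-1)e^{-\gamma/\beta}}.
\]
The last inequality uses the hypothesis $\mathbb{P}_\rho(\gamma_i\ge\gamma)\ge 1-\delta$. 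Ergodicity is used only to guarantee that $\rho$ exists and is unique, so that the expectation is well defined. Measurability of $G$ follows from measurability of the $V_k$, which I will assume.

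There is no real obstacle here; the only point needing care is the sign bookkeeping in the first step. On $G$ the margin is stated against $\max_{j\neq i}V_j$, so it bounds every competitor simultaneously. That is what lets the sum over $j \neq i$ be bounded by $(K-1)e^{-\gamma/\beta}$ without any union-type loss beyond the factor $K-1$. I will close with a remark that the bound is tight when all competitors sit exactly at margin $\gamma$ on $G$ and $T_\beta$ vanishes on $G^c$.
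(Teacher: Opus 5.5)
Your proof is correct and is essentially the paper's argument: the pointwise softmax bound $T_\beta(i\to i\mid x_t)\ge 1/(1+(K-1)e^{-\gamma/\beta})$ on the high-margin event $\{\gamma_i(x_t)\ge\gamma\}$, then dropping the complement because $T_\beta\ge 0$ and using $\mathbb{P}_\rho(\gamma_i\ge\gamma)\ge 1-\delta$. One small correction to your closing tightness remark: the softmax is strictly positive, so $T_\beta$ cannot vanish on the complement, and for $\delta>0$ the inequality is therefore always strict (the bound is only approached in a limit); it is attained only when $\delta=0$ and every competitor sits exactly at margin $\gamma$.
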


\begin{proof}
Define the high-advantage event $A:=\{x_t:\gamma_i(x_t)\geq\gamma\}$,
so $\mathbb{P}_\rho(A)\geq 1-\delta$ by assumption.
On $A$, for all $j\neq i$:
$e^{V_j(x_t)/\beta}\leq e^{-\gamma/\beta}e^{V_i(x_t)/\beta}$,
giving $T_\beta(i\to i\mid x_t)\geq\frac{1}{1+(K-1)e^{-\gamma/\beta}}$.
Since $T_\beta\in[0,1]$:
\[
\begin{aligned}
\mathbb{E}_\rho[T_\beta(i\to i\mid x_t)]
&\;\geq\; \mathbb{E}_\rho[T_\beta(i\to i\mid x_t)\cdot\mathbf{1}_A] \\
&\;\geq\; \mathbb{P}_\rho(A)\cdot\frac{1}{1+(K-1)e^{-\gamma/\beta}} \\
&\;\geq\; \frac{1-\delta}{1+(K-1)e^{-\gamma/\beta}}.
\qquad\square
\end{aligned}
\]
\end{proof}

\begin{remark}[RLVR vs.\ pretraining]
Pretrained language models already exhibit latent switching structure
(as captured by the SDS models in this work), but this structure is not
explicitly reinforced. Maximum likelihood training optimizes next-token
prediction and does not directly encourage consistency of latent modes
across time: there is no pressure to make the continuation margin
$\gamma_i(x_t)$ positive.

In contrast, trajectory-level reinforcement learning induces an
option-level value function $V_k(x_t)$, which assigns higher return
to coherent multi-step strategies. This creates a positive continuation
margin $\gamma_i(x_t)>0$ over large regions of state space, biasing
the policy toward persistence. Thus RLVR does not create latent
structure from scratch, but amplifies and stabilizes existing latent
modes by making persistence directly reward-relevant.
\end{remark}



\section{Training Dynamics of Latent Policy Structure}
\label{sec:training_dynamics}
 
To track how latent policy structure develops during fine-tuning, we fine-tune Llama-3.1-8B on 100K reasoning traces from the DeepSeek-R1 model on the AM-DeepSeek-R1 dataset \citep{zhao202514millionopensourcedistilled} using LoRA with rank 64, $\alpha{=}128$, applied to
all linear layers, with a learning rate of $10^{-4}$ and cosine scheduling, saving intermediate checkpoints at 25\%, 50\%, and 75\% of training. We fit the SDS framework at layer~31 and evaluate on GSM8K and MATH-500.
 
\begin{figure}[t]
    \centering
    \includegraphics[width=\textwidth]{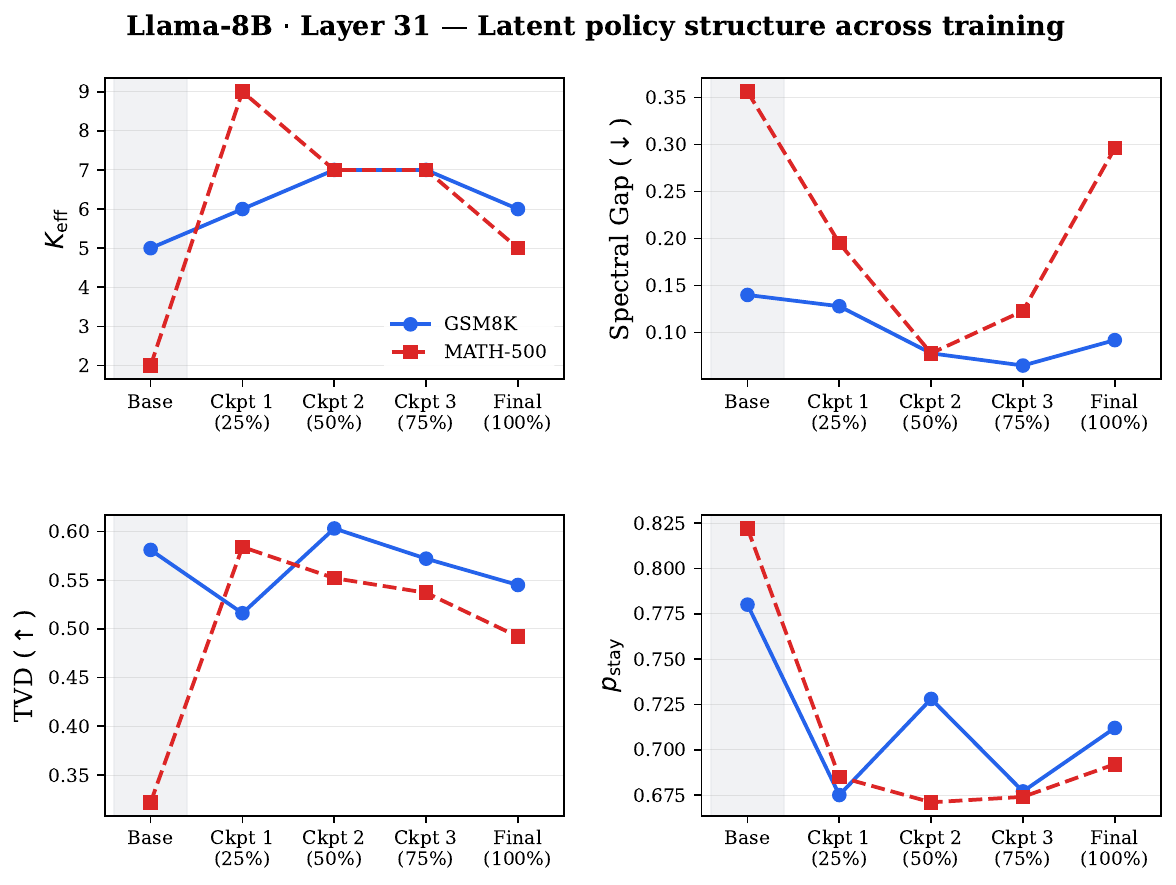}
    \caption{Latent policy structure across training for Llama-8B (layer~31).
    Grey band: base model. Training increases the effective number of latent
    states and reduces the spectral gap; both effects peak at mid-training
    before partially consolidating.}
    \label{fig:training_dynamics}
\end{figure}
 
Figure~\ref{fig:training_dynamics} shows that structured latent dynamics emerge early in training and are not a pre-existing property of the base model. Within the first 25\% of training, $K_{\mathrm{eff}}$ increases substantially on both benchmarks (5$\to$6 on GSM8K; 2$\to$9 on MATH-500), and the spectral gap begins to drop, indicating the formation of more persistent regime dynamics. The most structured configuration appears at mid-training (Ckpt~2--3), where the spectral gap reaches its minimum (0.065 on GSM8K, 0.078 on MATH-500) and TVD peaks (0.603 on GSM8K, 0.584 on MATH-500), reflecting maximally differentiated transition structure between regimes. By the final checkpoint, the metrics partially relax, consistent with consolidation into a stable regime configuration. We note that $p_{\mathrm{stay}}$ decreases over this LoRA training run, suggesting that full recovery of self-transition persistence may require longer training or full fine-tuning. Overall, these results indicate that latent policy structure is not inherited from pretraining but is progressively shaped by reasoning fine-tuning, with the most pronounced regime differentiation emerging at mid-training before stabilizing.

\section{Additional Robustness and Modeling Choice Ablations}
\label{sec:more_expms}

\subsection{Per-Dataset Structural Comparisons}
\label{sec:more_mods}

To evaluate whether the observed structural differences are robust across reasoning tasks, Table~\ref{tab:per_dataset_results} reports per-dataset switching metrics for all model families. The qualitative trends from Section~\ref{sec:expm_val} remain stable across datasets. Reasoning models consistently exhibit higher TVD, while effective state
utilization, persistence, and spectral properties vary across model
families and datasets.

\begin{table*}[t]
\centering
\scriptsize
\setlength{\tabcolsep}{3pt}
\renewcommand{\arraystretch}{1.12}

\resizebox{\textwidth}{!}{%
\begin{tabular}{llcccccccc}
\toprule
Model & Dataset
& \multicolumn{2}{c}{$K_{\mathrm{eff}}\uparrow$}
& \multicolumn{2}{c}{$p_{\mathrm{stay}}\uparrow$}
& \multicolumn{2}{c}{TVD$\uparrow$}
& \multicolumn{2}{c}{Spec.\ Gap$\downarrow$} \\
\cmidrule(lr){3-4}
\cmidrule(lr){5-6}
\cmidrule(lr){7-8}
\cmidrule(lr){9-10}
& & Base & Reason. & Base & Reason. & Base & Reason. & Base & Reason. \\
\midrule

\multirow{4}{1.8cm}{Llama-8B}
& GSM8K
& \best{4.80}{\pm 0.75} & \val{4.40}{\pm 0.49}
& \val{0.71}{\pm 0.04} & \best{0.80}{\pm 0.03}
& \val{0.51}{\pm 0.05} & \best{0.57}{\pm 0.02}
& \val{0.18}{\pm 0.05} & \best{0.11}{\pm 0.01} \\

& MATH-500
& \val{3.00}{\pm 0.63} & \best{5.20}{\pm 0.98}
& \val{0.76}{\pm 0.04} & \best{0.77}{\pm 0.05}
& \val{0.41}{\pm 0.08} & \best{0.57}{\pm 0.01}
& \val{0.26}{\pm 0.11} & \best{0.16}{\pm 0.03} \\

& SVAMP
& \best{5.40}{\pm 1.02} & \val{4.40}{\pm 0.49}
& \val{0.63}{\pm 0.08} & \best{0.84}{\pm 0.02}
& \val{0.47}{\pm 0.04} & \best{0.61}{\pm 0.02}
& \val{0.22}{\pm 0.05} & \best{0.08}{\pm 0.01} \\

& MMLU-Pro
& \best{5.20}{\pm 1.17} & \val{4.60}{\pm 0.49}
& \val{0.70}{\pm 0.03} & \best{0.86}{\pm 0.03}
& \val{0.53}{\pm 0.04} & \best{0.64}{\pm 0.01}
& \val{0.13}{\pm 0.06} & \best{0.04}{\pm 0.00} \\

\midrule

\multirow{4}{1.8cm}{Qwen-14B}
& GSM8K
& \val{3.60}{\pm 1.02} & \best{6.60}{\pm 1.02}
& \best{0.81}{\pm 0.09} & \val{0.75}{\pm 0.03}
& \val{0.52}{\pm 0.06} & \best{0.62}{\pm 0.03}
& \best{0.07}{\pm 0.06} & \val{0.08}{\pm 0.03} \\

& MATH-500
& \best{3.00}{\pm 0.00} & \best{3.00}{\pm 0.00}
& \val{0.72}{\pm 0.02} & \best{0.75}{\pm 0.04}
& \val{0.39}{\pm 0.02} & \best{0.42}{\pm 0.04}
& \val{0.28}{\pm 0.01} & \best{0.27}{\pm 0.03} \\

& SVAMP
& \val{2.40}{\pm 0.49} & \best{7.40}{\pm 0.49}
& \best{0.81}{\pm 0.06} & \val{0.75}{\pm 0.03}
& \val{0.37}{\pm 0.05} & \best{0.64}{\pm 0.00}
& \val{0.30}{\pm 0.06} & \best{0.06}{\pm 0.02} \\

& MMLU-Pro
& \val{3.00}{\pm 0.00} & \best{6.20}{\pm 1.17}
& \val{0.77}{\pm 0.01} & \best{0.82}{\pm 0.04}
& \val{0.45}{\pm 0.01} & \best{0.66}{\pm 0.01}
& \best{0.03}{\pm 0.01} & \val{0.05}{\pm 0.01} \\

\midrule

\multirow{4}{1.8cm}{Qwen-1.5B}
& GSM8K
& \val{3.40}{\pm 0.49} & \best{4.80}{\pm 0.75}
& \val{0.67}{\pm 0.09} & \best{0.82}{\pm 0.06}
& \val{0.42}{\pm 0.03} & \best{0.61}{\pm 0.03}
& \val{0.24}{\pm 0.02} & \best{0.06}{\pm 0.02} \\

& MATH-500
& \val{2.00}{\pm 0.00} & \best{5.20}{\pm 0.75}
& \best{0.88}{\pm 0.01} & \val{0.75}{\pm 0.05}
& \val{0.38}{\pm 0.01} & \best{0.57}{\pm 0.05}
& \val{0.24}{\pm 0.01} & \best{0.08}{\pm 0.02} \\

& SVAMP
& \val{3.00}{\pm 0.00} & \best{4.20}{\pm 0.40}
& \val{0.69}{\pm 0.01} & \best{0.85}{\pm 0.01}
& \val{0.37}{\pm 0.01} & \best{0.61}{\pm 0.01}
& \val{0.28}{\pm 0.04} & \best{0.05}{\pm 0.01} \\

& MMLU-Pro
& \val{2.00}{\pm 0.00} & \best{7.00}{\pm 0.89}
& \best{0.89}{\pm 0.10} & \val{0.76}{\pm 0.03}
& \val{0.39}{\pm 0.10} & \best{0.64}{\pm 0.02}
& \val{0.22}{\pm 0.19} & \best{0.07}{\pm 0.01} \\

\midrule

\multirow{4}{1.8cm}{\shortstack{Qwen2.5/\\QwQ-32B}}
& GSM8K
& \val{5.62}{\pm 0.86} & \best{6.00}{\pm 1.00}
& \val{0.71}{\pm 0.08} & \best{0.75}{\pm 0.04}
& \val{0.54}{\pm 0.10} & \best{0.59}{\pm 0.05}
& \val{0.14}{\pm 0.02} & \best{0.10}{\pm 0.02} \\

& MATH-500
& \val{3.50}{\pm 0.50} & \best{4.25}{\pm 0.66}
& \val{0.78}{\pm 0.09} & \best{0.80}{\pm 0.02}
& \val{0.50}{\pm 0.05} & \best{0.56}{\pm 0.04}
& \val{0.17}{\pm 0.05} & \best{0.14}{\pm 0.04} \\

& SVAMP
& \val{5.62}{\pm 0.48} & \best{5.75}{\pm 0.83}
& \val{0.72}{\pm 0.08} & \best{0.79}{\pm 0.04}
& \val{0.55}{\pm 0.09} & \best{0.62}{\pm 0.04}
& \val{0.14}{\pm 0.06} & \best{0.09}{\pm 0.02} \\

& MMLU-Pro
& \val{6.25}{\pm 0.83} & \best{6.38}{\pm 0.48}
& \val{0.74}{\pm 0.05} & \best{0.80}{\pm 0.05}
& \val{0.60}{\pm 0.05} & \best{0.65}{\pm 0.04}
& \val{0.10}{\pm 0.07} & \best{0.09}{\pm 0.05} \\

\bottomrule
\end{tabular}%
}

\caption{
Per-dataset comparison of structural switching metrics between base and
reasoning-fine-tuned models. Values report means with population standard
deviations shown in gray across repeated runs. Results use layer 31 for
Llama-8B, layer 47 for Qwen-14B, and layer 27 for Qwen-1.5B. Light-green
cells and bold means indicate the numerically better value within each
base--reasoning pair; ties are highlighted in both columns. Arrows indicate
the preferred direction for each metric.
}
\label{tab:per_dataset_results}
\end{table*}

\subsection{Controls for Correctness and Temporal Bias}
\label{app:confounding_controls}

The comparison between base and reasoning-fine-tuned models could
potentially be affected by differences in output correctness or by
temporal biases introduced by the analysis pipeline. We therefore
perform four controls. First, we restrict the analysis to problems
solved correctly by both models. Second, we destroy the temporal order
of each trajectory before fitting the SDS. Third, we remove CEBRA's
temporal-adjacency objective by sampling random non-adjacent positive
pairs. Finally, we vary and remove the Dirichlet persistence prior used
during SDS estimation.

Together, these controls distinguish structure present in the model's
activation trajectories from structure that could arise from output
correctness, contrastive pairing, or the SDS prior.

\subsubsection{Paired-Correct Traces}
\label{app:paired_correct}

Reasoning-fine-tuned models generally solve more problems correctly
than their corresponding base models. The observed dynamical
differences could therefore reflect a difference between correct and
incorrect traces rather than an effect of reasoning fine-tuning. To
control for this possibility, we repeat the analysis using only
paired-correct examples: problems solved correctly by both the base and
reasoning-fine-tuned models.

We focus this analysis on GSM8K because it is the only benchmark with a
sufficiently large paired-correct subset for reliable SDS estimation.
The paired-correct subsets contain 1,065 examples for Qwen-1.5B and
1,116 examples for Qwen-14B. The corresponding subsets are considerably
smaller on MATH-500, SVAMP, and MMLU-Pro: 174, 145, and 177 examples,
respectively, for Qwen-1.5B, and 359, 220, and 367 examples for
Qwen-14B.

We report two complementary measures of temporal stability. Mean
self-transition, $p_{\mathrm{stay}}$, is the average diagonal entry of
the learned transition matrix. Segment persistence is the empirical
mean duration of a contiguous latent-state segment, measured in
sentence-level reasoning steps.

\begin{table}[t]
\centering
\small
\setlength{\tabcolsep}{5pt}
\renewcommand{\arraystretch}{1.12}

\begin{tabular}{llrccc}
\toprule
Model & Metric & $N$
& \cellcolor{basecell}Base
& \cellcolor{reasoncell}Reasoning
& $\Delta$ \\
\midrule

\multirow{2}{*}{Qwen-1.5B}
& Segment persistence
& \multirow{2}{*}{1,065}
& \basev{2.18}
& \reasonv{4.73}
& $\mathbf{+2.55}$ \\

& $p_{\mathrm{stay}}$
&
& \basev{0.56}
& \reasonv{0.69}
& $\mathbf{+0.13}$ \\

\midrule

\multirow{2}{*}{Qwen-14B}
& Segment persistence
& \multirow{2}{*}{1,116}
& \basev{2.86}
& \reasonv{8.65}
& $\mathbf{+5.79}$ \\

& $p_{\mathrm{stay}}$
&
& \basev{0.71}
& \reasonv{0.90}
& $\mathbf{+0.19}$ \\

\bottomrule
\end{tabular}

\caption{
SDS results on paired-correct GSM8K traces, restricted to problems
solved correctly by both the base and reasoning-fine-tuned models.
Reasoning fine-tuning increases both segment persistence and mean
self-transition probability despite both checkpoints effectively
utilizing the full fixed state budget, $K_{\mathrm{eff}}=5$.
$\Delta$ denotes Reasoning minus Base.
}
\label{tab:paired_correct}
\end{table}

As shown in Table~\ref{tab:paired_correct}, the qualitative separation
between base and reasoning-fine-tuned models remains after controlling
for correctness. For Qwen-14B, segment persistence increases from
$2.86$ to $8.65$ reasoning steps and $p_{\mathrm{stay}}$ increases from
$0.71$ to $0.90$. For Qwen-1.5B, segment persistence increases from
$2.18$ to $4.73$ and $p_{\mathrm{stay}}$ increases from $0.56$ to
$0.69$.

Both base and reasoning models effectively utilize the full
$K=5$ state budget in this experiment. The difference therefore does
not arise from the reasoning models using more latent states. Instead,
the same state budget is organized differently: base models switch
more frequently, whereas reasoning-fine-tuned models maintain states
for longer contiguous periods. Thus, the observed dynamical difference
cannot be explained solely by the higher accuracy of the
reasoning-fine-tuned checkpoints.

\subsubsection{Time-Shuffled Trajectories}
\label{app:time_shuffle}

The recovered structure could also be influenced by static activation
geometry rather than the temporal organization of reasoning. To
separate these factors, we randomly permute the order of sentence-level
activations within each trajectory before fitting the SDS, while
keeping the underlying CEBRA embeddings fixed. This preserves the set
of activation states in each trajectory while destroying their true
temporal order.

Table~\ref{tab:time_shuffle} compares results obtained using the real
trajectory order (Real) with those obtained after shuffling sentence
order (Shuf.). Here, $\Delta R^2$ denotes the gain in one-step
predictive fit relative to a single linear autoregressive model.

\begin{table*}[t]
\centering
\small
\setlength{\tabcolsep}{7pt}
\renewcommand{\arraystretch}{1.10}

\begin{tabular}{lllcccc}
\toprule
& & &
\multicolumn{2}{c}{Segment persistence $\uparrow$}
& \multicolumn{2}{c}{Predictive gain $\Delta R^2 \uparrow$} \\
\cmidrule(lr){4-5}
\cmidrule(lr){6-7}
Dataset & Model & Checkpoint
& \cellcolor{realcell}Real
& \cellcolor{shufcell}Shuffled
& \cellcolor{realcell}Real
& \cellcolor{shufcell}Shuffled \\
\midrule

\multirow{6}{*}{GSM8K}
& \multirow{2}{*}{Qwen-1.5B}
& Base      & \realv{2.16} & \shufv{1.32}
            & \realv{+0.062} & \shufv{-0.437} \\
& & Reasoning & \realv{6.12} & \shufv{2.14}
              & \realv{+0.049} & \shufv{-0.250} \\
\cmidrule(lr){2-7}

& \multirow{2}{*}{Qwen-14B}
& Base      & \realv{3.20} & \shufv{1.96}
            & \realv{+0.075} & \shufv{-0.341} \\
& & Reasoning & \realv{7.98} & \shufv{3.04}
              & \realv{+0.061} & \shufv{-0.349} \\
\cmidrule(lr){2-7}

& \multirow{2}{*}{Llama-8B}
& Base      & \realv{6.48} & \shufv{4.55}
            & \realv{+0.088} & \shufv{-0.165} \\
& & Reasoning & \realv{7.82} & \shufv{3.05}
              & \realv{+0.039} & \shufv{-0.166} \\

\midrule

\multirow{6}{*}{MATH-500}
& \multirow{2}{*}{Qwen-1.5B}
& Base      & \realv{2.42} & \shufv{2.12}
            & \realv{+0.072} & \shufv{-0.225} \\
& & Reasoning & \realv{6.05} & \shufv{2.59}
              & \realv{+0.035} & \shufv{-0.229} \\
\cmidrule(lr){2-7}

& \multirow{2}{*}{Qwen-14B}
& Base      & \realv{2.82} & \shufv{1.87}
            & \realv{+0.068} & \shufv{-0.217} \\
& & Reasoning & \realv{3.45} & \shufv{1.84}
              & \realv{+0.064} & \shufv{-0.220} \\
\cmidrule(lr){2-7}

& \multirow{2}{*}{Llama-8B}
& Base      & \realv{3.91} & \shufv{3.71}
            & \realv{+0.116} & \shufv{-0.047} \\
& & Reasoning & \realv{6.79} & \shufv{2.14}
              & \realv{+0.042} & \shufv{-0.233} \\

\bottomrule
\end{tabular}

\caption{
Effect of shuffling sentence order before SDS fitting.
\colorbox{realcell}{\strut Real} denotes the original trajectory order,
whereas \colorbox{shufcell}{\strut Shuffled} denotes a random
within-trajectory permutation of the same sentence-level embeddings.
Shuffling reduces segment persistence and causes predictive gains over
a linear autoregressive baseline to become negative in every setting,
showing that the recovered regimes depend on coherent temporal
organization.
}
\label{tab:time_shuffle}
\end{table*}

Shuffling has a substantial effect across all model families. On
GSM8K, the reasoning--base persistence gap decreases from $3.96$ to
$0.82$ for Qwen-1.5B and from $4.79$ to $1.08$ for Qwen-14B. For
Llama-8B, it changes from $1.34$ to $-1.50$. On MATH-500, the gap
decreases from $3.63$ to $0.48$ for Qwen-1.5B, from $0.63$ to $-0.03$
for Qwen-14B, and from $2.88$ to $-1.57$ for Llama-8B.

The same pattern holds on the remaining benchmarks. On SVAMP, the
reasoning--base persistence gap decreases from $4.65$ to $1.02$ for
Qwen-1.5B, from $4.19$ to $-0.70$ for Qwen-14B, and from $1.29$ to
$-1.79$ for Llama-8B. On MMLU-Pro, it decreases from $10.60$ to $3.83$
for Qwen-1.5B, from $3.66$ to $-0.98$ for Qwen-14B, and from $5.79$ to
$3.84$ for Llama-8B. The reasoning--base persistence gap is therefore
larger under the true trajectory order in all 12 model--dataset pairs.

Moreover, $\Delta R^2$ changes from positive to negative after
shuffling for both base and reasoning-fine-tuned models. Once temporal
order is destroyed, the fitted switching dynamics perform worse than
the single linear autoregressive baseline. These results show that the
recovered latent-policy structure depends on coherent temporal
organization and cannot be explained by static activation geometry
alone.

\subsubsection{Randomized CEBRA Positive Pairs}
\label{app:random_pairs}

CEBRA is trained by treating temporally adjacent reasoning steps as
positive pairs. This objective could itself induce apparent temporal
persistence, even if the underlying model activations do not contain
differentiated temporal structure. We test this possibility by
retraining the CEBRA encoder with positive pairs drawn from randomly
selected non-adjacent steps within the same trajectory. This removes
CEBRA's temporal-adjacency objective while retaining trajectory
identity. The SDS is subsequently fitted using the true sentence order.

Table~\ref{tab:random_cebra_pairs} compares the default
adjacent-positive objective (Adj.) with randomized non-adjacent
positives (Rnd.). We use a fixed $K=5$ and report averages over three
seeds.

\begin{table*}[t]
\centering
\small
\setlength{\tabcolsep}{4pt}
\renewcommand{\arraystretch}{1.15}

\begin{tabular*}{\textwidth}{
    @{\extracolsep{\fill}}
    llcccccccc
    @{}
}
\toprule
& &
\multicolumn{4}{c}{$p_{\mathrm{stay}}\uparrow$}
& \multicolumn{4}{c}{TVD$\uparrow$} \\
\cmidrule(lr){3-6}
\cmidrule(lr){7-10}

Model & Dataset
& \multicolumn{2}{c}{Base}
& \multicolumn{2}{c}{Reasoning}
& \multicolumn{2}{c}{Base}
& \multicolumn{2}{c}{Reasoning} \\
\cmidrule(lr){3-4}
\cmidrule(lr){5-6}
\cmidrule(lr){7-8}
\cmidrule(lr){9-10}

& &
\cellcolor{adjcell}Adj.
& \cellcolor{rndcell}Rnd.
& \cellcolor{adjcell}Adj.
& \cellcolor{rndcell}Rnd.
& \cellcolor{adjcell}Adj.
& \cellcolor{rndcell}Rnd.
& \cellcolor{adjcell}Adj.
& \cellcolor{rndcell}Rnd. \\
\midrule

Qwen-1.5B & GSM8K
& \adjv{0.57} & \rndv{0.44}
& \adjv{0.83} & \rndbest{0.82}
& \adjv{0.51} & \rndv{0.45}
& \adjv{0.63} & \rndbest{0.63} \\

Qwen-14B & GSM8K
& \adjv{0.74} & \rndv{0.50}
& \adjv{0.82} & \rndbest{0.76}
& \adjv{0.60} & \rndv{0.48}
& \adjv{0.62} & \rndbest{0.56} \\

Llama-8B & GSM8K
& \adjv{0.64} & \rndv{0.58}
& \adjv{0.77} & \rndbest{0.74}
& \adjv{0.49} & \rndv{0.42}
& \adjv{0.58} & \rndbest{0.54} \\

\midrule

Qwen-1.5B & MATH-500
& \adjv{0.59} & \rndv{0.60}
& \adjv{0.76} & \rndbest{0.79}
& \adjv{0.47} & \rndv{0.45}
& \adjv{0.59} & \rndbest{0.61} \\

Qwen-14B & MATH-500
& \adjv{0.66} & \rndv{0.65}
& \adjv{0.73} & \rndbest{0.66}
& \adjv{0.49} & \rndv{0.48}
& \adjv{0.55} & \rndbest{0.50} \\

Llama-8B & MATH-500
& \adjv{0.78} & \rndv{0.67}
& \adjv{0.83} & \rndbest{0.79}
& \adjv{0.58} & \rndv{0.47}
& \adjv{0.63} & \rndbest{0.59} \\

\midrule

Qwen-1.5B & SVAMP
& \adjv{0.59} & \rndv{0.42}
& \adjv{0.84} & \rndbest{0.85}
& \adjv{0.45} & \rndv{0.40}
& \adjv{0.64} & \rndbest{0.65} \\

Qwen-14B & SVAMP
& \adjv{0.54} & \rndv{0.41}
& \adjv{0.84} & \rndbest{0.78}
& \adjv{0.45} & \rndv{0.37}
& \adjv{0.64} & \rndbest{0.59} \\

Llama-8B & SVAMP
& \adjv{0.68} & \rndv{0.60}
& \adjv{0.77} & \rndbest{0.79}
& \adjv{0.52} & \rndv{0.42}
& \adjv{0.57} & \rndbest{0.60} \\

\midrule

Qwen-1.5B & MMLU-Pro
& \adjv{0.67} & \rndv{0.74}
& \adjv{0.85} & \rndbest{0.85}
& \adjv{0.48} & \rndv{0.54}
& \adjv{0.65} & \rndbest{0.67} \\

Qwen-14B & MMLU-Pro
& \adjv{0.71} & \rndv{0.71}
& \adjv{0.85} & \rndbest{0.83}
& \adjv{0.55} & \rndv{0.53}
& \adjv{0.65} & \rndbest{0.64} \\

Llama-8B & MMLU-Pro
& \adjv{0.76} & \rndv{0.68}
& \adjv{0.84} & \rndbest{0.76}
& \adjv{0.57} & \rndv{0.49}
& \adjv{0.64} & \rndbest{0.56} \\

\bottomrule
\end{tabular*}

\caption{
Effect of replacing temporally adjacent CEBRA positive pairs
(\colorbox{adjcell}{\strut Adj.}) with randomly selected non-adjacent
steps from the same trajectory (\colorbox{rndcell}{\strut Rnd.}).
Results use $K=5$ and are averaged over three seeds. Bold randomized-pair
values indicate the better value between the base and
reasoning-fine-tuned models. Under randomized positives,
reasoning-fine-tuned models retain higher $p_{\mathrm{stay}}$ and TVD
in all 12 model--dataset settings.
}
\label{tab:random_cebra_pairs}
\end{table*}

Under randomized positive pairs, the reasoning-fine-tuned model
continues to exceed its corresponding base model in both
$p_{\mathrm{stay}}$ and TVD in all 12 model--dataset settings. The
reasoning--base separation becomes larger under randomization in 9 of
12 settings for $p_{\mathrm{stay}}$ and in 10 of 12 settings for TVD.
This occurs because removing temporal adjacency generally weakens the
apparent stickiness of the base-model representations more strongly
than that of the reasoning-model representations.

This result complements the PCA+SLDS control in
Appendix~\ref{app:em_moe_cebra_ablation}, which uses no contrastive
pairing and still recovers the qualitative base--reasoning separation.
The recovered latent-policy structure is therefore not an artifact of
CEBRA's temporal-adjacency objective.

\subsubsection{Sensitivity to the Dirichlet Persistence Prior}
\label{app:dirichlet_prior}

The SDS estimation procedure uses a Dirichlet prior that encourages
self-transitions. To determine whether this prior artificially produces
persistent latent states, we vary its strength over
$\kappa\in\{0,0.1,1,5\}$. The setting $\kappa=0$ removes the
persistence prior entirely.

\begin{table}[t]
    \centering
    \small
    \begin{tabular}{lc}
        \toprule
        Metric
        & Maximum change across
          $\kappa\in\{0,0.1,1,5\}$ \\
        \midrule
        $K_{\mathrm{eff}}$ & 0.10 \\
        TVD & 0.0093 \\
        Spectral gap & 0.0092 \\
        \bottomrule
    \end{tabular}
    \caption{
        Maximum variation in the recovered SDS metrics as the
        Dirichlet persistence-prior strength is varied, including its
        complete removal at $\kappa=0$.
    }
    \label{tab:dirichlet_sensitivity}
\end{table}

Across all 24 base/reasoning model--dataset combinations, the
qualitative separation between base and reasoning-fine-tuned models
remains unchanged at every value of $\kappa$. Quantitatively, the
largest observed variation is only $0.10$ effective states in
$K_{\mathrm{eff}}$, $0.0093$ in TVD, and $0.0092$ in spectral gap. For
most runs, the variation in TVD and spectral gap is below $10^{-3}$.
Thus, the learned latent-policy organization is determined primarily
by the activation trajectories rather than by the persistence prior.

\subsection{Activation Extraction and Layer Selection}
\label{sec:layer_selection}

\textbf{Sentence-level representations.} We extract activations at the sentence level 
rather than the token level. Token-level representations are too fine-grained to 
reflect coherent reasoning patterns, as individual tokens do not carry sufficient 
context to characterize the reasoning mode active at that step. We represent each 
sentence by the last-token hidden state, which aggregates contextual information from 
all preceding tokens in the sentence via causal attention. We verified that averaging 
over all token representations in a sentence yields weaker regime separation and lower 
predictive $R^2$, and that the last-token representation also aligns better with the 
steering setup where interventions are applied at a single position.

\textbf{Layer selection.} We select a single middle layer per model family rather than 
the first or last layer. Early layers are excluded because reasoning-specific 
representations have not yet formed: the model has processed the input but has not yet 
begun to organize intermediate computations. Late layers are excluded because 
representations at the final layer are dominated by the output distribution and tend 
to collapse into a small number of token prediction modes, reducing the diversity of 
the latent geometry. Middle layers represent a balance: the model has had sufficient 
depth to begin structuring its reasoning, but has not yet committed to a specific 
output. 
For the middle layers, we selected layers L22 for Llama-3.1-8B, L28 for Qwen2.5-14B, and L20 for Qwen2.5-Math-1.5B. In this paper, when we state that we "average over all layers", we are referring to averaging over middle and final layers. For the final layers, we used layers L31 for Llama-3.1-8B, L47 for Qwen2.5-14B, and L27 for Qwen2.5-Math-1.5B. 

Figure~\ref{fig:layer_ablation} confirms this choice. The selected layers fall within 
stable bands where TVD and $K_{\mathrm{eff}}$ are high, spectral gap is low, and 
persistence is sustained. In contrast, shallow layers show noisy and undifferentiated 
metrics for both base and reasoning models, and very deep layers show degraded TVD and 
inflated spectral gap consistent with output-mode collapse. The base-versus-reasoning 
contrast is visible across most of the depth range, but is most stable and 
consistent in the middle layers we selected.

\begin{figure}[h]
    \centering
    \includegraphics[width=\linewidth]{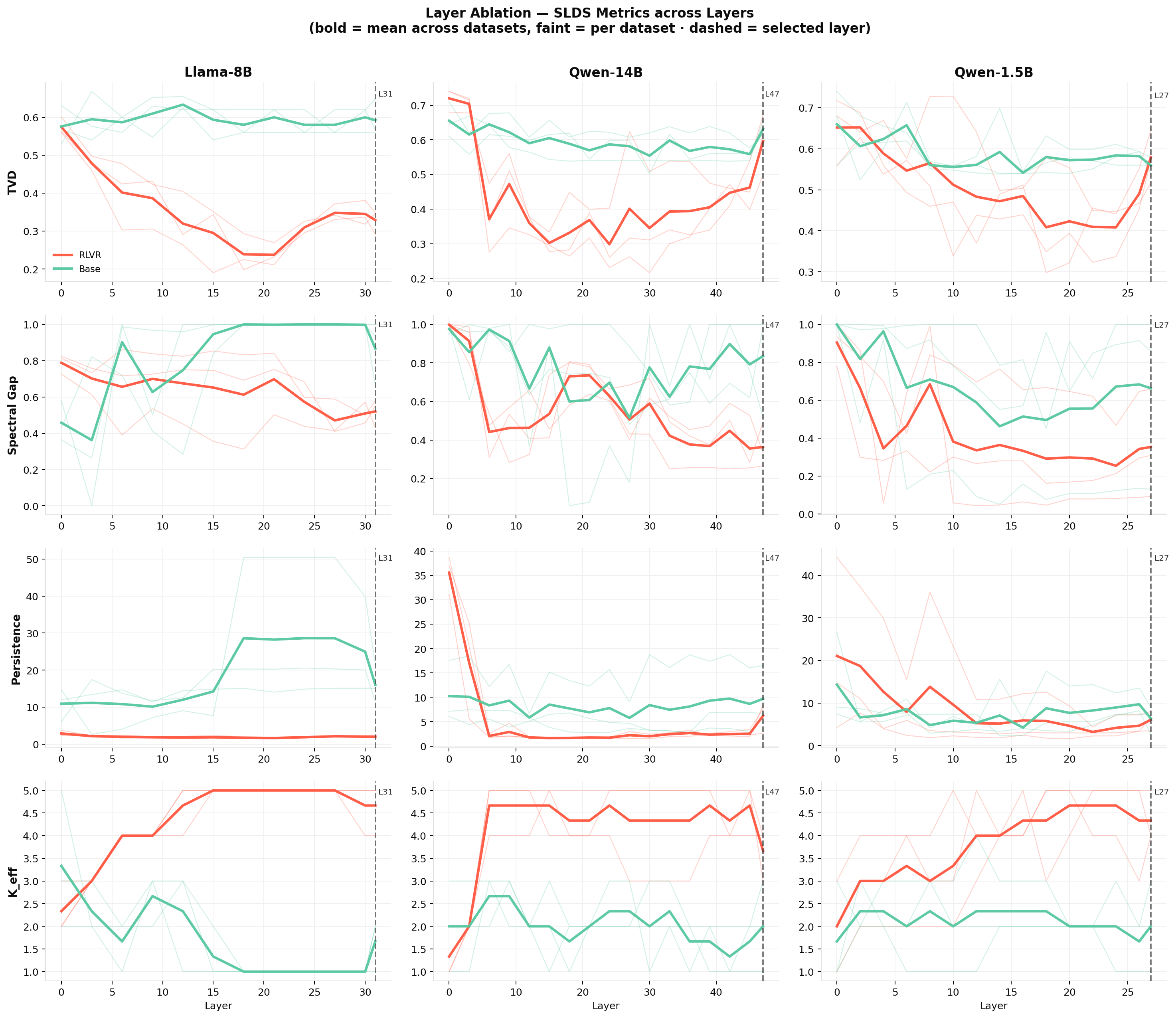}
    \caption{Layer ablation of SDS metrics across depth for each model family. Bold 
    lines show means across datasets; faint lines show per-dataset traces. The dashed 
    vertical line marks the selected layer. Selected layers fall in regions with stable 
    TVD, low spectral gap, high $K_{\mathrm{eff}}$, and sustained persistence.}
    \label{fig:layer_ablation}
\end{figure}





\subsection{Projection and inference ablations}
\label{app:em_moe_cebra_ablation}
We compare three alternatives for latent-state discovery: CEBRA+EM, CEBRA-MoE, and PCA+SLDS. The comparison spans predictive fit (Figure~\ref{fig:method_delta_r2}), state utilization (Figure~\ref{fig:method_k_eff}), transition structure (Figure~\ref{fig:method_tvd}), specialization, and persistence (Figure~\ref{fig:method_persist} and \ref{fig:method_spectral_gap}). Across metrics, CEBRA+EM provides the strongest overall balance between predictive gain and structured latent dynamics, while CEBRA-MoE under-uses states and PCA+SLDS tends to over-fragment trajectories into high-entropy, weakly persistent regimes.

\begin{figure*}[p]
    \centering
    \includegraphics[width=\textwidth]{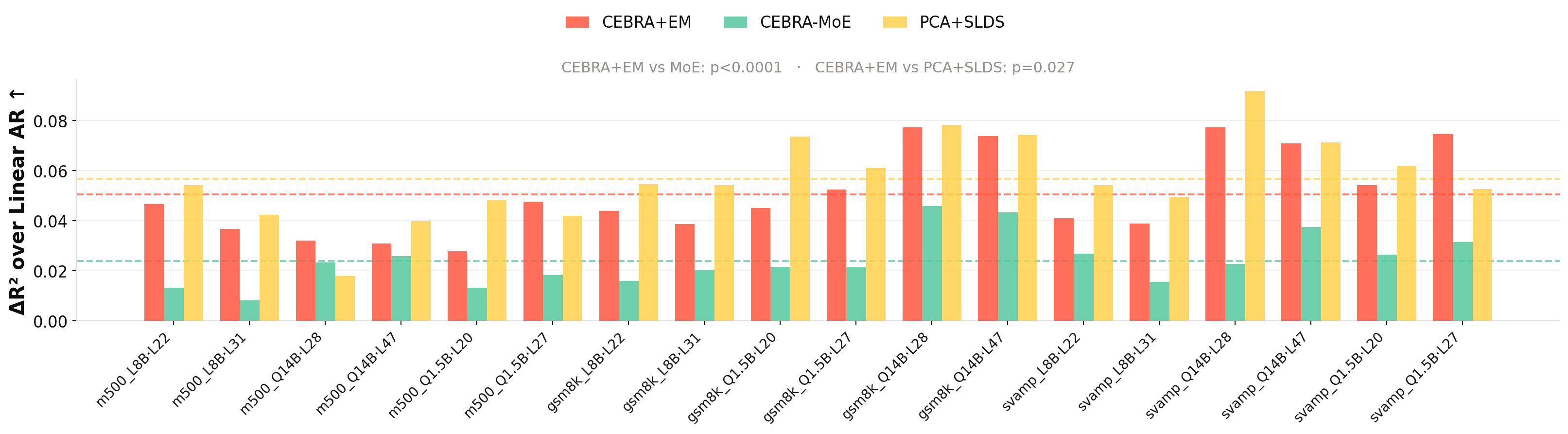}
    \caption{$\Delta R^2$ over linear autoregressive baselines for alternative latent-state discovery pipelines. CEBRA+EM consistently improves predictive fit over CEBRA-MoE, while PCA+SLDS achieves competitive fit on some settings but does so with substantially different structural tradeoffs.}
    \label{fig:method_delta_r2}
\end{figure*}

\begin{figure*}[p]
    \centering
    \includegraphics[width=\textwidth]{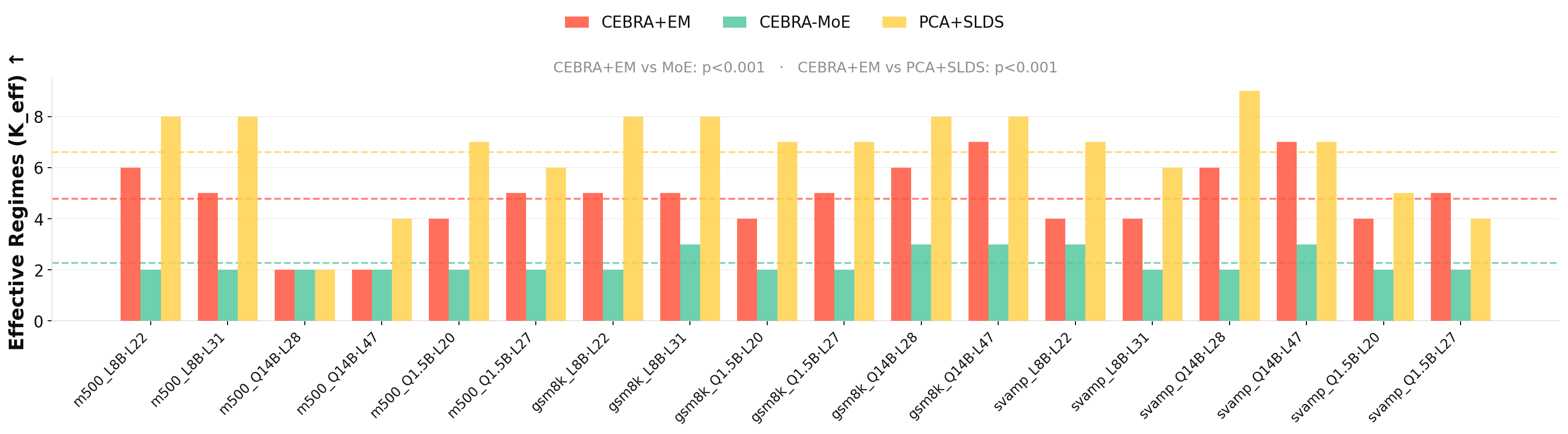}
    \caption{Effective number of occupied regimes for alternative pipelines. CEBRA-MoE tends to use too few states, whereas PCA+SLDS often uses a larger number of effective states. CEBRA+EM occupies an intermediate regime that is compatible with the small-cardinality policy picture developed in the main text.}
    \label{fig:method_k_eff}
\end{figure*}

\begin{figure*}[p]
    \centering
    \includegraphics[width=\textwidth]{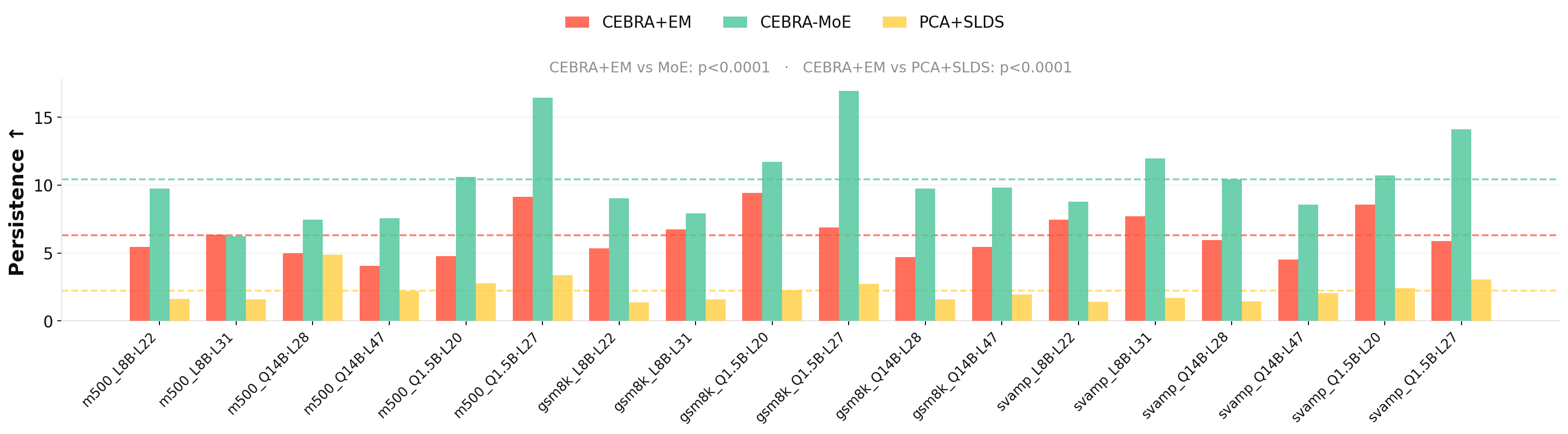}
    \caption{Persistence under alternative latent-state discovery pipelines. CEBRA-MoE attains the largest raw persistence but does so together with low state usage, while PCA+SLDS exhibits markedly lower persistence. CEBRA+EM balances persistence with broader regime utilization and stronger predictive adequacy.}
    \label{fig:method_persist}
\end{figure*}

\begin{figure*}[p]
    \centering
    \includegraphics[width=\textwidth]{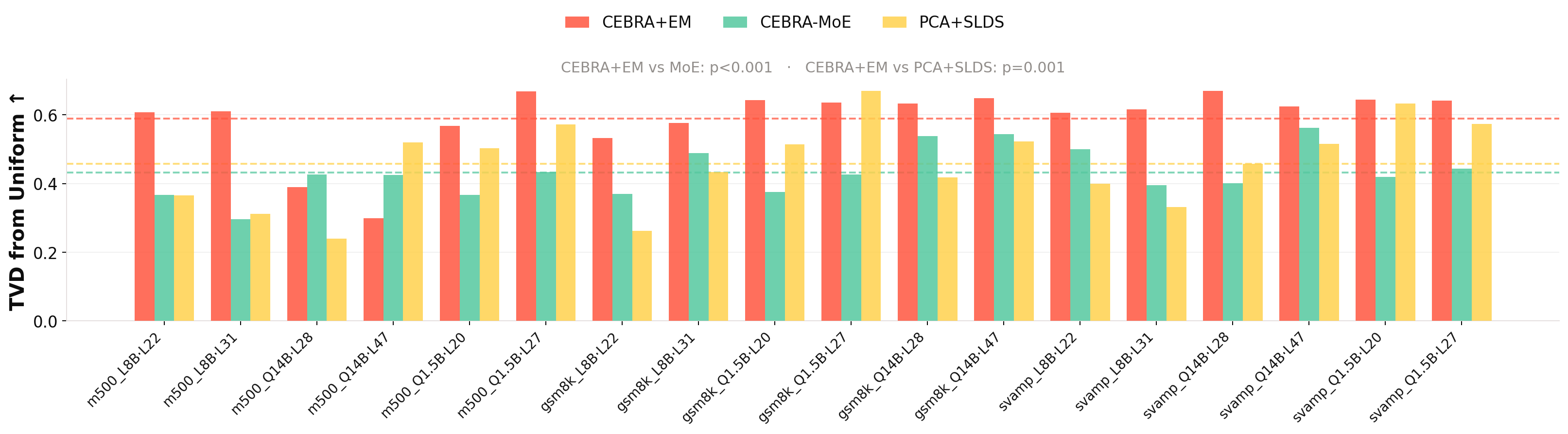}
    \caption{Transition-structure TVD relative to a reference distribution. CEBRA+EM yields the most consistently structured switching dynamics, whereas CEBRA-MoE and PCA+SLDS produce lower or less stable transition structure across settings.}
    \label{fig:method_tvd}
\end{figure*}

\begin{figure*}[p]
    \centering
    \includegraphics[width=\textwidth]{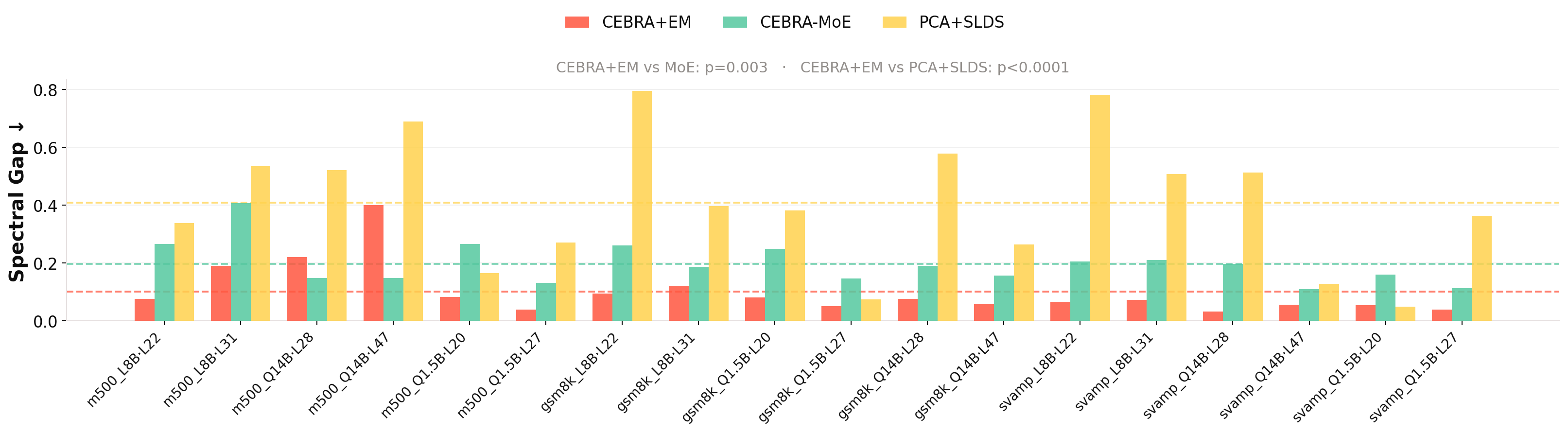}
    \caption{Spectral-gap proxy across alternative pipelines. Lower values correspond to slower mixing and more persistent latent regimes. CEBRA+EM generally attains smaller gaps than the alternatives, consistent with the emergence of longer-lived policy states.}
    \label{fig:method_spectral_gap}
\end{figure*}

\subsection{State-Swap Ablation}
\label{sec:state_swap_details}

Figures~\ref{fig:state_swap_early_layers} and~\ref{fig:state_swap_late_layers} report state-swap ablation results across early and late layers 
for all model families and datasets. In both cases the identity assignment substantially 
outperforms random permutation, with the gap growing in later layers.

\begin{figure*}[h]
    \begin{center}
    \includegraphics[width=\textwidth]{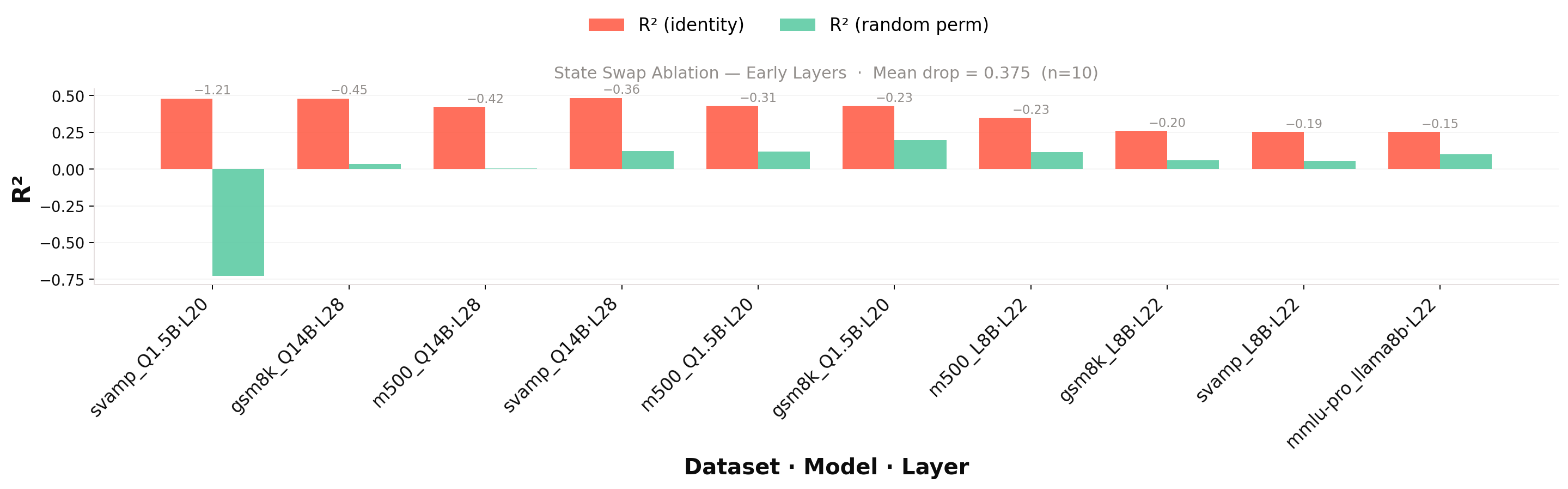}
    \end{center}
    \caption{State-swap ablation in early layers. The identity assignment preserves 
    substantially higher predictive fit than a random permutation across all 
    model-dataset-layer combinations.}
    \label{fig:state_swap_early_layers}
\end{figure*}

\begin{figure*}[h]
    \begin{center}
    \includegraphics[width=\textwidth]{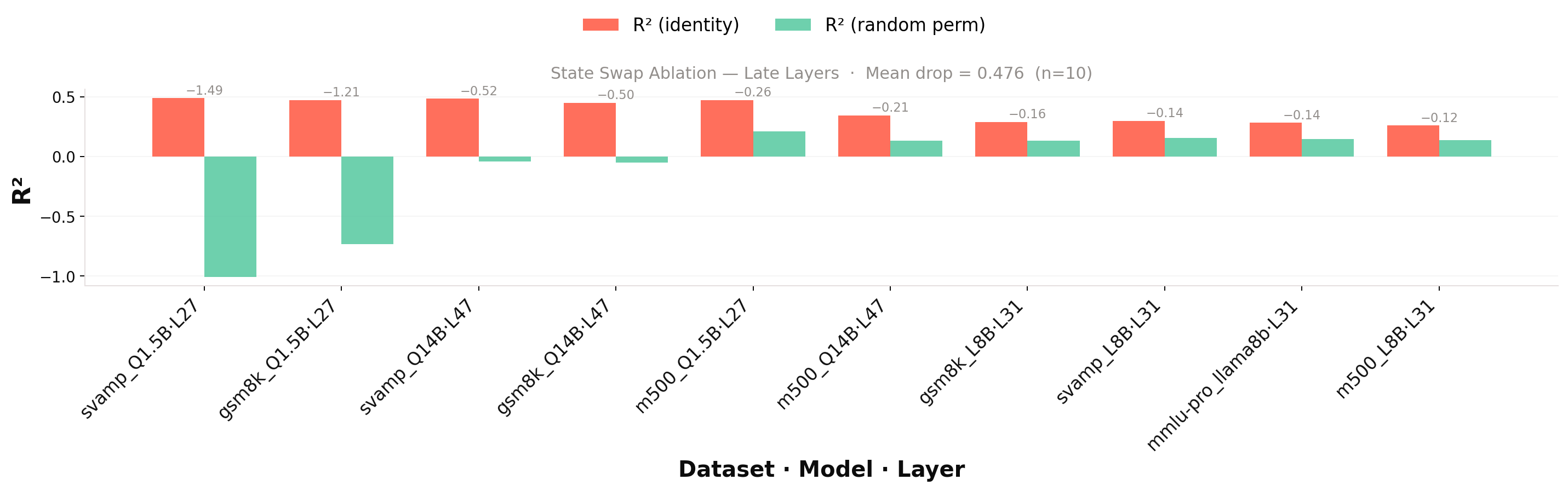}
    \end{center}
    \caption{State-swap ablation in late layers. The drop in $R^2$ under random 
    permutation is larger than in early layers, consistent with progressive 
    consolidation of latent policy structure with depth.}
    \label{fig:state_swap_late_layers}
\end{figure*}

\subsection{Additional State-Swap Ablation Details}
\label{app:state_swap_appendix}

The state-swap intervention isolates whether regime identity matters beyond the mere existence of multiple experts. After fitting the model and estimating per-regime linear dynamics $\{A_k, b_k\}_{k=1}^K$, we hold these coefficients fixed and randomly permute the mapping between inferred states and regime-specific dynamics. Let $R^2_{\mathrm{id}}$ denote the predictive fit under the original assignments and $R^2_{\sigma}$ the fit under a permutation $\sigma$ of the regime labels. Averaging over random permutations yields $\bar{R}^2_{\mathrm{rand}}$, and the gap
\[
\Delta = R^2_{\mathrm{id}} - \bar{R}^2_{\mathrm{rand}}
\]
measures how much predictive adequacy depends on applying the correct dynamical map at the correct time step.

The early- and late-layer results in the main text show that this gap is already positive in intermediate layers and grows in later layers. The interpretation is that the discovered states are not interchangeable clusters: they index distinct dynamical operators whose temporal deployment matters for next-step prediction. The stronger late-layer drop is consistent with a progressive consolidation of latent policy structure as the residual stream approaches the output distribution.

\subsection{Cross-Dataset Consistency of Latent Policy States}
\label{sec:cross_dataset}

A key question is whether the latent policy states recovered by our framework are 
specific to the dataset used for training or reflect a more general property of the 
model's reasoning behavior. To test this, we train a single CEBRA encoder on GSM8K 
and freeze its weights, then embed SVAMP, MATH-500, and MMLU-Pro into the same 
representation space without any retraining. We fit a separate SDS ($K=4$) on each 
dataset's embeddings using EM and measure pairwise consistency across three 
complementary metrics.

\textbf{Transition matrix similarity.} For two fitted transition matrices $T_1, T_2 
\in \mathbb{R}^{K \times K}$, we solve the Hungarian assignment problem to find the 
optimal state relabeling $\sigma^*$ and report
\begin{equation}
S_T(T_1, T_2) = 1 - \frac{\|T_1 - T_2^{\sigma^*}\|_F}{\sqrt{2K}},
\end{equation}
where $T_2^{\sigma^*}$ is $T_2$ permuted by $\sigma^*$. Values close to 1 indicate 
that the two datasets induce similar switching dynamics.

\textbf{Centroid cosine similarity.} Let $C_1, C_2 \in \mathbb{R}^{K \times d}$ be 
the per-regime centroids in the shared CEBRA space. We align them via Procrustes 
rotation $R^* = \arg\min_R \|C_2 R - C_1\|_F$ and report the mean cosine similarity 
under the optimal assignment:
\begin{equation}
S_C(C_1, C_2) = \frac{1}{K} \sum_{k=1}^K 
\frac{C_1[k]^\top (C_2 R^*)[{\sigma^*(k)}]}
{\|C_1[k]\| \cdot \|(C_2 R^*)[\sigma^*(k)]\|}.
\end{equation}

\textbf{Cross-fit $\Delta R^2$.} To test whether the SDS fitted on one dataset 
retains predictive utility on another, we apply the dynamics parameters 
$\theta_A = \{\pi_A, T_A, \{A_k, b_k, \Sigma_k\}_A\}$ fitted on dataset $A$ to 
embed and decode trajectories from dataset $B$, and report
\begin{equation}
\Delta R^2_{A \to B} = R^2_{\theta_A}(B) - R^2_{\mathrm{AR}}(B),
\end{equation}
where $R^2_{\mathrm{AR}}(B)$ is the linear autoregressive baseline on $B$. A positive 
value indicates that the cross-dataset SDS retains predictive structure beyond a 
single-mode baseline.

Figure~\ref{fig:cross_dataset} reports all three metrics across the six dataset pairs. 
GSM8K, SVAMP, and MMLU-Pro exhibit high mutual consistency ($S_T \geq 0.85$, $S_C 
\geq 0.76$) and positive cross-fit. 

\begin{figure}
    \centering
    \includegraphics[width=\linewidth,trim=0 0 0 30, clip]{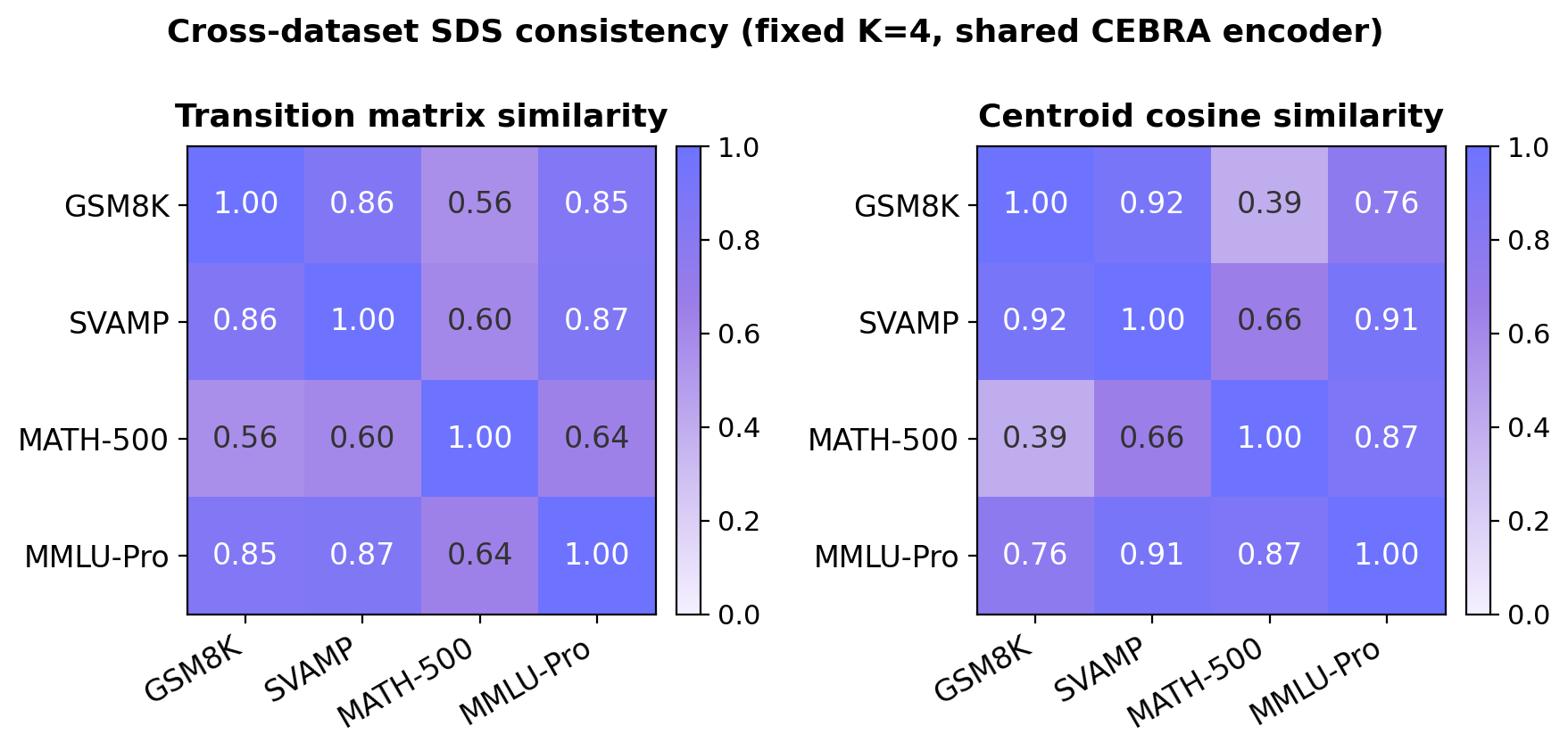}
    \caption{Cross-dataset consistency of recovered latent policy states under a shared 
frozen CEBRA encoder trained on GSM8K. Left: transition matrix similarity after 
Hungarian state alignment. Right: centroid cosine similarity after Procrustes 
rotation. GSM8K, SVAMP, and MMLU-Pro form a consistent cluster with high pairwise 
similarity ($S_T \geq 0.85$, $S_C \geq 0.76$), while MATH-500 shows lower alignment, 
consistent with its longer and more complex reasoning chains requiring richer latent 
structure than the other benchmarks.}
\label{fig:cross_dataset}
\end{figure}

\section{Functional State Specialization: Extended analysis}
\label{app:func_spec}



Figure~\ref{fig:func_spec} shows that across all three models, there is clear functional specialization.
A self-monitoring regime concentrating \textsc{Uncertainty\_Management} and \textsc{Self\_Checking} tokens emerges consistently: R1 in both Llama-8B and Qwen-1.5B, and split into two finer-grained states (R0 for uncertainty, R4 for verification) in Qwen-14B, suggesting that larger models develop more specialized introspective machinery. \textsc{Final\_Answer\_Emission} is likewise reliably isolated (R0 in Llama-8B, R3 in Qwen-1.5B, R2 in Qwen-14B), as is \textsc{Active\_Computation} (R3 in Qwen-14B, co-loading with \textsc{Fact\_Retrieval}). \textsc{Plan\_Generation} and \textsc{Problem\_Setup} stages tend to distribute more diffusely, often absorbed by lower-specificity background regimes, consistent with their role as transitional rather than focal reasoning behaviors. Notably, Qwen-1.5B exhibits the sharpest specialization overall, with R1 concentrating $78\%$ of \textsc{Uncertainty\_Management} and $68\%$ of \textsc{Self\_Checking} tokens, a more collapsed representation than the larger Qwen-14B, which distributes these behaviors across more states.


\section{Hard-Example Dataset Construction}
\label{sec:hard_examples}

We construct the hard-example subsets used for both policy transplantation
and \textsc{PrefixGuard} by generating eight responses from each base
checkpoint and retaining problems for which all eight responses are
incorrect, i.e., base-model pass@8 is zero. The same fixed subsets are used throughout both
actionable-policy evaluations.

The resulting collection contains 444 hard examples across three model
families and four benchmarks. Table~\ref{tab:hard_subset_sizes} reports
the per-setting sample sizes.

\begin{table}[t]
    \centering
    \small
    \setlength{\tabcolsep}{6pt}
    \begin{tabular}{lrrrrr}
        \toprule
        Model
        & GSM8K
        & MATH-500
        & MMLU-Pro
        & SVAMP
        & Total \\
        \midrule
        Llama-8B
        & 30 & 50 & 49 & 18 & \textbf{147} \\

        Qwen-1.5B
        & 50 & 49 & 50 & 27 & \textbf{176} \\

        Qwen-14B
        & 18 & 32 & 50 & 21 & \textbf{121} \\

        \midrule
        \textbf{Total}
        & \textbf{98}
        & \textbf{131}
        & \textbf{149}
        & \textbf{66}
        & \textbf{444} \\
        \bottomrule
    \end{tabular}
    \caption{
    Sizes of the hard-example subsets used for policy transplantation and
    \textsc{PrefixGuard}. The same 444-example collection spans all 12
    model--dataset settings across both actionable-policy evaluations.
    }
    \label{tab:hard_subset_sizes}
\end{table}

\section{Policy-Transplantation Implementation}
\label{app:transplant_details}

At inference time, we monitor the base model's residual stream through a
forward hook at layer 27 for Qwen-1.5B and layer 31 for Llama-8B. Every
$\tau=20$ generated tokens, the current hidden state $h_t$ is standardized
using the activation statistics of the corresponding
reasoning-fine-tuned model and projected through its trained CEBRA
encoder:
\[
z_t
=
f_{\mathrm{CEBRA}}
\left(
\frac{h_t-\mu_r}{\sigma_r}
\right),
\]
where $\mu_r$ and $\sigma_r$ are the reasoning model's activation mean
and standard deviation.

Because forward--backward posteriors are unavailable during online
generation, we infer the current regime using the nearest reasoning-model
centroid:
\[
s_t
=
\arg\min_k \|z_t-c_k\|_2,
\]
where $\{c_k\}_{k=1}^K$ are the centroids of the reasoning-model SDS. We
select the most likely successor state under its transition matrix:
\[
k^\star=\arg\max_j T_{s_tj}.
\]

Let $p_t(k)=T_{s_tk}$ be the natural next-state distribution and
$f_k(z_t)=A_kz_t+b_k$ the conditional mean dynamics associated with
regime $k$. The original predicted next-step mean is
\[
\mu_{\mathrm{orig}}(z_t)
=
\sum_{k=1}^{K}p_t(k)f_k(z_t).
\]
To favor the target successor $k^\star$, we apply a KL-regularized
exponential tilt:
\[
q_t^\star(k)
=
\frac{
p_t(k)\exp\!\left(\beta\,
\mathbb{1}\{k=k^\star\}\right)
}{
\sum_{j=1}^{K}
p_t(j)\exp\!\left(\beta\,
\mathbb{1}\{j=k^\star\}\right)
}.
\]
The corresponding steered prediction and latent intervention are
\[
\mu_{\mathrm{steered}}(z_t)
=
\sum_{k=1}^{K}q_t^\star(k)f_k(z_t),
\qquad
\Delta z_t
=
\mu_{\mathrm{steered}}(z_t)
-
\mu_{\mathrm{orig}}(z_t).
\]

We decode $\Delta z_t$ into activation space using the learned linear
decoder $W_{\mathrm{dec}}$ and bound the intervention relative to the
current hidden-state norm:
\[
\Delta h_t
=
\alpha
\frac{\Delta z_tW_{\mathrm{dec}}}
{\|\Delta z_tW_{\mathrm{dec}}\|_2}
\min\!\left(1,0.1\|h_t\|_2\right).
\]
The intervention $h_t\leftarrow h_t+\Delta h_t$ is then applied to the
residual stream. We use $\alpha=8$, $\beta=8$, generation temperature
$0.7$, and eight samples per problem for all reported experiments. No
model weights are updated.

\begin{algorithm*}[t]
\caption{Online transplantation of reasoning-model SDS dynamics}
\label{alg:em_state_steering}
\begin{algorithmic}[1]
\Require Hidden state $h_t$; encoder $f_{\mathrm{CEBRA}}$;
scaler $(\mu_r,\sigma_r)$; centroids $\{c_k\}$; SDS parameters
$\{A_k,b_k,T\}$; decoder $W_{\mathrm{dec}}$; strengths
$\alpha,\beta$
\State $z_t \gets
f_{\mathrm{CEBRA}}\bigl((h_t-\mu_r)/\sigma_r\bigr)$
\State $s_t \gets \arg\min_k\|z_t-c_k\|_2$
\State $k^\star \gets \arg\max_j T_{s_tj}$
\State $p_t(k) \gets T_{s_tk}$ for $k=1,\ldots,K$
\State $f_k(z_t)\gets A_kz_t+b_k$ for $k=1,\ldots,K$
\State $\mu_{\mathrm{orig}}
\gets\sum_k p_t(k)f_k(z_t)$
\State $\bar q_t(k)
\gets p_t(k)\exp\!\bigl(\beta\mathbb{1}\{k=k^\star\}\bigr)$
\State $q_t^\star(k)
\gets \bar q_t(k)/\sum_j\bar q_t(j)$
\State $\mu_{\mathrm{steered}}
\gets\sum_k q_t^\star(k)f_k(z_t)$
\State $\Delta z_t
\gets\mu_{\mathrm{steered}}-\mu_{\mathrm{orig}}$
\State $u_t\gets\Delta z_tW_{\mathrm{dec}}$
\State $\Delta h_t
\gets\alpha\,u_t/\|u_t\|_2
\cdot\min(1,0.1\|h_t\|_2)$
\State \Return $h_t+\Delta h_t$
\end{algorithmic}
\end{algorithm*}

\section{Cross-Model Transfer and Policy-Transplantation Results}
\label{app:transplant_results}

We evaluate transfer in two complementary ways. First, we test whether an
SDS fitted on one checkpoint can predict trajectories from its paired
checkpoint. This measures compatibility between the recovered dynamical
models without modifying activations. Second, we intervene on the base
model during generation and test whether importing reasoning-model
dynamics changes downstream behavior.

\subsection{Asymmetric Predictive Transfer of SDS Dynamics}
\label{app:asymmetric_sds_transfer}

Let $R^2_{x\to T}$ denote the predictive fit obtained by applying an SDS
trained on source model $x\in\{b,r\}$ to trajectories from a fixed target
model $T\in\{b,r\}$, where $b$ and $r$ denote the base and
reasoning-fine-tuned checkpoints. We define the target-normalized
degradation caused by replacing the target's native SDS with the SDS from
the paired checkpoint as
\begin{equation}
\label{eq:common-target-drop}
\Delta_{\to T}
=
\frac{
R^2_{\mathrm{nat}}(T)-R^2_{\mathrm{oth}}(T)
}{
R^2_{\mathrm{nat}}(T)
}.
\end{equation}

Table~\ref{tab:combined-sds-transplant-all-models} reports results at
representative middle layers: layer 20 for Qwen2.5-Math-1.5B, layer 22
for Llama-3.1-8B, and layer 28 for Qwen2.5-14B. Across all six
model--dataset settings, applying base-trained dynamics to reasoning
trajectories causes a larger relative degradation than applying
reasoning-trained dynamics to base trajectories:
\[
\Delta_{\to r}>\Delta_{\to b}.
\]
The mean degradation is $25.1\%$ in the former direction and $17.4\%$ in
the latter.

\begin{table*}[t]
\centering
\small
\setlength{\tabcolsep}{5pt}
\renewcommand{\arraystretch}{1.08}
\begin{tabular*}{\textwidth}{
@{\extracolsep{\fill}}llcccccc@{}
}
\toprule
Model & Dataset
& $R^2_{r\to r}$
& $R^2_{b\to r}$
& $R^2_{b\to b}$
& $R^2_{r\to b}$
& $\Delta_{\to r}$
& $\Delta_{\to b}$ \\
\midrule
Qwen2.5-Math-1.5B
& GSM8K
& $0.414$ & $0.301$
& $0.548$ & $0.456$
& $27.2\%$ & $16.7\%$ \\
& SVAMP
& $0.475$ & $0.372$
& $0.577$ & $0.495$
& $21.8\%$ & $14.1\%$ \\
\midrule
Llama-3.1-8B
& GSM8K
& $0.242$ & $0.182$
& $0.452$ & $0.355$
& $25.0\%$ & $21.4\%$ \\
& SVAMP
& $0.239$ & $0.188$
& $0.507$ & $0.438$
& $21.2\%$ & $13.6\%$ \\
\midrule
Qwen2.5-14B
& GSM8K
& $0.455$ & $0.333$
& $0.649$ & $0.491$
& $26.8\%$ & $24.4\%$ \\
& SVAMP
& $0.463$ & $0.331$
& $0.557$ & $0.479$
& $28.6\%$ & $14.0\%$ \\
\midrule
\multicolumn{6}{r}{\textbf{Mean relative degradation}}
& $\mathbf{25.1\%}$
& $\mathbf{17.4\%}$ \\
\bottomrule
\end{tabular*}
\caption{
Cross-model predictive transfer of SDS dynamics at representative middle
layers. Native evaluations apply an SDS to trajectories from the model on
which it was fitted; cross evaluations apply it to the paired checkpoint.
In every setting, base-trained dynamics degrade more when transferred to
reasoning trajectories than reasoning-trained dynamics do when transferred
to base trajectories.
}
\label{tab:combined-sds-transplant-all-models}
\end{table*}

This asymmetry suggests that reasoning-trained dynamics remain partially
compatible with base-model trajectories, whereas base-trained dynamics
do not capture part of the organization present in reasoning-model
trajectories. We interpret this as evidence that reasoning fine-tuning
reorganizes and differentiates a dynamical scaffold already present in
the base model, rather than creating all latent dynamics from scratch.

\subsection{Behavioral Policy Transplantation}
\label{app:behavioral_transplant}

We next evaluate whether transferring reasoning-model SDS dynamics can
improve the base model's behavior. Table~\ref{tab:transplant_full}
reports pass@8 on the hard subsets defined in
Appendix~\ref{sec:hard_examples}. The unsteered baseline is zero by
construction.

\begin{table}[t]
\centering
\small
\setlength{\tabcolsep}{5pt}
\renewcommand{\arraystretch}{1.08}
\begin{tabular}{llrcc}
\toprule
Model & Dataset
& $n_{\mathrm{hard}}$
& Base
& Steered \\
\midrule
\multirow{3}{*}{Qwen-1.5B}
& GSM8K
& 50 & $0.00$ & $\mathbf{0.60}$ \\
& MATH-500
& 49 & $0.00$ & $\mathbf{0.24}$ \\
& SVAMP
& 27 & $0.00$ & $\mathbf{0.22}$ \\
\midrule
\multirow{3}{*}{Llama-8B}
& GSM8K
& 30 & $0.00$ & $\mathbf{0.46}$ \\
& MATH-500
& 50 & $0.00$ & $\mathbf{0.33}$ \\
& MMLU-Pro
& 49 & $0.00$ & $\mathbf{0.28}$ \\
\bottomrule
\end{tabular}
\caption{
Pass@8 after transplanting reasoning-model SDS dynamics into the paired
base model. Evaluation uses the larger hard-example subsets from
Table~\ref{tab:hard_subset_sizes}, comprising problems with base-model
$\mathrm{pass@8}=0$ by construction.
}
\label{tab:transplant_full}
\end{table}
Policy transplantation improves pass@8 in every evaluated setting,
reaching $0.60$ for Qwen-1.5B on GSM8K and $0.46$ for Llama-8B on
GSM8K. These results show that steering the base model toward transitions
recovered from its reasoning-fine-tuned counterpart can causally alter
downstream reasoning without changing model weights.

We treat this intervention as causal validation rather than as a practical
deployment strategy: when the reasoning model is available, it would
normally be used directly. The SDS-guided prefix-pruning method in
Section~\ref{sec:prefixguard} instead operates directly on reasoning-model
trajectories and provides the practical application of the recovered
latent-state structure.

\section{CEBRA Trajectory Visualizations Gallery}
\label{app:cebra_traj_gallery}

To visualize the geometry of the recovered latent policies directly, we plot representative reasoning traces in the learned CEBRA embedding space together with their decoded regime sequences. Each panel shows a single trajectory overlaid on the full point cloud of embedded steps from the corresponding model, dataset, and layer, with the right-hand column displaying the aligned sequence of inferred regimes across reasoning steps. These visualizations in Figures~\ref{fig:traj_llama_gsm8k}, \ref{fig:traj_llama_math500}, \ref{fig:traj_qwen15_gsm8k}, \ref{fig:traj_qwen15_math500}, \ref{fig:traj_qwen14_gsm8k} and \ref{fig:traj_qwen14_math500} make the contrast in temporal organization concrete. Base-model trajectories frequently dwell in one dominant cluster for long stretches or alternate among nearby regions without a stable large-scale progression. Reasoning trajectories more often execute a structured path through separated clusters, with regime occupancy changing at a pace that is neither purely sticky nor rapidly flickering.

\begin{figure*}[p]
    \centering
    \includegraphics[width=0.92\textwidth]{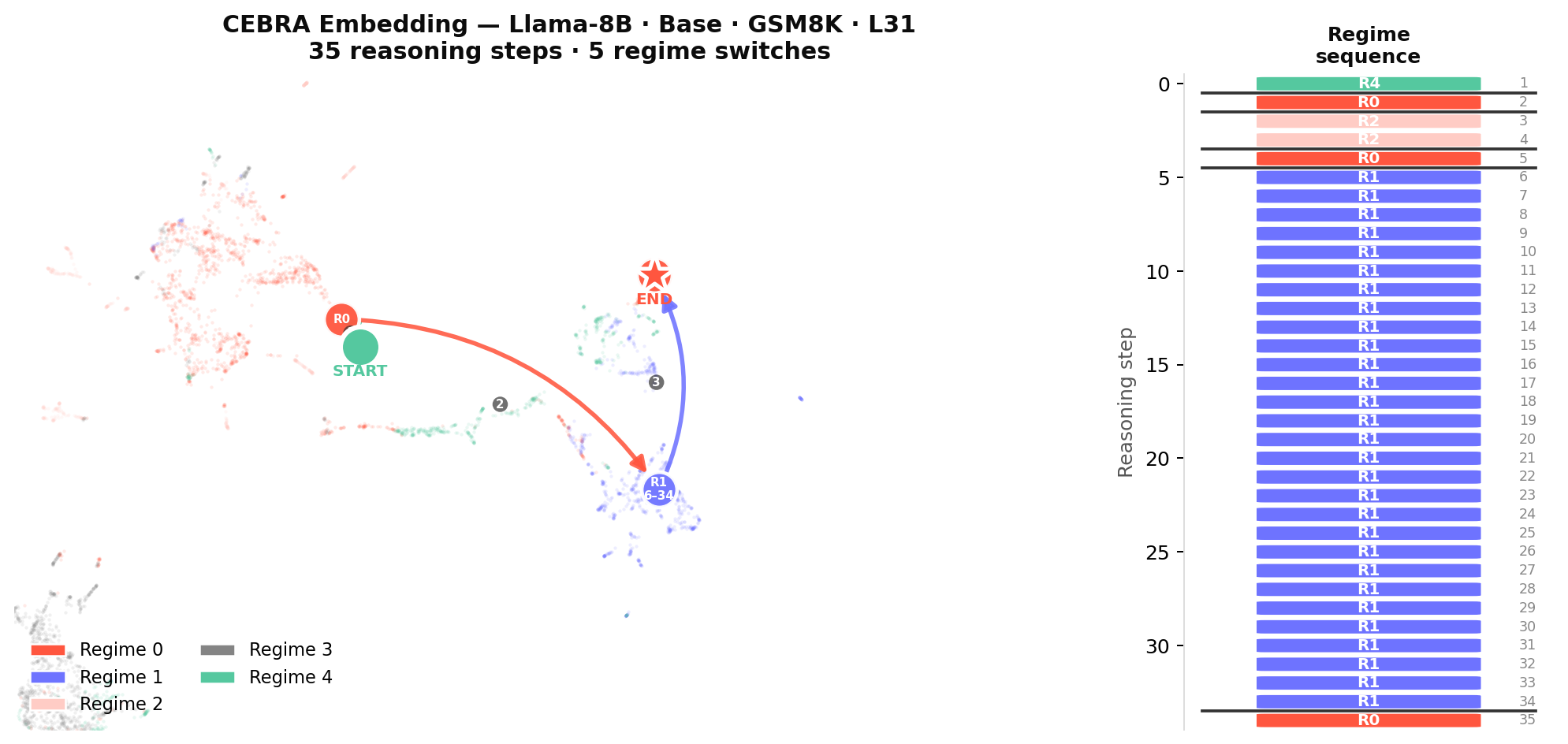}\\[0.6em]
    \includegraphics[width=0.92\textwidth]{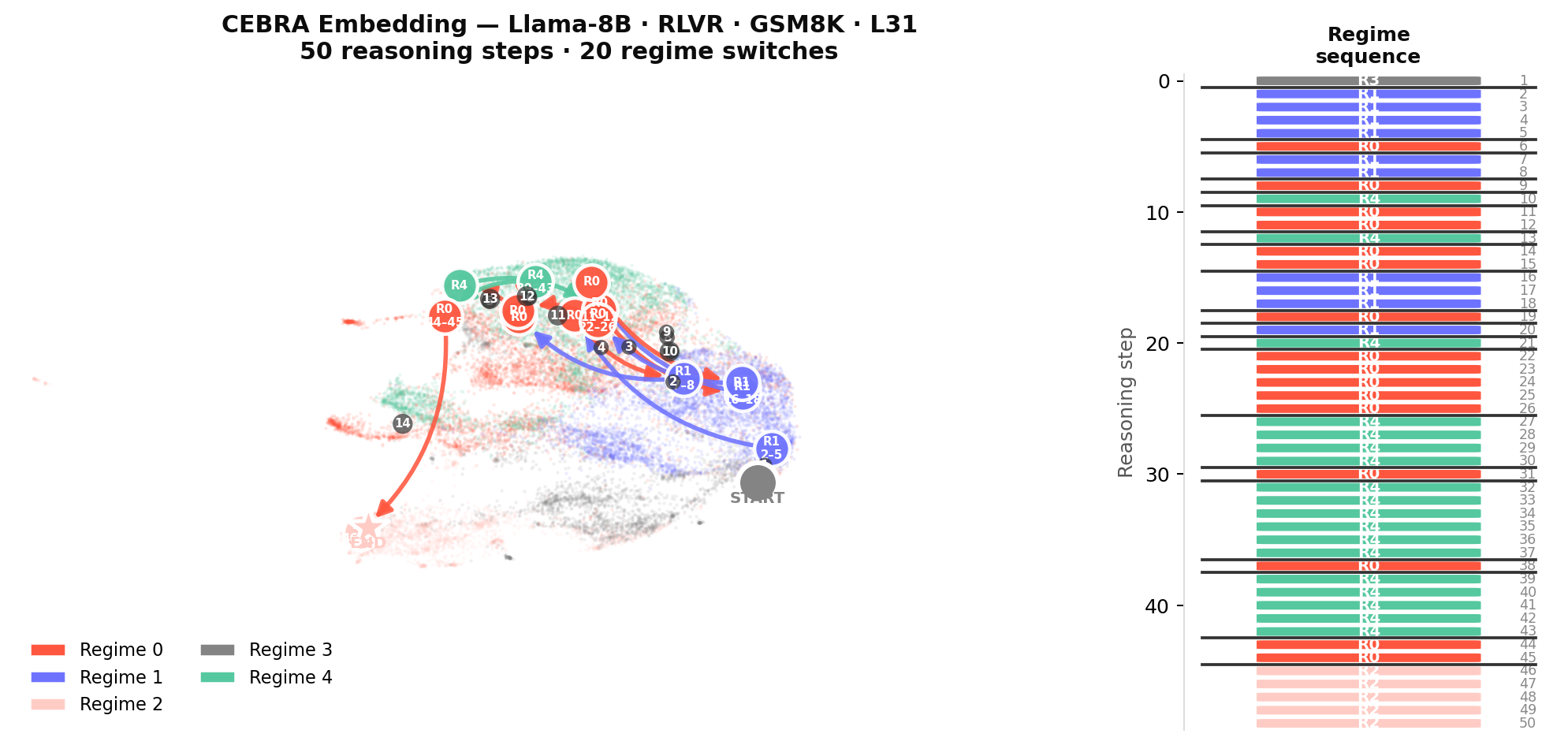}
    \caption{Representative CEBRA trajectories for Llama-8B on GSM8K. \emph{Top:} \texttt{Base}. \emph{Bottom:} \texttt{Reasoning}. The base trajectory quickly enters a prolonged regime-$R1$ plateau after a brief prefix, whereas the reasoning trajectory traverses multiple clusters and revisits distinct regimes before terminating in a separate end region.}
    \label{fig:traj_llama_gsm8k}
\end{figure*}

\begin{figure*}[p]
    \centering
    \includegraphics[width=\textwidth]{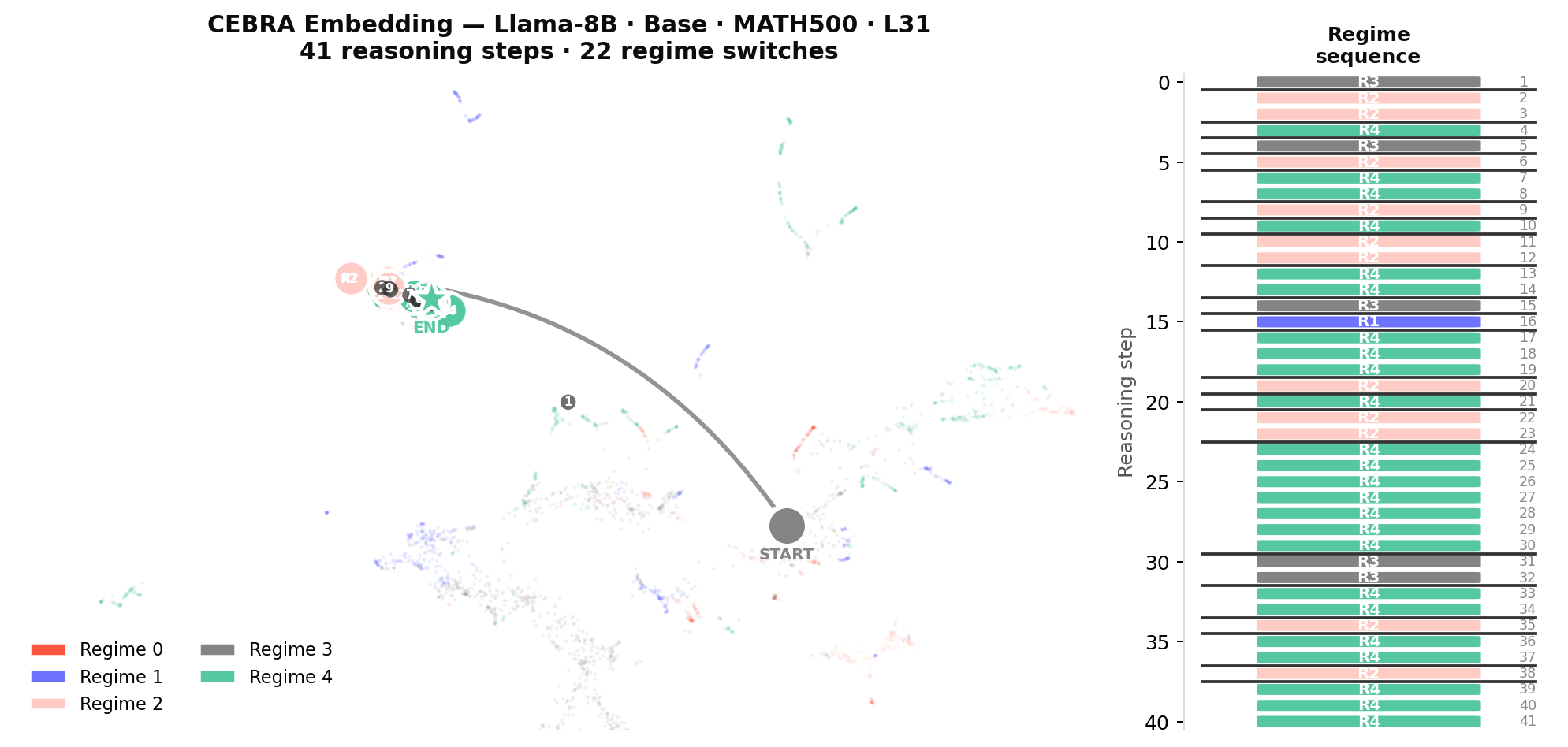}\\[0.6em]
    \includegraphics[width=\textwidth]{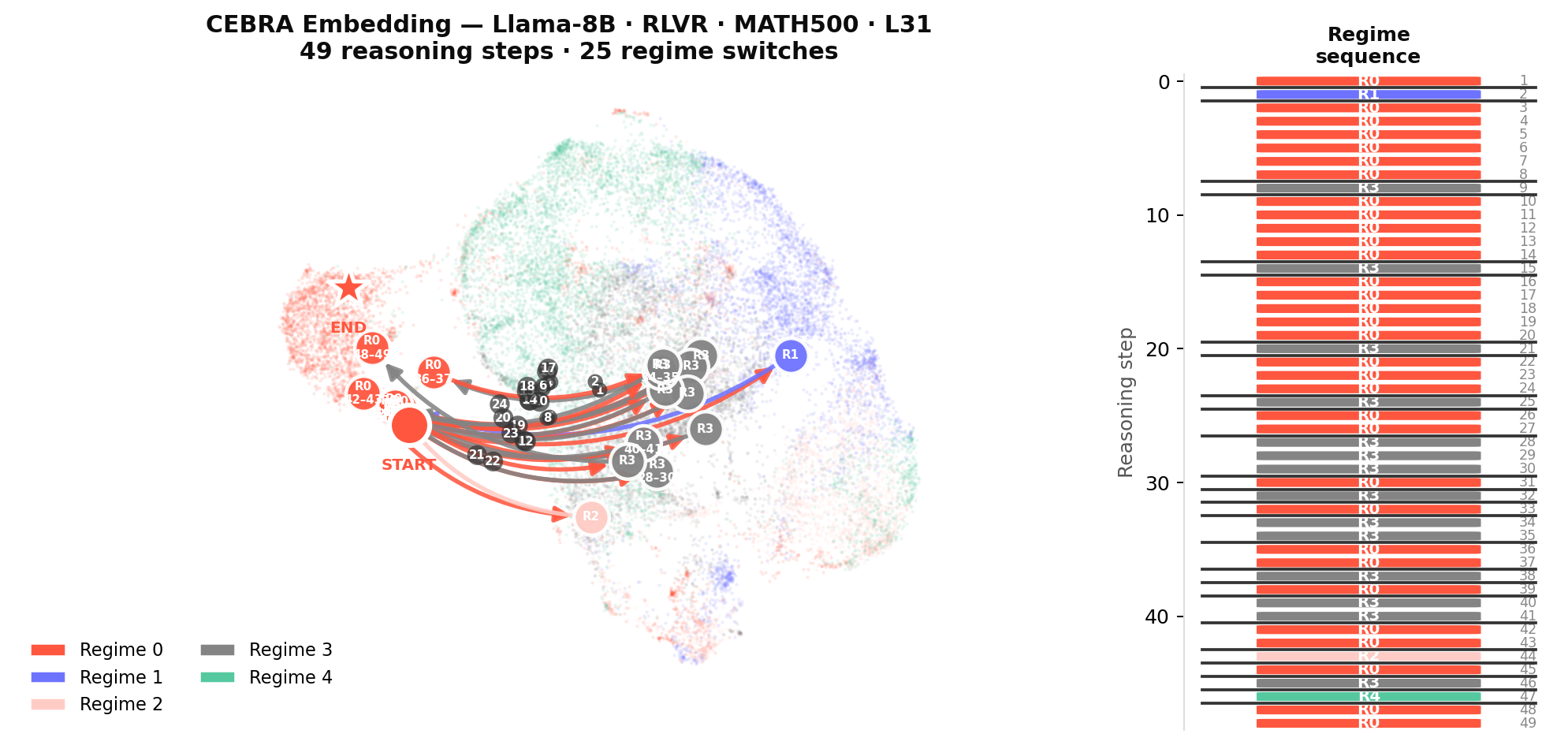}
    \caption{Representative CEBRA trajectories for Llama-8B on MATH500. \emph{Top:} \texttt{Base}. \emph{Bottom:} \texttt{Reasoning}. The base run shows rapid local switching near the terminal region without a clear global traversal, while the reasoning run follows a longer multi-cluster path with a sustained middle segment and a distinct terminal computation basin.}
    \label{fig:traj_llama_math500}
\end{figure*}

\begin{figure*}[p]
    \centering
    \includegraphics[width=0.92\textwidth]{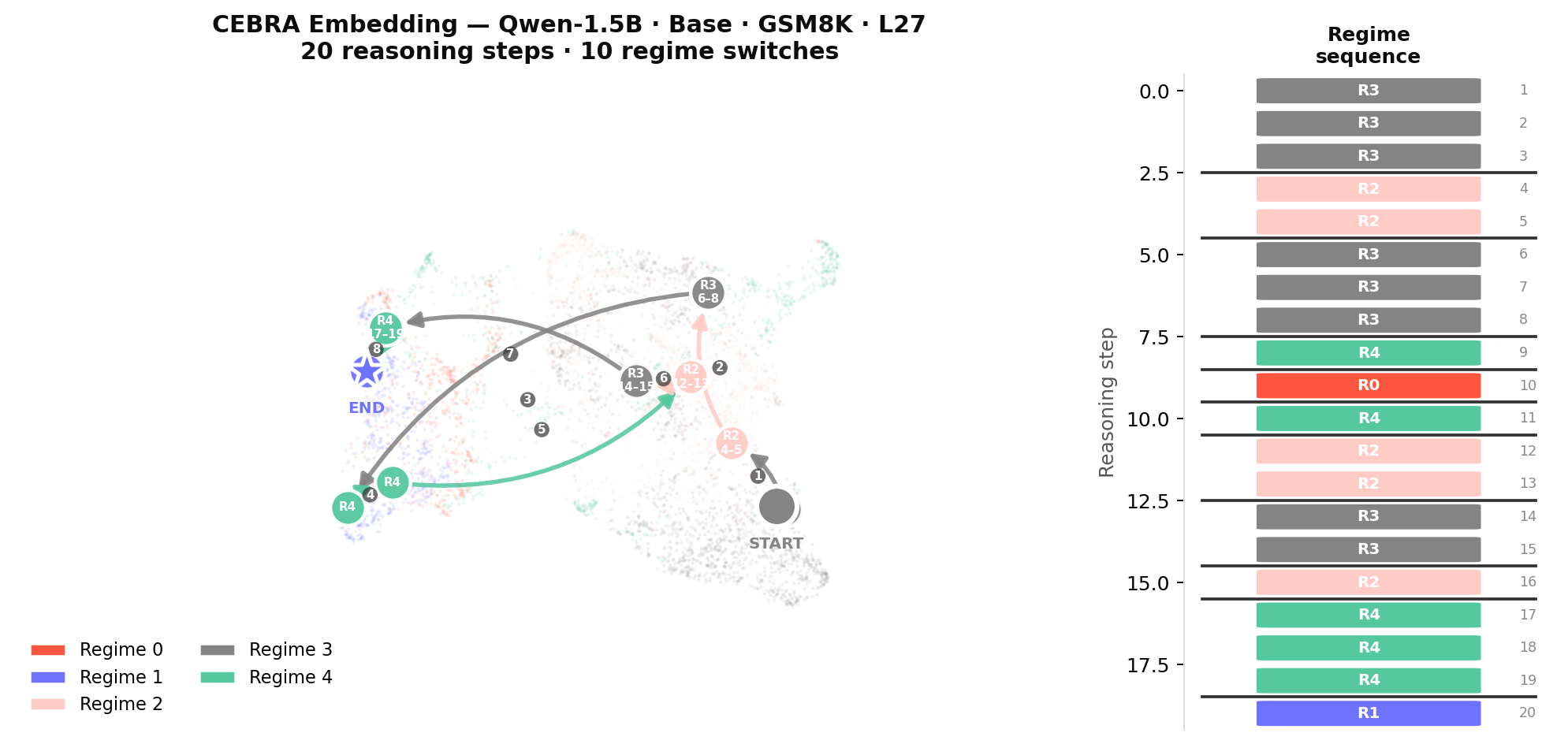}\\[0.6em]
    \includegraphics[width=0.92\textwidth]{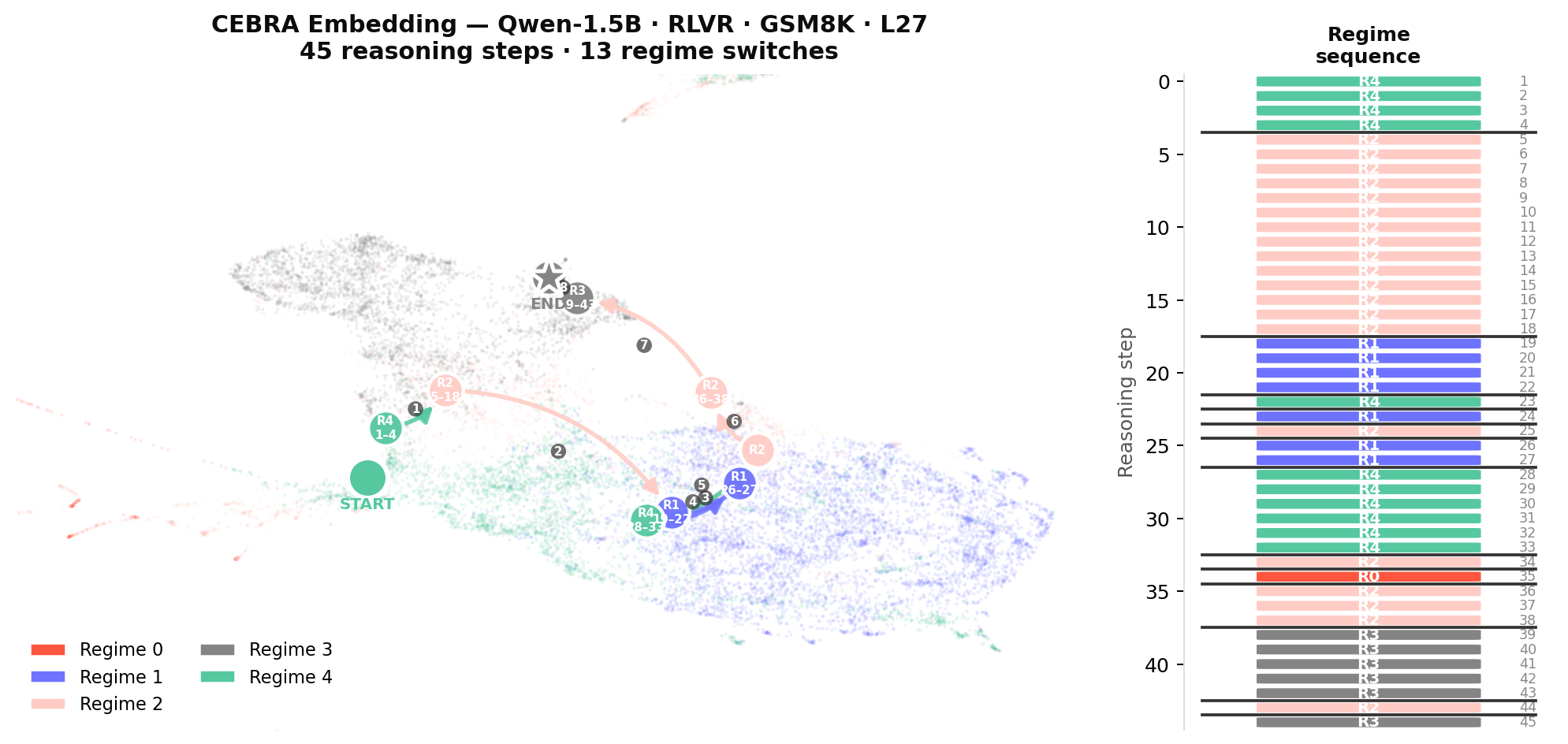}
    \caption{Representative CEBRA trajectories for Qwen-1.5B on GSM8K. \emph{Top:} \texttt{Base}. \emph{Bottom:} \texttt{Reasoning}. In the base model, the trajectory alternates among a small set of nearby clusters and ends with an isolated terminal jump. In the reasoning model, regime segments are longer and the path organizes into a clearer progression from an initial green cluster through blue and pink segments before reaching the terminal state.}
    \label{fig:traj_qwen15_gsm8k}
\end{figure*}

\begin{figure*}[p]
    \centering
    \includegraphics[width=0.92\textwidth]{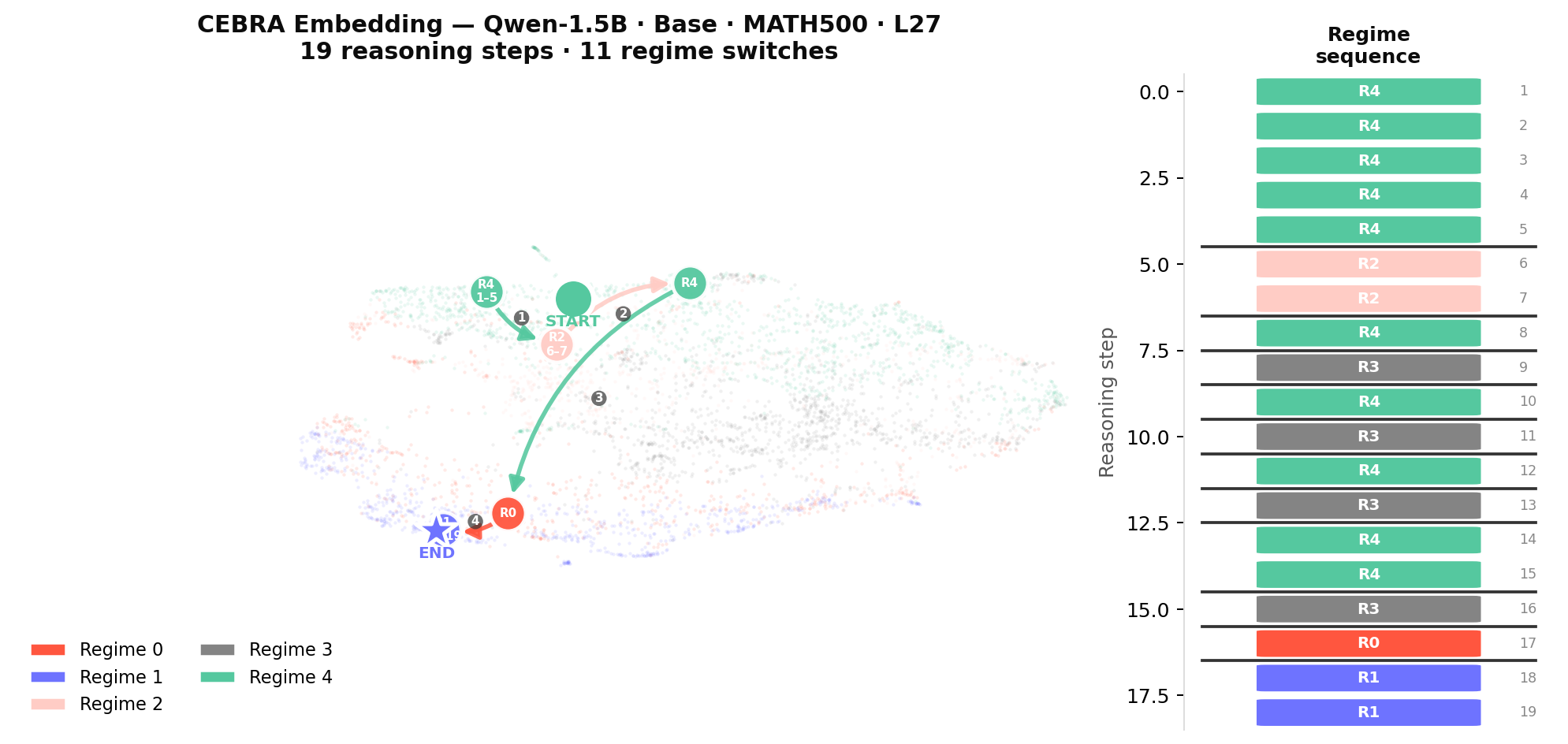}\\[0.6em]
    \includegraphics[width=0.92\textwidth]{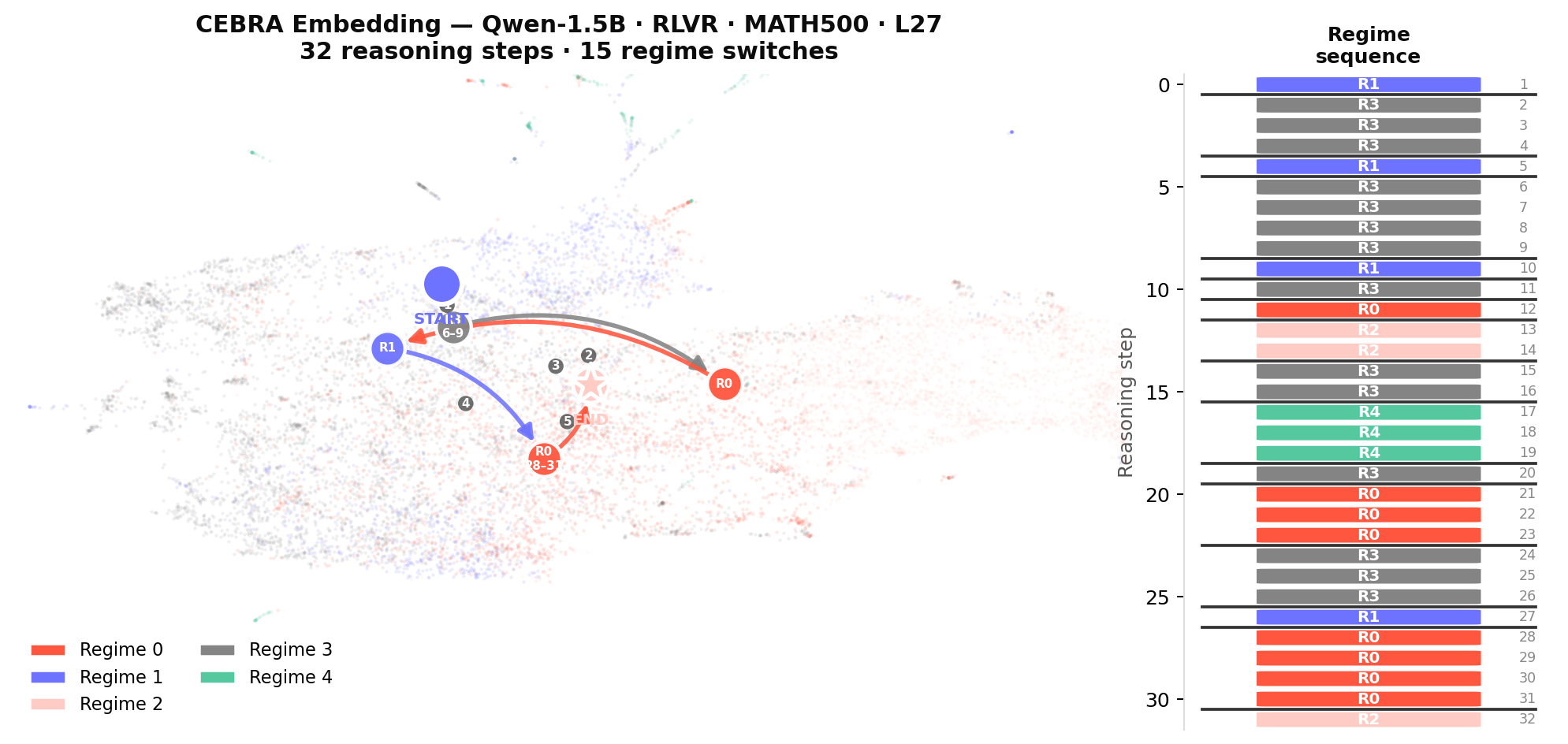}
    \caption{Representative CEBRA trajectories for Qwen-1.5B on MATH500. \emph{Top:} \texttt{Base}. \emph{Bottom:} \texttt{Reasoning}. The base trajectory remains concentrated in a narrow band with frequent short switches, whereas the reasoning trajectory separates into longer blocks centered on a small number of regimes and transitions through a more interpretable start-to-end route.}
    \label{fig:traj_qwen15_math500}
\end{figure*}

\begin{figure*}[p]
    \centering
    \includegraphics[width=0.92\textwidth]{plots2/traj_base_qwen14b_gsm8k_L47.png}\\[0.6em]
    \includegraphics[width=0.92\textwidth]{plots2/traj_rlvr_qwen14b_gsm8k_L47.png}
    \caption{Representative CEBRA trajectories for Qwen-14B on GSM8K. \emph{Top:} \texttt{Base}. \emph{Bottom:} \texttt{Reasoning}. The base trajectory oscillates among nearby $R2$, $R3$, and $R1$ regions with limited geometric separation, while the reasoning trajectory begins in a distinct green cluster and proceeds through blue and red regimes toward a compact terminal region.}
    \label{fig:traj_qwen14_gsm8k}
\end{figure*}

\begin{figure*}[p]
    \centering
    \includegraphics[width=0.92\textwidth]{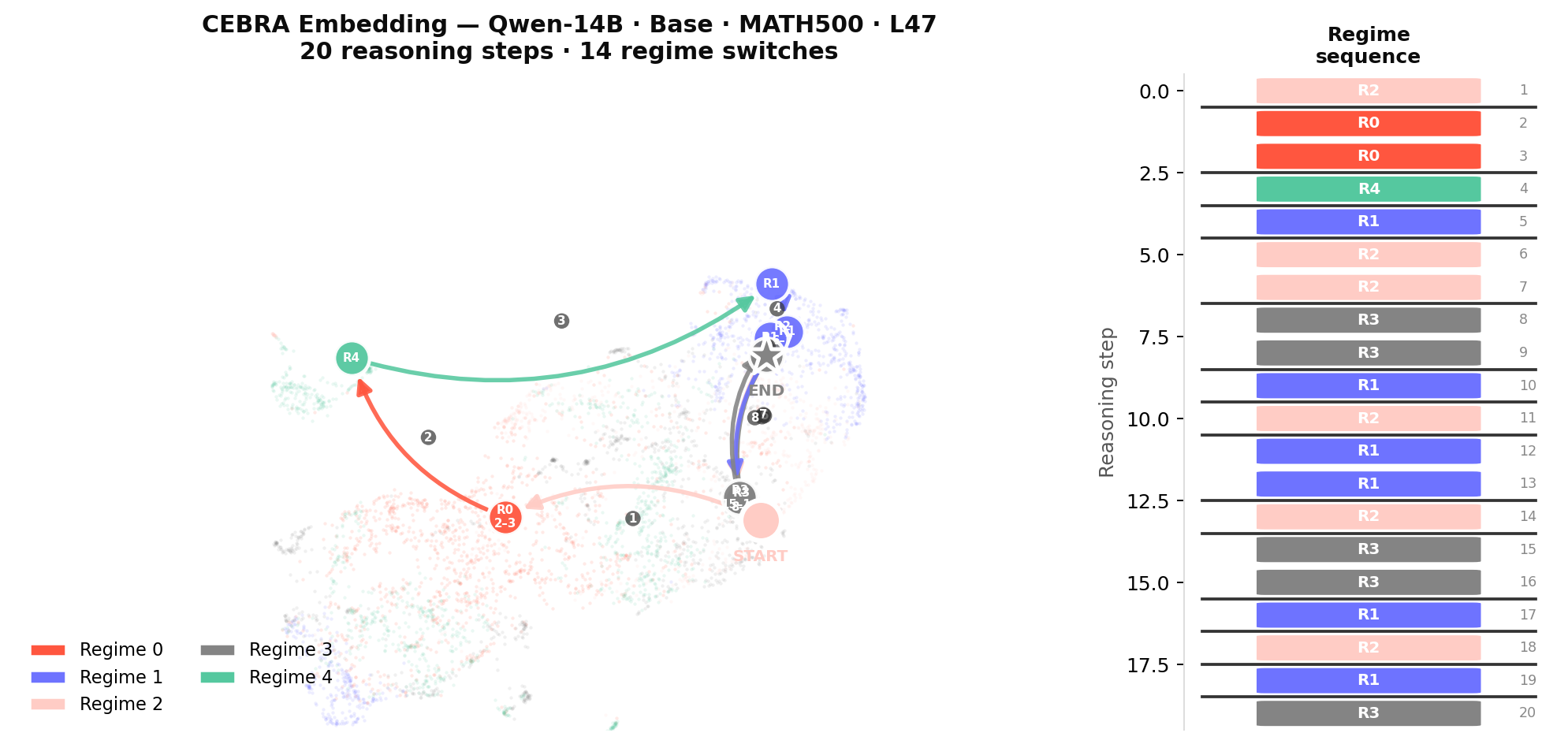}\\[0.6em]
    \includegraphics[width=0.92\textwidth]{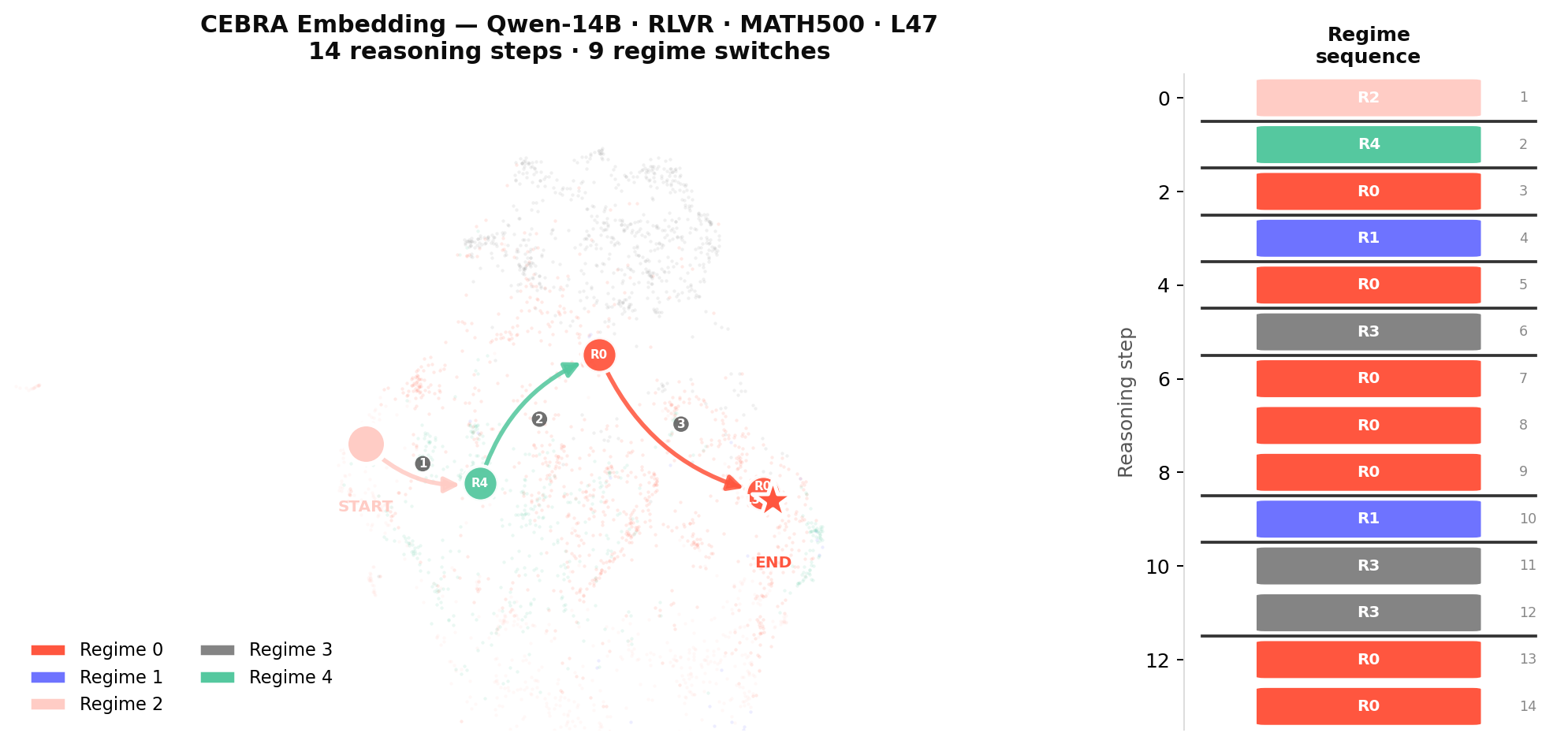}
    \caption{Representative CEBRA trajectories for Qwen-14B on MATH500. \emph{Top:} \texttt{Base}. \emph{Bottom:} \texttt{Reasoning}. The base model produces a diffuse trajectory with frequent short-lived switches near the terminal region, whereas the reasoning model concentrates its trajectory into a smaller number of spatially separated clusters and a more ordered regime sequence.}
    \label{fig:traj_qwen14_math500}
\end{figure*}
\definecolor{stdgray}{RGB}{120,120,120}

\begin{table*}[t]
\centering
\small
\setlength{\tabcolsep}{3.8pt}
\begin{tabular}{llcccc}
\toprule
Model & Dataset & $K_{\mathrm{eff}}\uparrow$ & $p_{\mathrm{stay}}\uparrow$ & TVD$\uparrow$ & Spec. Gap$\downarrow$ \\
\midrule

\multirow{4}{*}{Llama-8B Base}

& GSM8K    
& $4.50 {\color{stdgray}\scriptstyle \pm 0.75}$ 
& $0.69 {\color{stdgray}\scriptstyle \pm 0.04}$ 
& $0.48 {\color{stdgray}\scriptstyle \pm 0.05}$ 
& $0.20 {\color{stdgray}\scriptstyle \pm 0.06}$ \\

& SVAMP    
& $5.70 {\color{stdgray}\scriptstyle \pm 0.96}$ 
& $0.62 {\color{stdgray}\scriptstyle \pm 0.07}$ 
& $0.47 {\color{stdgray}\scriptstyle \pm 0.03}$ 
& $0.21 {\color{stdgray}\scriptstyle \pm 0.05}$ \\

& MATH500  
& $3.10 {\color{stdgray}\scriptstyle \pm 0.51}$ 
& $0.77 {\color{stdgray}\scriptstyle \pm 0.03}$ 
& $0.44 {\color{stdgray}\scriptstyle \pm 0.06}$ 
& $0.23 {\color{stdgray}\scriptstyle \pm 0.07}$ \\

& MMLU-Pro 
& $5.90 {\color{stdgray}\scriptstyle \pm 0.83}$ 
& $0.67 {\color{stdgray}\scriptstyle \pm 0.04}$ 
& $0.53 {\color{stdgray}\scriptstyle \pm 0.04}$ 
& $0.14 {\color{stdgray}\scriptstyle \pm 0.05}$ \\

\midrule

\multirow{4}{*}{Llama-8B Reasoning}

& GSM8K    
& $\mathbf{4.50 {\color{stdgray}\scriptstyle \pm 0.49}}$ 
& $\mathbf{0.80 {\color{stdgray}\scriptstyle \pm 0.03}}$ 
& $\mathbf{0.57 {\color{stdgray}\scriptstyle \pm 0.02}}$ 
& $\mathbf{0.12 {\color{stdgray}\scriptstyle \pm 0.03}}$ \\

& SVAMP    
& $\mathbf{4.40 {\color{stdgray}\scriptstyle \pm 0.49}}$ 
& $\mathbf{0.83 {\color{stdgray}\scriptstyle \pm 0.03}}$ 
& $\mathbf{0.60 {\color{stdgray}\scriptstyle \pm 0.02}}$ 
& $\mathbf{0.08 {\color{stdgray}\scriptstyle \pm 0.02}}$ \\

& MATH500  
& $\mathbf{4.90 {\color{stdgray}\scriptstyle \pm 0.73}}$ 
& $\mathbf{0.81 {\color{stdgray}\scriptstyle \pm 0.03}}$ 
& $\mathbf{0.60 {\color{stdgray}\scriptstyle \pm 0.03}}$ 
& $\mathbf{0.14 {\color{stdgray}\scriptstyle \pm 0.04}}$ \\

& MMLU-Pro 
& $\mathbf{4.40 {\color{stdgray}\scriptstyle \pm 0.49}}$ 
& $\mathbf{0.85 {\color{stdgray}\scriptstyle \pm 0.03}}$ 
& $\mathbf{0.62 {\color{stdgray}\scriptstyle \pm 0.02}}$ 
& $\mathbf{0.03 {\color{stdgray}\scriptstyle \pm 0.00}}$ \\

\midrule
\midrule

\multirow{4}{*}{Qwen-1.5B Base}

& GSM8K    
& $3.30 {\color{stdgray}\scriptstyle \pm 0.60}$ 
& $0.68 {\color{stdgray}\scriptstyle \pm 0.08}$ 
& $0.43 {\color{stdgray}\scriptstyle \pm 0.05}$ 
& $0.24 {\color{stdgray}\scriptstyle \pm 0.06}$ \\

& SVAMP    
& $2.90 {\color{stdgray}\scriptstyle \pm 0.20}$ 
& $0.73 {\color{stdgray}\scriptstyle \pm 0.03}$ 
& $0.39 {\color{stdgray}\scriptstyle \pm 0.02}$ 
& $0.27 {\color{stdgray}\scriptstyle \pm 0.04}$ \\

& MATH500  
& $2.00 {\color{stdgray}\scriptstyle \pm 0.00}$ 
& $0.88 {\color{stdgray}\scriptstyle \pm 0.01}$ 
& $0.38 {\color{stdgray}\scriptstyle \pm 0.01}$ 
& $0.23 {\color{stdgray}\scriptstyle \pm 0.01}$ \\

& MMLU-Pro 
& $1.60 {\color{stdgray}\scriptstyle \pm 0.40}$ 
& $0.94 {\color{stdgray}\scriptstyle \pm 0.10}$ 
& $0.24 {\color{stdgray}\scriptstyle \pm 0.19}$ 
& $0.12 {\color{stdgray}\scriptstyle \pm 0.19}$ \\

\midrule

\multirow{4}{*}{Qwen-1.5B Reasoning}

& GSM8K    
& $\mathbf{5.10 {\color{stdgray}\scriptstyle \pm 0.62}}$ 
& $\mathbf{0.81 {\color{stdgray}\scriptstyle \pm 0.05}}$ 
& $\mathbf{0.62 {\color{stdgray}\scriptstyle \pm 0.03}}$ 
& $\mathbf{0.07 {\color{stdgray}\scriptstyle \pm 0.01}}$ \\

& SVAMP    
& $\mathbf{4.20 {\color{stdgray}\scriptstyle \pm 0.40}}$ 
& $\mathbf{0.86 {\color{stdgray}\scriptstyle \pm 0.01}}$ 
& $\mathbf{0.62 {\color{stdgray}\scriptstyle \pm 0.02}}$ 
& $\mathbf{0.06 {\color{stdgray}\scriptstyle \pm 0.01}}$ \\

& MATH500  
& $\mathbf{5.00 {\color{stdgray}\scriptstyle \pm 0.57}}$ 
& $\mathbf{0.78 {\color{stdgray}\scriptstyle \pm 0.05}}$ 
& $\mathbf{0.59 {\color{stdgray}\scriptstyle \pm 0.03}}$ 
& $\mathbf{0.09 {\color{stdgray}\scriptstyle \pm 0.02}}$ \\

& MMLU-Pro 
& $\mathbf{7.40 {\color{stdgray}\scriptstyle \pm 1.11}}$ 
& $\mathbf{0.78 {\color{stdgray}\scriptstyle \pm 0.04}}$ 
& $\mathbf{0.66 {\color{stdgray}\scriptstyle \pm 0.02}}$ 
& $\mathbf{0.05 {\color{stdgray}\scriptstyle \pm 0.01}}$ \\

\midrule
\midrule

\multirow{4}{*}{Qwen-14B Base}

& GSM8K    
& $3.10 {\color{stdgray}\scriptstyle \pm 0.76}$ 
& $0.85 {\color{stdgray}\scriptstyle \pm 0.08}$ 
& $0.50 {\color{stdgray}\scriptstyle \pm 0.03}$ 
& $0.05 {\color{stdgray}\scriptstyle \pm 0.04}$ \\

& SVAMP    
& $2.30 {\color{stdgray}\scriptstyle \pm 0.45}$ 
& $0.79 {\color{stdgray}\scriptstyle \pm 0.08}$ 
& $0.34 {\color{stdgray}\scriptstyle \pm 0.07}$ 
& $0.34 {\color{stdgray}\scriptstyle \pm 0.11}$ \\

& MATH500  
& $3.10 {\color{stdgray}\scriptstyle \pm 0.20}$ 
& $0.75 {\color{stdgray}\scriptstyle \pm 0.05}$ 
& $0.43 {\color{stdgray}\scriptstyle \pm 0.05}$ 
& $0.23 {\color{stdgray}\scriptstyle \pm 0.05}$ \\

& MMLU-Pro 
& $4.75 {\color{stdgray}\scriptstyle \pm 1.01}$ 
& $0.80 {\color{stdgray}\scriptstyle \pm 0.06}$ 
& $0.57 {\color{stdgray}\scriptstyle \pm 0.05}$ 
& $0.05 {\color{stdgray}\scriptstyle \pm 0.04}$ \\

\midrule

\multirow{4}{*}{Qwen-14B Reasoning}

& GSM8K    
& $\mathbf{6.50 {\color{stdgray}\scriptstyle \pm 1.02}}$ 
& $\mathbf{0.78 {\color{stdgray}\scriptstyle \pm 0.04}}$ 
& $\mathbf{0.64 {\color{stdgray}\scriptstyle \pm 0.03}}$ 
& $\mathbf{0.06 {\color{stdgray}\scriptstyle \pm 0.02}}$ \\

& SVAMP    
& $\mathbf{6.90 {\color{stdgray}\scriptstyle \pm 0.65}}$ 
& $\mathbf{0.77 {\color{stdgray}\scriptstyle \pm 0.05}}$ 
& $\mathbf{0.65 {\color{stdgray}\scriptstyle \pm 0.02}}$ 
& $\mathbf{0.07 {\color{stdgray}\scriptstyle \pm 0.02}}$ \\

& MATH500  
& $\mathbf{3.00 {\color{stdgray}\scriptstyle \pm 0.00}}$ 
& $\mathbf{0.78 {\color{stdgray}\scriptstyle \pm 0.05}}$ 
& $\mathbf{0.45 {\color{stdgray}\scriptstyle \pm 0.05}}$ 
& $\mathbf{0.23 {\color{stdgray}\scriptstyle \pm 0.06}}$ \\

& MMLU-Pro 
& $\mathbf{6.30 {\color{stdgray}\scriptstyle \pm 0.98}}$ 
& $\mathbf{0.83 {\color{stdgray}\scriptstyle \pm 0.03}}$ 
& $\mathbf{0.67 {\color{stdgray}\scriptstyle \pm 0.01}}$ 
& $\mathbf{0.06 {\color{stdgray}\scriptstyle \pm 0.01}}$ \\

\bottomrule
\end{tabular}

\caption{
Dynamics statistics averaged across analyzed layers and 5 random seeds.
Reasoning-tuned models consistently exhibit richer latent policy structure,
characterized by higher effective state usage ($K_{\mathrm{eff}}$),
larger transition asymmetry (TVD),
and slower-mixing dynamics (lower spectral gap).
}

\label{tab:dataset_dynamics}
\end{table*}

\begin{table*}[t]
\centering
\small
\setlength{\tabcolsep}{3.8pt}
\begin{tabular}{llcccc}
\toprule
Model & Dataset & $K_{\mathrm{eff}}\uparrow$ & $p_{\mathrm{stay}}\uparrow$ & TVD$\uparrow$ & Spec. Gap$\downarrow$ \\
\midrule
\multirow{4}{*}{QwQ-32B May-base} & GSM8K & $5.62 {\color{stdgray}\scriptstyle \pm 0.86}$ & $0.71 {\color{stdgray}\scriptstyle \pm 0.08}$ & $0.54 {\color{stdgray}\scriptstyle \pm 0.10}$ & $0.14 {\color{stdgray}\scriptstyle \pm 0.02}$ \\
 & MATH500 & $3.50 {\color{stdgray}\scriptstyle \pm 0.50}$ & $0.78 {\color{stdgray}\scriptstyle \pm 0.09}$ & $0.50 {\color{stdgray}\scriptstyle \pm 0.05}$ & $0.17 {\color{stdgray}\scriptstyle \pm 0.05}$ \\
 & MMLU-Pro & $6.25 {\color{stdgray}\scriptstyle \pm 0.83}$ & $0.74 {\color{stdgray}\scriptstyle \pm 0.05}$ & $0.60 {\color{stdgray}\scriptstyle \pm 0.05}$ & $0.10 {\color{stdgray}\scriptstyle \pm 0.07}$ \\
 & SVAMP & $5.62 {\color{stdgray}\scriptstyle \pm 0.48}$ & $0.72 {\color{stdgray}\scriptstyle \pm 0.08}$ & $0.55 {\color{stdgray}\scriptstyle \pm 0.09}$ & $0.14 {\color{stdgray}\scriptstyle \pm 0.06}$ \\
\midrule
\multirow{4}{*}{QwQ-32B May-think} & GSM8K & $6.00 {\color{stdgray}\scriptstyle \pm 1.00}$ & $0.75 {\color{stdgray}\scriptstyle \pm 0.04}$ & $0.59 {\color{stdgray}\scriptstyle \pm 0.05}$ & $0.10 {\color{stdgray}\scriptstyle \pm 0.02}$ \\
 & MATH500 & $4.25 {\color{stdgray}\scriptstyle \pm 0.66}$ & $0.80 {\color{stdgray}\scriptstyle \pm 0.02}$ & $0.56 {\color{stdgray}\scriptstyle \pm 0.04}$ & $0.14 {\color{stdgray}\scriptstyle \pm 0.04}$ \\
 & MMLU-Pro & $6.38 {\color{stdgray}\scriptstyle \pm 0.48}$ & $0.80 {\color{stdgray}\scriptstyle \pm 0.05}$ & $0.65 {\color{stdgray}\scriptstyle \pm 0.04}$ & $0.09 {\color{stdgray}\scriptstyle \pm 0.05}$ \\
 & SVAMP & $5.75 {\color{stdgray}\scriptstyle \pm 0.83}$ & $0.79 {\color{stdgray}\scriptstyle \pm 0.04}$ & $0.62 {\color{stdgray}\scriptstyle \pm 0.04}$ & $0.09 {\color{stdgray}\scriptstyle \pm 0.02}$ \\
\bottomrule
\end{tabular}
\caption{QwQ-32B SDS metrics using BIC-selected $K$ for each seed/layer run. Values are mean $\pm$ std across available seed-layer runs.}
\label{tab:qwq32_sds_bic_complete}
\end{table*}




\end{document}